\newtheorem{theorem}{Theorem}
\newtheorem{lemma}[theorem]{Lemma}
\DeclarePairedDelimiter\ceil{\lceil}{\rceil}
\DeclarePairedDelimiter\floor{\lfloor}{\rfloor}
  \providecommand\BibTeX{{%
    \normalfont B\kern-0.5em{\scshape i\kern-0.25em b}\kern-0.8em\TeX}}}
\newcommand{\ra}{\rightarrow}
\newcommand{\Exp}[1]{\mathbb{E}\left[#1\right]}
\newcommand{\imp}{\Rightarrow}
\DeclareMathOperator*{\argmax}{arg\,max}
\DeclareMathOperator*{\argmin}{arg\,min}
\newcommand{\ignore}[1]{}
\newcommand{\hide}[1]{}
\newcommand{\remove}[1]{}
\newcommand{\nk}[1]{  \ifthenelse{\boolean{showcomments}}
{ \textcolor{red}{(NK says:  #1)}} {}  }
\newcommand{\jk}[1]{  \ifthenelse{\boolean{showcomments}}
{ \textcolor{red}{(JK says:  #1)}} {}  }
\newcommand{\sj}[1]{  \ifthenelse{\boolean{showcomments}}
{ \textcolor{red}{(SJ says:  #1)}} {}  }
\newcommand{\sg}[1]{  \ifthenelse{\boolean{showcomments}}
{ \textcolor{red}{(SG says:  #1)}} {}  }
\newcommand{\newtext}[1]{#1}
\journal{Performance Evaluation}
\begin{document}

\begin{frontmatter}



\title{Sequential community mode estimation}


\author[inst1]{Shubham Anand Jain}

\affiliation[inst1]{organization={Department of Electrical Engineering},
            addressline={IIT Bombay}, 
            country={India}
            }

\author[inst1]{Shreyas Goenka}
\author[inst1]{Divyam Bapna}
\author[inst1]{Nikhil Karamchandani}
\author[inst1]{Jayakrishnan Nair}



\begin{abstract}
We consider a population, partitioned into a set of communities, and study the problem of identifying the largest community within the population via sequential, random sampling of individuals. There are multiple sampling domains, referred to as \emph{boxes}, which also partition the population. Each box may consist of individuals of different communities, and each community may in turn be spread across multiple boxes. 
The learning agent can, at any time, sample (with replacement) a random individual from any chosen box; when this is done, the agent learns the community the sampled individual belongs to, and also whether or not this individual has been sampled before. The goal of the agent is to minimize the probability of mis-identifying the largest community in a \emph{fixed budget} setting, by optimizing both the sampling strategy as well as the decision rule. We propose and analyse novel algorithms for this problem, and also establish information theoretic lower bounds on the probability of error under any algorithm. In several cases of interest, the exponential decay rates of the probability of error under our algorithms are shown to be optimal up to constant factors. The proposed algorithms are further validated via simulations on real-world datasets.
\end{abstract}



\begin{keyword}
mode estimation \sep limited precision sampling \sep sequential algorithms \sep fixed budget \sep multi-armed bandits
\PACS 0000 \sep 1111
\MSC 0000 \sep 1111
\end{keyword}

\end{frontmatter}


\section{Introduction}

Several applications in online learning involve sequential sampling/polling of an underlying population. A classical learning task in this space is \emph{online cardinality estimation}, where the goal is to estimate the size of a set by sequential sampling of elements from the set (see, for example, \cite{finkelstein1998, budianu2006sensors, Bres15}). The key idea here is to use `collisions,' i.e., instances where the same element is sampled more than once, to estimate the size of the set. Another recent application is \emph{community exploration}, where the goal of the learning agent is to sample as many distinct elements as possible, given a family of sampling distributions/domains to poll from (see \cite{chen2018community,Bubeck2013}).

In this paper, we focus on the related problem of \emph{community mode estimation}. Here, the goal of the learning agent is to estimate the largest community within a population of individuals, where each individual belongs to a unique community. The agent has access to a set of sampling domains, referred to as \emph{boxes} in this paper, which also partition the population. The agent can, at any sampling epoch, choose which box to sample from. Having chosen one such box to sample from, a random individual from this box gets revealed to the agent, along with the community that individual belongs to. After a fixed budget of samples is exhausted, the learning agent reveals its estimate of the largest community (a.k.a., the community mode) in the population. The goal of the agent is in turn to minimize the probability of mis-identifying the community mode, by optimizing (i) the policy for sequential sampling of boxes, and (ii) the decision rule that determines the agent's response as a function of all observations.

One application that motivates this formulation is election polling. In this context, communities might correspond to the party/candidate an individual votes for, while boxes might correspond, for instance, to different cities/states that individuals reside in. In this case, community mode identification corresponds to predicting the winning party/candidate. A related (and contemporary) application is the detection of the dominant strain of a virus/pathogen within a population of infected individuals. Here, communities would correspond to different strains, and boxes would correspond to different regions/jurisdictions.

Another application of a different flavour is as follows. Consider a setting where an agent interacts with a database which has  several entries, each with an associated label, and the agent is interested in identifying the most represented label in the database. For concreteness, consider a user who polls a
movie recommendation engine which hosts a large catalogue of movies, each belonging to a particular genre, to discover the most prevalent genre in the catalogue.\footnote{Other relevant objectives, such as discovering the most popular genre in terms of ratings or the genre most `rewarding' for the user, can be incorporated with some modifications to the framework studied here.} In each round, the user might provide a genre (\textit{community}) to the
recommendation engine which then suggests a movie (\textit{individual}) from that genre
(perhaps based on other user ratings). Depending on the recommendations seen thus far, the user selects the next genre to
poll and so on. Now, either due to privacy considerations or simply the lack of knowledge of all the available genres, it might not be feasible for the user to share the exact genre he/she wants to view in each
round and might only provide coarser directions (\textit{box}). For example, while
there might be specific genres available such as dark comedy, romantic
comedy, slapstick comedy etc., the user might only indicate its choice
as `comedy' and then let the recommendation engine suggest some movie belonging to any of the sub-genres in the broad genre. At one extreme, the user might prefer complete privacy and not suggest any genre in each round, in which case the recommendation engine will have to choose a movie over the entire database. This resembles the \textit{mixed community setting} studied in this paper. The opposite end of the spectrum is where the user does not care about privacy and instead specifies a sub-genre in each round from which the recommendation engine can then suggest a movie. This corresponds to the \textit{separated community setting}. We refer to the intermediate scenario where the user provides coarse directives as the \textit{community-disjoint box setting}. 

The formulation we consider here has some parallels with the classical multi-armed bandit (MAB) problem \cite{lattimore2020bandit}; specifically, the fixed budget best arm identification formulation \cite{Aud10}. Indeed, one may interpret communities in our formulation as arms in an MAB problem. However, there are two crucial differences between the two formulations. The first difference lies in the stochastic behavior of the reward/observation sequence. In the classical MAB problem, each pull of an arm yields an i.i.d. reward drawn from an arm specific reward distribution. However, in the community mode detection problem, the sequence of collisions (or equivalently, the evolution of the number of distinct individuals seen) does not admit an i.i.d. description. (Indeed, whether or not a certain sample from a box results in a collision depends in a non-stationary manner on the history of observations from that box.) The second difference between the two formulations lies in the extent of sampling control on part of the agent. In the MAB setting, the agent can pull any arm it chooses at any sampling epoch. However, in our formulation, the agent cannot sample directly from a community of its choice; it must instead choose a box to sample from, limiting its ability to target specific communities to explore.

In terms of the extent of sampling control that the agent has, the opposite end of the spectrum to the MAB setting is when samples are simply generated by an underlying distribution and the agent can only use these observations to estimate some property of the underlying distribution. This classical problem of property estimation from samples generated from an underlying distribution has a long and rich history. There has been a lot of work recently on characterizing the optimal sample  complexity for estimating various properties of probability distributions including entropy \cite{caferov2015optimal, acharya2016estimating}, support size and coverage \cite{hao2019data, wu2018sample}, and `Lipschitz' properties \cite{hao2019unified} amongst others. Closer to the problem studied in this paper, the problem of mode estimation was originally studied in \cite{chernoff1964estimation, parzen62estimation} with the focus on statistical properties of various estimators such as consistency. More recently, the instance-optimal sample complexity of mode estimation for any discrete distribution was derived in \cite{Shah20}. Our formulation differs from this line of work in the non-i.i.d. nature of the observations as well as the partial ability that the agent has to control the sampling process, by being able to query any box at a given instant.

Our contributions are summarized as follows.
\begin{itemize}
    \item We begin by considering a special case of our model where the entire population is contained within a single box; we refer to this as the \emph{mixed community setting} (see Section~\ref{sec:mixedcomm}). In this setting, the sampling process is not controlled, and the learning task involves only the decision rule. We show that a simple decision rule, based on counting the number of distinct individuals encountered from each community, is optimal, via comparison of an upper bound on the probability of error (mis-identification of the community mode) under the proposed algorithm with an information theoretic lower bound. For this setting, we also highlight the impact of being able to identify sampled individuals (i.e., determine whether or not the sampled individual has been seen before) on the achievable performance in community mode estimation.
    \item Next, we consider the case where each community lies in its own box; the so-called \emph{separated community setting} (see Section~\ref{sec:separatedcomm}). Here, we show that the commonly used approach of detecting pairwise collisions (see~\cite{chen2018community}) is sub-optimal. Next, a near-optimal algorithm is proposed that borrows the sampling strategy of the classical \emph{successive rejects} policies for MABs \cite{Aud10}, but differentiates communities based on the number of distinct individuals encountered \newtext{(which is different from the classical MAB setting where arms are differentiated based on their empirical average rewards)}.
    \item Next, we consider a setting that encompasses both the mixed community as well as the separated community settings; we refer to it as the \emph{community-disjoint box setting} (see Section~\ref{sec:boxcomm}). Here, each community is contained within a single box (though a box might contain multiple communities). For this case, we propose novel algorithms that combine elements from the mixed and separated community settings. Finally, we show how the algorithms designed for the community-disjoint box setting can be extended to the fully general case, where communities are arbitrarily spread across boxes.
    \item Finally, we validate the algorithms proposed on both synthetic as well as real-world datasets (see Section~\ref{sec:experimental_results}).
\end{itemize}

\newtext{We conclude this section by making a comparison between our contributions and the literature on the fixed budget MAB problem. Near optimal algorithms for the fixed budget MAB problem (see, for example, \cite{Aud10,Karnin2013}) follow a sampling strategy of \emph{successive rejection} of arms, wherein the sampling budget is split across multiple phases, and at the end of each phase, a certain number of (worst performing) arms are eliminated from further consideration. Some of our algorithms for the community mode estimation problem follow a similar sampling strategy and eliminate boxes in phases; specifically, we often use the same sampling schedule as in the classical successive rejects algorithm proposed in \cite{Aud10}. However, the elimination criterion we use is different: it is based on the number of distinct individuals seen (so far) from each community. Given that this statistic evolves in a non-stationary Markovian fashion over time, this distinction makes our analysis more complex. 

Our information theoretic lower bounds are inspired by the framework developed in \cite{Kaufmann16a} for the fixed budget MAB problem. However, as before, the key distinction in our proofs stems from the difference in stochastic nature of the observation process: while reward observations for each arm in the classical MAB setup are i.i.d., the number of distinct individuals seen from each community evolves as an absorbing Markov chain in the community mode estimation problem.}   
\section{Problem Formulation}
\label{Sec:Problem}

Consider a population consisting of $N$ individuals. Each individual belongs to exactly one out of~$m$ communities, labelled $1, 2, \cdots, m.$ Additionally, the population is partitioned across $b$ \emph{sampling domains}, also referred to as `boxes' in this paper. The boxes are labelled $1,2,\cdots,b.$ Our learning goal is to identify, via random sequential sampling of the boxes, the largest community (a.k.a., the community mode). 

We represent the partitioning of the population across communities and boxes via a $b \times m$ matrix~$D.$ The entry in the~$i$th row and~$j$th column of this matrix, denoted by~$d_{ij},$ equals the number of individuals in box~$i$ who are in community~$j$. Throughout the paper, we refer to $D$ as the \emph{instance} associated with the learning task. Let $d_j := \sum_i d_{ij}$ denote the size of community~$j$, and $N_i := \sum_j d_{ij}$ denote the size of box $i$. 

The learning agent a priori knows only the set of boxes and the set of communities.
It can access the population by querying an oracle. The input to this oracle is a box number, and the response from the oracle is a (uniformly chosen) random individual from this box and the community that individual belongs to. Individuals are sampled with replacement, i.e., the same individual can be sampled multiple times. Additionally, we assume that the learning agent is able to `identify' the sampled individual, such that it knows whether (and when) the sampled individual had been seen before.\footnote{Note that this does not require the agent to store a unique identifier (like, say, the social security number) associated with each sampled individual. The agent can simply assign its own \emph{pseudo-identity} to an individual the first time the individual is seen. This sampling model has been applied before in a variety of contexts, including cardinality estimation (see \cite{finkelstein1998, budianu2006sensors}) and community exploration (see \cite{chen2018community}).} For each query, the agent can decide which box to sample based on the oracle responses received thus far. At the end of a fixed budget of~$t$ oracle queries, the agent outputs its estimate~$\hat{h}^* \in [m]$ of the community mode $h^*(D) = \argmax_{j \in [m]} d_j$ in the underlying instance $D.$\footnote{We use the notation $[a:b]$ to denote the set $\{a, a+1, \ldots, b\}$ for any $a, b \in \mathbb{Z}$, $b \geq a.$ 
For $b \in \mathbb{N},$ $[b]:= [1:b].$} The agent makes an error if $\hat{h}^* \notin h^*(D)$, and the broad goal of this paper is to design sequential community mode estimation algorithms that minimize the probability of error. 

Formally, for any $k \in [t]$, a sequential algorithm $\mathcal{A}$ has to specify a box $b_k$ to sample for the $k$th query, this choice being a function of only past observations. 
The probability of error for an algorithm $\mathcal{A}$ under an instance $D,$ with a budget of~$t$ oracle queries, is given by $P_e(D,\mathcal{A},t) \overset{\Delta}{=} \mathbb{P}(\hat{h}^* \notin h^*(D))$. An algorithm $\mathcal{A}$ is said to be \emph{consistent} if, for any instance~$D,$ $\lim_{t \ra \infty}P_e(D,\mathcal{A},t) = 0.$ We often suppress the dependence on the budget $t$ and also the algorithm $\mathcal{A}$ (when the algorithm under consideration is clear from the context) when expressing the probability of error, denoting it simply as $P_e(D).$

For notational simplicity, we assume throughout that the instance~$D$ is has a unique largest community, with $h^*(D)$ denoting the largest community; our results easily generalize to the case where $D$ has more than one largest community. In the following sections, for various settings of interest, we prove
instance-specific upper bounds on the probability of error of our proposed algorithms. \newtext{We are also able to prove information theoretic lower bounds on the probability of error under \emph{any} algorithm (within a broad class of \emph{reasonable} algorithms). In some cases, we show that the exponential decay rate of the information theoretic lower bound with respect to the horizon matches (up to a factor that is logarithmic in the number of boxes) the corresponding decay rate for our algorithm-specific upper bounds; this implies the near optimality of our algorithms.} 

{\bf Remark:} As is also the case with algorithms for the fixed budget MAB problem, the probability of error under our proposed algorithms typically decays exponentially with respect to the budget~$t,$ i.e., $P_e(D) \leq \mu(D) e^{-\lambda(D) t},$ where $\mu(D),$ and $\lambda(D)$ are instance (and algorithm) dependent positive constants. Our primary goal would be to characterize and optimize the exponential decay rate $\lambda(D)$ above. With the focus thus being on the decay rate, the value of the exponential pre-factor~$\mu(D)$ in our bounds will often be loose; this is also the case in the fixed budget MAB literature. 

\newtext{{\bf Remark:} It is also important to note that in the classical fixed budget MAB problem, the decay rates associated with the upper bounds on the probability of error under the best known algorithms \emph{do not} match exactly the decay rates corresponding to the best known information theoretic lower bounds: the two decay rates differ by a multiplicative factor that is logarithmic in the number of arms \cite{Carpentier2016}. Given this fundamental gap in the state of the art, it is common practice to refer an algorithm as near optimal if the decay rate associated with its upper bound is a logarithmic (in the number of arms) factor away from the decay rate in the best known information theoretic lower bound. Interestingly, we observe a similar multiplicative mismatch between the decay rates in our upper and lower bound for the community mode estimation problem (as noted above).}


The remainder of this paper is organized as follows. We begin by considering the \emph{mixed community setting} in Section~\ref{sec:mixedcomm}, where all individuals belong to a single box ($b = 1$); in this special case, the instance matrix $D$ has a single row. Note that in the mixed community setting, the agent has no control on the sampling process. Next, in Section~\ref{sec:separatedcomm}, we study the opposite end of the spectrum with respect to sampling selectivity, where each community constitutes a unique box ($b = m$); this corresponds to $D$ being a diagonal matrix (up to row permutations). We refer to this special case as the \emph{separated community setting.} Next, in Section~\ref{sec:boxcomm}, we consider the intermediate setting, where each community is entirely contained within a single box. 
This corresponds to each column of $D$ having exactly one non-zero entry. The algorithms presented in this section also extend to the most general case, where each community may be spread across multiple boxes. Finally, in Section~\ref{sec:experimental_results}, we present simulation results that compare the proposed algorithms on both synthetic data as well as several real-world datasets. \newtext{We conclude this section with a summary of our main results.}
\remove{
\jk{To do: define consistency here.}

\nk{I think the problem setup needs more formalism, especially in terms of what the agent knows and what an algorithm constitutes. I am including some thoughts below, we can discuss and update. \\ 
..... Without loss of generality, we order communities in the following way: $\sum_{i}d_{i1} \geq \sum_{i}d_{i2} \geq ... \geq \sum_{i}d_{im}$, i.e., the total size of community $j$ across all boxes is non-increasing in the index $j$. For ease of notation, we also assume that the largest community is unique; i.e, $\sum_{i}d_{i1} > \sum_{i}d_{i2}$, although our results do not rely on this assumption. {\color{red}Now, consider an agent who apriori has only limited information about the underlying population, for e.g., the size of each box.} The agent can access the population by querying an oracle. The input to this oracle is a box number, and the output from the oracle is a random individual from this box and the community they belong to. 

We assume that the agent can use a fixed budget of $t$ oracle queries and it's objective is to identify the largest community in the population, which we refer to as the `community mode' and which corresponds to community $1$ given the assumption regarding the community sizes made above. In this work, our primary goal is to design a sequential community mode-estimation algorithm which minimizes the probability of error after $t$ queries.\\
For any $j \in [1, t]$, any such sequential algorithm $\mathcal{A}$ has to specify a box $b_j$ to sample for the $j^{th}$ query, the identity of which can be decided based on only past observations. At the end of the $t$ queries, the algorithm has to output a community index $\hat{h}^*$ as its estimate for the community mode. The probability of error for the algorithm $\mathcal{A}$ is given by 
$P_e \overset{\Delta}{=} \mathbb{P}(\hat{h}^* \neq 1)$.}

\nk{To define consistency, perhaps we shouldn't initially make the assumption of largest community being community $1$. Define a general instance $I$ and define a notation regarding its mode as $m(I)$..then do all definitions regarding algorithm and consistency..then make the assumption}
}

\subsection*{Summary of main results}

\newtext{
In Tables~\ref{table:summary_mixed}, \ref{table:summary_separated}, and \ref{table:summary_box}, we present a summary of our results, classified by setting. For ease of presentation, only the decay rates associated with our (upper and lower) bounds on probability of error are mentioned here.

\begin{table}[h]
\footnotesize
    \centering
    \scshape
\caption{Summary of the mixed community setting (decay rates)}
        \label{table:summary_mixed}
    \begin{tabular}{l c c c}
    \hline
    \toprule
        Sampling model & Lower bound & Algorithm & Upper bound\\
        \midrule
        \multirow{2}{*}{Identityless} & $\log\left(\frac{N}{N-\left(\sqrt{d_1}-\sqrt{d_2}\right)^2}\right)$ & \multirow{2}{*}{SFM} & $\log\left(\frac{N}{N-\left(\sqrt{d_1}-\sqrt{d_2}\right)^2}\right)$\\
        & (Theorem \ref{theorem:mixed_identityless_lb_1}) & & (Theorem \ref{theorem:mixed_identityless_ub}) \\
         \midrule
         \multirow{2}{*}{Identity} & {$\log\left(\frac{N}{N-\left(d_1-d_2+1\right)}\right)$} & \multirow{2}{*}{DSM} & $\log\left(\frac{N}{N-(d_1-d_2)}\right)$\\
         & (Theorem \ref{theorem:mixed_identity_lb}) & & (Theorem \ref{theorem:mixed_identity_ub}) \\
        \bottomrule
        \end{tabular}
\normalsize
\end{table}

\begin{table}[t]
\footnotesize
    \centering
    \scshape
\caption{Summary of the separated community setting (decay rates)}
        \label{table:summary_separated}
    \begin{tabular}{c c c}
    \hline
    \toprule
        Lower Bound & Algorithm & Upper Bound\\
        \midrule
         $\frac{3}{H_2\left(D\right)}$ &
         DS-SR & $\frac{1}{\overline{log}(b)H(D)}$ \\
(Theorem~\ref{theorem:lower_bound_separated}) & & (Theorem~\ref{thm:distinctsamplesAudibert})\\
        \bottomrule
        \end{tabular}
\normalsize
\end{table}

\begin{table}[t]
\footnotesize
    \centering
    \scshape
\caption{Summary of the community-disjoint box setting (decay rates)}
        \label{table:summary_box}
    \begin{tabular}{l c c c c}
    \hline
    \toprule
        &Lower Bound & Algorithms & Upper Bound\\
        \midrule
         &{$\min\left(  \frac{\Gamma}{H_2^b\left(D\right)} ,\log\left(\frac{N_1}{N_1-(d_{11}-c_1+1)}\right)\right)$} & \multirow{2}{*}{DS-SR, ENDS-SR} & $\min\left( \frac{1}{\overline{log}(b)H^b(D)},\frac{1}{2\overline{log}(b)}\log\left(\frac{N_1}{N_1-d_{11}+c_1}\right)\right)$\\
         &(Theorems \ref{Thm:BoxLB-MixedComm}, \ref{Thm:LBBox2}) & & (Theorem \ref{Thm:UBBox}) \\
        \bottomrule
        \end{tabular}
\normalsize
\end{table}

Table~\ref{table:summary_mixed} summarizes our results for the mixed-community setting, where for simplicity, we have represented the community sizes as $d_1,d_2,\ldots,d_m,$ with $d_1 > d_2 \geq d_3 \geq \cdots \geq d_m.$ In this case, we consider both an \emph{identityless} sampling model, wherein the identity of the sampled individual is not revealed to the learning agent, as well as the identity-based model described in our problem formulation. As we point out in Section~\ref{sec:mixedcomm}, the decay rate corresponding to the identity-based sampling model exceeds that under the identityless model, indicating that identity information helps to improve the performance of mode identification. Note that the decay rates corresponding to our upper and lower bounds match exactly for the identity-based sampling model, and almost exactly for the identity-based model. Since the mixed-community setting consists of a single box, the multiplicative discrepancy described above between the decay rates in the upper and lower bounds does not arise here.

In Table \ref{table:summary_separated}, we summarize our main results for the separated community setting. Since there is a single community per box here, we once again represent the community/box sizes as $d_1,d_2,\ldots,d_b,$ with $d_1 > d_2 \geq d_3 \geq \cdots \geq d_b.$ The decay rate in our lower bound is expressed in terms of the instance-dependent complexity metric $H_2(D) := \sum_{i=2}^{b}\frac{1}{\log\left(d_1\right) - \log\left(d_i\right)}$, and that in our upper bound is expressed in terms of the related complexity metric~$H(D),$ which is within a $\overline{log}(b) = \frac{1}{2} + \sum_{i=2}^b \frac{1}{i}$ factor of $H_2(D)$ (see Lemma~\ref{theorem:separatedupperlowerH}). 

Table \ref{table:summary_box} summarizes our main results for the community-disjoint box setting. Here, $d_{11}$ denotes the size of the largest community, which is contained in Box~1, $c_1$ denotes the size of the second largest community in Box~1, and for $i \geq 2,$ $c_i$ denotes the size of the largest community in Box~$i.$ The remaining constants in the decay rate expressions are defined in Section~\ref{sec:boxcomm}. The decay rates corresponding to the upper and lower bounds are expressed as a minimum of two terms: the first corresponds to the (sub)task of identifying the box containing the largest community, while the second corresponds to the (sub)task of identifying the largest community within that box. As we elaborate in Section~\ref{sec:boxcomm}, for a certain class of (reasonable) instances, the two decay rates can be shown to be within constant factors of one another.

\hide{
\begin{gather*}
    \Gamma = \max\left(\frac{\log\left(\ceil{\frac{N_1(N_a-c_a+d_{11})}{(N_1-d_{11}+c_a)}}\right) - \log\left(N_a\right)}{\log\left(\frac{N_1}{N_1-d_{11}+c_a}\right)},  \max_{i\in [2:b]} \frac{\log\left(\ceil{\frac{N_1(N_a-c_a+c_i)}{(N_1-d_{11}+c_i)}}\right)-\log\left(N_a\right)}{\log\left(\frac{N_1}{N_1-d_{11}+c_a}\right)}, \right) \\
    H^b(D) = \underset{i\in [2:b]}\max \frac{i}{\log(N_1) - \log(N_1-d_{11}+c_i)} \\
    H^b_2(D) = \sum_{i=2}^{b}\frac{1}{\log(N_1)-\log(N_1-d_{11}+c_i)}
\end{gather*}
and it is true that 
\begin{gather*}
    \frac{H^b(D)}{2} \leq H_2^b(D) \leq \overline{log}(b)H^b(D)
\end{gather*}
}

}
    
\section{Mixed Community Setting}
\label{sec:mixedcomm}
We first consider the mixed community setting, where $b = 1,$ i.e., the instance matrix $D$ has a single row. In other words, the population is completely `mixed' and for each query, the agent obtains a uniformly random sample from the entire population. Thus, the sampling process in this case is uncontrolled, and the learning task is to simply identify the largest community based on the~$t$ samples obtained. 

In the mixed community setting, we also consider an \emph{identity-less} sampling model, wherein the agent only learns the community that the sampled individual belongs to, without any other identifying information. Under this sampling model, the agent cannot tell whether or not an individual who has been sampled has been seen before. This model not only forms a benchmark for our subsequent analysis of identity-based sampling, but is also of independent interest, given its privacy-preserving property.

Throughout this section, since there is a single box, we drop the first index in $d_{ij},$ and represent the instance simply as $D = (d_1,d_1,\cdots,d_m).$ Also, without loss of generality, we order the communities as $d_1 > d_2 \geq d_3 \geq \cdots \geq d_m$.


\subsection{Identity-less sampling}
We begin by analysing the identity-less sampling model in the mixed community setting. Note that in this case, the response to each oracle query is community $i$, with a probability proportional to the size of the $i$th community. Thus, the agent receives $t$ i.i.d. samples from the discrete distribution $(p_1,p_2,\cdots,p_m),$ where $p_i = d_i/N.$ Hence, the learning task boils down to the identification of the mode of this distribution, using a fixed budget of $t$ i.i.d. samples.\footnote{The same mode identification problem was considered in the \emph{fixed confidence} setting recently in~\cite{Shah20}.}


\subsubsection{Algorithm}\hfill\\
We consider a natural algorithm in this setting, which we call the Sample Frequency Maximization (SFM) algorithm: return the empirical mode, i.e., the community which has produced the largest number of samples, with ties broken randomly. One would anticipate that this algorithm is optimal, since the vector $(\hat{\mu}_j(t),\ 1 \leq j \leq m),$ where $\hat{\mu}_j(t)$ denotes the number of samples from community~$j$ over~$t$ oracle queries, is a sufficient statistic for the distribution~$D.$
The probability of error under the SFM algorithm is bounded from above as follows.

\ignore{
\begin{algorithm}[t]
\caption{Sample Frequency Maximization algorithm}\label{alg:samplefrequency}
\begin{algorithmic}[1]
\Procedure{SFM}{$t$}
\State Set $comm\_samples[c] = 0$ for all $c \in communities$
\For{$k = 1, 2, ..t$}
\State get sample $c$ where $c = community$
\State $comm\_samples[c] = comm\_samples[c] + 1$
\EndFor
\State Let $w = \{c : comm\_samples[c] = max_{c' \in communities}comm\_samples[c'] \}$
\State $\hat{h}^*$ = random sample from $w$
\State \textbf{return} $\hat{h}^*$\Comment{The guess for largest community is $\hat{h}^*$}
\EndProcedure
\end{algorithmic}
\end{algorithm}
} 

\begin{theorem} 
\label{theorem:mixed_identityless_ub}
Consider the mixed community setting, under the identity-less sampling model. For any instance $D,$ the Sample Frequency Maximization algorithm 
has a probability of error upper bounded as
\begin{align*}
    P_e{(D)} \leq (m-1) \left(1-\frac{(\sqrt{d_1}-\sqrt{d_2})^2}{N}\right)^t. 
\end{align*}
\end{theorem}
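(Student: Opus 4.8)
The plan is to reduce the error event to a union of pairwise comparison events and control each one with an optimized Chernoff bound. Since the sampling is identity-less, the oracle returns community $i$ independently with probability $p_i = d_i/N$ on each of the $t$ queries, so the counts $(\hat\mu_1(t),\ldots,\hat\mu_m(t))$ form a multinomial vector built from i.i.d. draws. The SFM algorithm outputs the community with the largest count (ties broken randomly), so an error can occur only if some community $j \neq 1$ attains a count at least as large as that of the true mode, community~$1$. Hence the error event is contained in $\bigcup_{j=2}^{m}\{\hat\mu_j(t) \geq \hat\mu_1(t)\}$, and a union bound gives $P_e(D) \le \sum_{j=2}^{m}\mathbb{P}(\hat\mu_j(t) \geq \hat\mu_1(t))$. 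Handling the tie-breaking correctly is the one place to be careful, but since a tie ($\hat\mu_j = \hat\mu_1$) is already included in the event $\{\hat\mu_j \geq \hat\mu_1\}$, the containment above holds regardless of how ties are resolved.

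Next I would bound each pairwise term by a Chernoff argument. For a fixed $j \neq 1$, associate to the $k$th sample the random variable $X_k$ equal to $+1$ if the sample lies in community~$1$, $-1$ if it lies in community~$j$, and $0$ otherwise; the $X_k$ are i.i.d. and $\hat\mu_1(t)-\hat\mu_j(t)=\sum_{k=1}^t X_k$. Thus $\{\hat\mu_j(t)\geq \hat\mu_1(t)\}=\{\sum_k X_k \leq 0\}$, and for any $s \geq 0$,
\[
\mathbb{P}\Big(\sum_{k=1}^t X_k \leq 0\Big) \leq \mathbb{E}\big[e^{-s X_1}\big]^t = \big(p_1 e^{-s} + p_j e^{s} + 1 - p_1 - p_j\big)^t .
\]
Minimizing the base over $s$ is routine: the first-order condition $e^{2s}=p_1/p_j$ (valid since $p_1>p_j$) yields $p_1 e^{-s}=p_j e^{s}=\sqrt{p_1 p_j}$, so the optimized base equals $1-(\sqrt{p_1}-\sqrt{p_j})^2$, the Bhattacharyya coefficient. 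Substituting $p_i=d_i/N$ gives $\mathbb{P}(\hat\mu_j(t)\geq\hat\mu_1(t)) \leq \big(1 - (\sqrt{d_1}-\sqrt{d_j})^2/N\big)^t$.

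Finally I would make the bound uniform over $j$. Because $d_1 > d_j$ and $x \mapsto (\sqrt{d_1}-\sqrt{x})^2$ is decreasing for $x \le d_1$, the ordering $d_2 \geq d_j$ for all $j \geq 2$ implies $(\sqrt{d_1}-\sqrt{d_j})^2 \geq (\sqrt{d_1}-\sqrt{d_2})^2$, so every pairwise term is at most $\big(1-(\sqrt{d_1}-\sqrt{d_2})^2/N\big)^t$. Summing the $m-1$ terms yields the claimed bound. The main conceptual step is recognizing that the optimal Chernoff exponent produces exactly the Bhattacharyya form $(\sqrt{p_1}-\sqrt{p_j})^2$; the rest (the union bound, the MGF factorization enabled by the i.i.d. structure of identity-less sampling, and the monotonicity reduction to the runner-up $d_2$) is straightforward. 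I expect no serious obstacle here, though it is worth double-checking that the factorization of the MGF is legitimate, which it is precisely because identity-less sampling removes any dependence across draws.
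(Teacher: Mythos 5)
Your proposal is correct and follows essentially the same route as the paper: a union bound over pairwise comparison events $\{\hat\mu_j(t)\geq\hat\mu_1(t)\}$, an optimized Chernoff bound on the i.i.d.\ signed indicators whose minimizing choice $e^{2s}=p_1/p_j$ yields the Bhattacharyya-type base $1-(\sqrt{p_1}-\sqrt{p_j})^2$, and the monotonicity reduction to the runner-up community $d_2$. Your handling of ties via containment in the weak-inequality events matches the paper's implicit treatment, so there is nothing to add.
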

The proof, which follows from a straightforward application of the Chernoff bound, can be found in~\ref{sec:proof_sample_freq_estimate}. Note that the probability of error under the SFM algorithm decays exponentially with the budget~$t,$ the decay rate being (at least) $\log\left(\frac{N}{N-(\sqrt{d_1}-\sqrt{d_2})^2}\right).$ The optimality of this decay rate is established next, via an information-theoretic lower bound on the probability of error under any consistent algorithm. 

\subsubsection{Lower Bound}\hfill\\
The following theorem establishes an asymptotic lower bound on the probability of error under any consistent algorithm which uses identity-less sampling. Recall that under a consistent algorithm, for any underlying instance $D$ the probability of error converges to zero as $t \ra \infty.$
\begin{theorem}
\label{theorem:mixed_identityless_lb_1}
In the mixed community setting, under the identity-less sampling model, any consistent algorithm on an instance $D$ satisfies
\begin{align*}
    \liminf_{t\rightarrow\infty} \frac{1}{t}\log(P_e(D)) \geq -\log\left(\frac{N}{N-(\sqrt{d_1}-\sqrt{d_2})^2}\right).
\end{align*}
\end{theorem}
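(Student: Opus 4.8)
The plan is to prove the bound by a change-of-measure argument in the spirit of \cite{Kaufmann16a}, optimized over a well-chosen family of confusing alternative instances. Since sampling is uncontrolled here, the observations are $t$ i.i.d.\ draws $X_1,\dots,X_t$ from the categorical law $p = (p_1,\dots,p_m)$ with $p_i = d_i/N$, and an algorithm is just a decision rule $\hat h^*$ measurable with respect to these draws. Fix any alternative distribution $p' = (p'_1,\dots,p'_m)$ whose mode is community~$2$ rather than community~$1$ (so $p'_2 > p'_1 \ge p'_j$ for $j \ge 3$), and let $D'$ be a corresponding instance. Consistency of the algorithm forces $P_e(D') = \mathbb{P}_{p'}(\hat h^* \ne 2) \to 0$. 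The key observation is that $P_e(D) = \mathbb{P}_{p}(\hat h^* \ne 1) \ge \mathbb{P}_{p}(\hat h^* = 2)$, so it suffices to lower bound the latter.

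First I would transfer the event $\{\hat h^* = 2\}$ from $p'$ to $p$ via the likelihood ratio $L_t = \prod_{k=1}^t p(X_k)/p'(X_k)$, writing $\mathbb{P}_p(\hat h^* = 2) = \mathbb{E}_{p'}[\mathbf{1}\{\hat h^* = 2\}\, L_t]$. On the \emph{typical} event $G_t = \{\tfrac1t \log L_t \ge -(D(p'\|p)+\epsilon)\}$, the law of large numbers under $p'$ gives $\tfrac1t\log L_t \to -D(p'\|p)$, so $\mathbb{P}_{p'}(G_t) \to 1$; combined with $P_e(D')\to 0$, a Bonferroni bound yields $\mathbb{P}_{p'}(\{\hat h^* = 2\}\cap G_t) \ge (1-P_e(D')) + \mathbb{P}_{p'}(G_t) - 1 \to 1$. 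Restricting the expectation to $G_t$, where $L_t \ge e^{-(D(p'\|p)+\epsilon)t}$, gives $P_e(D) \ge \tfrac12 e^{-(D(p'\|p)+\epsilon)t}$ for all large $t$, and hence $\liminf_t \tfrac1t\log P_e(D) \ge -D(p'\|p)$ after letting $\epsilon\downarrow 0$. Crucially, this bounds $P_e(D)$ itself (not merely $\max(P_e(D),P_e(D'))$), precisely because the \emph{typical} likelihood-ratio event under $p'$ is used rather than a rare exponentially tilted event.

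It remains to optimize the bound by choosing the alternative minimizing $D(p'\|p)$ subject to the mode being flipped. Since the unconstrained minimizer $p'=p$ has $p'_1 > p'_2$, the constrained minimizer lies on the boundary, so I would solve $\min\{ D(q\|p) : q_1 = q_2\}$ by Lagrange multipliers; the minimizer is $q_1 = q_2 = c\sqrt{p_1 p_2}$ and $q_j = c\,p_j$ for $j\ge 3$, with $c = \big(1 - (\sqrt{p_1}-\sqrt{p_2})^2\big)^{-1}$, and direct substitution collapses to the clean value $D(q\|p) = \log c = \log\!\big(\tfrac{N}{N - (\sqrt{d_1}-\sqrt{d_2})^2}\big)$. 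Since $p_2 \le \sqrt{p_1 p_2}$, this $q$ has communities $1,2$ tied at the top, and an arbitrarily small perturbation makes $2$ the strict mode; taking the infimum over such alternatives therefore yields $\liminf_t \tfrac1t\log P_e(D) \ge -\log\!\big(\tfrac{N}{N-(\sqrt{d_1}-\sqrt{d_2})^2}\big)$, matching the SFM upper bound of Theorem~\ref{theorem:mixed_identityless_ub} exactly.

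The main obstacle I anticipate is twofold. Conceptually, the delicate point is arranging the change of measure so that the bound falls on $P_e(D)$ alone; this is exactly what the typical-event-plus-consistency mechanism achieves, and it is the crux of the argument. Technically, the optimal alternative $q$ has irrational entries and so is not literally a valid (integer) instance; I would handle this by a density argument, approximating $q$ by distributions of genuine instances (e.g.\ scaling the population to $kN$ individuals) along which $D(\cdot\|p)\to D(q\|p)$, so that the infimum of the achievable rates still equals the claimed value. Verifying that $\min_q D(q\|p)$ coincides precisely with the Bhattacharyya/Chernoff rate of the upper bound is the satisfying computational heart of the proof.
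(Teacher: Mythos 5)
Your proposal is correct, and it converges on the same optimal alternative as the paper --- the Bhattacharyya-type distribution $q_1=q_2=\sqrt{p_1p_2}/C$, $q_j=p_j/C$ for $j\ge 3$, with $C=1-(\sqrt{p_1}-\sqrt{p_2})^2$, which the paper writes with a small $\delta$-perturbation to break the tie at the top --- but the middle of the argument is genuinely different. The paper's proof (in~\ref{sec:proof_identityless_consistent_lb}) follows the transportation/data-processing route of \cite{Kaufmann16a}: via conditional-Jensen lemmas applied to the likelihood ratio it shows $\mathbb{E}_{\Theta'}[-L_t]=t\,D(\Theta'\|\Theta)\ge d\bigl(\mathbb{P}_{\Theta'}(S),\mathbb{P}_{\Theta}(S)\bigr)$ for $S=\{\hat{h}^*=1\}$, and then extracts $-\log P_e(D)$ from the binary KL divergence using consistency under $\Theta'$. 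You instead restrict the change of measure to the typical set $G_t$ of the log-likelihood ratio and invoke the law of large numbers plus a Bonferroni bound, in the style of Stein's lemma; this is more elementary (no conditional-Jensen machinery), yields the explicit eventual bound $P_e(D)\ge\tfrac12 e^{-(D(p'\|p)+\epsilon)t}$, and, like the paper's argument, correctly lands the bound on $P_e(D)$ alone by exploiting consistency under the alternative --- the crux you rightly identify. Two further points in your favor: you \emph{derive} the optimizing alternative by solving $\min\{D(q\|p):q_1=q_2\}$ via Lagrange multipliers (the paper simply posits $\Theta'$ and verifies the computation), and you explicitly patch an integrality gap that the paper glosses over --- the minimizer $q$ has irrational entries and is not literally realizable as an integer instance, so the consistency hypothesis (which is stated only for instances $D$) does not directly apply to it; your approximation of $q$ by distributions of genuine instances with enlarged populations, together with continuity of $D(\cdot\|p)$, supplies the missing step, whereas the paper applies consistency to its $\Theta'$ without checking realizability.
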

The proof of this theorem, which uses ideas from the proof of \cite[Theorem 12]{Kaufmann16a}, can be found in~\ref{sec:proof_identityless_consistent_lb}. Since the exponential decay rate in the above lower bound matches that in the upper bound corresponding to the SFM algorithm for any instance $D$, it follows that SFM is asymptotically decay-rate optimal (under identity-less sampling).

\ignore{
We also present another result which gives non-asymptotic lower bounds on all algorithms (i.e, not assuming consistency). 
\begin{theorem}
\label{theorem:mixed_identityless_lb_2}
For any permutation-invariant algorithm on an instance $(d_1, d_2, ... d_m)$, the probability of error is lower bounded as
\begin{align*}
    P(error) \geq \frac{1}{4}\exp\left(-\frac{t(d_1-d_2)}{N}\log\left(\frac{d_1}{d_2}\right)\right)
\end{align*}
\end{theorem}
\textbf{Proof.} This proof is similar to the proof of Lemma 15 in \cite{Kaufmann16a}. A complete proof is given in Appendix \ref{sec:proof_identityless_permutation_lb}. $\blacksquare$
} 

\subsection{Identity Sampling}
Having considered the case of identity-less sampling in the previous section, we now revert to the identity-based sampling model described in Section~\ref{Sec:Problem}. We show that identity information can be used to improve the accuracy of  community mode estimation. We begin by proposing and analysing a simple algorithm for community mode estimation, and then establish information-theoretic lower bounds.


\ignore{
\subsubsection{Algorithm}
\begin{algorithm}[t]
\caption{Distinct Samples Maximization algorithm}\label{alg:distinctsamplefrequency}
\begin{algorithmic}[1]
\Procedure{DSM}{$t$}
\State Set $ind\_samples[i] = 0$ for all $i \in individuals$
\State Set $comm\_samples[c] = 0$ for all $c \in communities$
\For{$k = 1, 2, ..t$}
\State get sample $(i, c)$ where $i = individual, c = community$
\If{$ind\_samples[i] = 0$}\Comment{Has not been sampled before}
\State $ind\_samples[i] = ind\_samples[i] + 1$
\State $comm\_samples[c] = comm\_samples[c] + 1$
\EndIf
\EndFor
\State Let $w = \{c : comm\_samples[c] = max_{c' \in communities}comm\_samples[c'] \}$
\State $\hat{h}^*$ = random sample from $w$
\State \textbf{return} $\hat{h}^*$\Comment{The guess for largest community is $\hat{h}^*$}
\EndProcedure
\end{algorithmic}
\end{algorithm}
} 

\subsubsection{Algorithm}

Under identity-based sampling, we propose a simple \emph{Distinct Samples Maximization} (DSM) algorithm: The DSM algorithm tracks the number of \emph{distinct} individuals seen from each community, and returns the community that has produced the greatest number over the $t$ queries, with ties broken randomly. As before, this is the natural algorithm to consider under identity-based sampling, given that the vector $(S_j(t),\ 1\leq j \leq m)$, where $S_j(t)$ denotes the number of distinct individuals from community~$j$ seen over~$t$ oracle queries, is a sufficient statistic for $D$ (see \cite{budianu2006sensors}). 
The probability of error under the DSM algorithm is bounded as follows.
\begin{theorem}
\label{theorem:mixed_identity_ub}
In the mixed community setting, for any instance $D,$ the Distinct Samples Maximization (DSM) algorithm 
has a probability of error upper bounded as
\begin{align}
    &P_e(D) \leq 2(m-1)\exp\left(-\frac{t\left(d_1-\frac{\sum_{i=2}^{m}d_i}{m-1}\right)^2}{32Nd_1}\right)\quad \text{ for } t\leq min\left\{\frac{d_1+d_m}{2d_1}N, \frac{16Nd_1}{(d_1-d_m)^2}\right\}, \label{eq:community_McDiarmid} \\
    &P_e(D) \leq {\binom{d_1}{d_2}} \left(1-\frac{d_1-d_2}{N}\right)^t {= {\binom{d_1}{d_2}}\exp\left(-t\log\left(\frac{N}{N-d_1+d_2}\right)\right)} 
    \quad \forall t. \label{eq:community_coupon_collector}
\end{align}
\end{theorem}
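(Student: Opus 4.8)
The plan is to realize the $t$ queries as i.i.d.\ uniform draws $Y_1,\dots,Y_t$ from the $N$ individuals, to let $X_i=\mathbf{1}[\text{individual }i\text{ is seen at least once}]$, and to write $S_j(t)=\sum_{i\in C_j}X_i$ for the distinct-count of community $j$, where $C_j$ is the set of individuals in community $j$. Writing $q:=1-(1-1/N)^t$, we have $\mathbb{E}[X_i]=q$ and hence $\mathbb{E}[S_j(t)]=d_jq$. The common starting point for both bounds is that DSM errs only when the returned community $\hat h\neq 1$ is an $\argmax$, so on the error event $S_1(t)\le \max_{j\ge 2}S_j(t)$.

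For the second bound~\eqref{eq:community_coupon_collector} I would first note that $S_j(t)\le d_j\le d_2$ for every $j\ge 2$, so the error event forces $S_1(t)\le d_2$; equivalently, at least $d_1-d_2$ of the individuals in $C_1$ are never sampled. By a union bound over the $\binom{d_1}{d_1-d_2}=\binom{d_1}{d_2}$ subsets $A\subseteq C_1$ of size $d_1-d_2$, the probability that some such $A$ is entirely missed is at most $\binom{d_1}{d_2}$ times the probability that one fixed $A$ is missed, and the latter is $(1-(d_1-d_2)/N)^t$ since each of the $t$ independent draws avoids $A$ with probability $1-|A|/N$. Rewriting $(1-(d_1-d_2)/N)^t=\exp(-t\log(N/(N-d_1+d_2)))$ gives the claim. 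This ``coupon-collector'' argument is short and needs no concentration.

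For the first bound~\eqref{eq:community_McDiarmid} I would instead union bound $P_e\le\sum_{j=2}^m \mathbb{P}(S_j(t)\ge S_1(t))$ and control each pairwise comparison by splitting at the midpoint $\theta_j=\tfrac{d_1+d_j}{2}q$ of the two means: $\mathbb{P}(S_j\ge S_1)\le \mathbb{P}(S_1\le\theta_j)+\mathbb{P}(S_j\ge\theta_j)$, each a one-sided deviation of size $\tfrac{d_1-d_j}{2}q$ from its mean. Since the occupancy indicators $\{X_i\}$ are negatively associated, the sums $S_1,S_j$ obey the usual Chernoff bounds for independent $\mathrm{Bernoulli}(q)$ variables, yielding a variance-sensitive tail of the form $\exp(-\tfrac{((d_1-d_j)q/2)^2}{2d_jq})\le \exp(-\tfrac{(d_1-d_j)^2 q}{8 d_1})$ (and similarly for the $S_1$ tail, using $d_j\le d_1$). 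The constraint $t\le \tfrac{d_1+d_m}{2d_1}N\le N$ lets me lower bound $q=1-(1-1/N)^t\ge 1-e^{-t/N}\ge t/(2N)$, so each of the $2(m-1)$ one-sided tails is at most $\exp(-c(d_1-d_j)^2)$ with $c=\tfrac{t}{32Nd_1}$.

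The crucial and least routine step is converting $\sum_{j\ge 2}\exp(-c(d_1-d_j)^2)$ into the stated $(m-1)\exp(-c(d_1-\bar d)^2)$ with $\bar d=\tfrac{1}{m-1}\sum_{i=2}^m d_i$. I would apply Jensen's inequality to $g(x)=e^{-cx^2}$ at the gaps $x_j=d_1-d_j$: since $g$ is concave exactly on $\{x^2\le 1/(2c)\}$ and $1/(2c)=16Nd_1/t$, the remaining hypothesis $t\le 16Nd_1/(d_1-d_m)^2$ guarantees $x_j^2\le (d_1-d_m)^2\le 1/(2c)$ for all $j$, whence $\tfrac{1}{m-1}\sum_j g(x_j)\le g(\bar x)=\exp(-c(d_1-\bar d)^2)$. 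Combining with the factor $2(m-1)$ from the union bound and midpoint split yields the result. The main obstacles are thus (i) justifying the variance-sensitive (rather than bounded-difference) concentration, for which negative association of occupancy indicators is the key tool, and (ii) the concavity/Jensen averaging, whose validity is precisely what the second budget constraint secures; the first budget constraint enters only through the lower bound on $q$ (and in keeping the $S_j$ upper tails within their sub-Gaussian regime).
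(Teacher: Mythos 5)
Your proposal is correct, and while your proof of \eqref{eq:community_coupon_collector} and your final Jensen step for \eqref{eq:community_McDiarmid} coincide with the paper's (same union bound over the $\binom{d_1}{d_2}$ possibly-missed subsets of $C_1$; same observation that $e^{-cx^2}$ is concave exactly where $x^2 \le 1/(2c)$, with the deliberate relaxation to $c = t/(32Nd_1)$ so that the constraint $t \le 16Nd_1/(d_1-d_m)^2$ secures concavity at every gap), your central concentration argument for \eqref{eq:community_McDiarmid} is genuinely different. The paper conditions on $Z_k$, the total number of draws landing in $C_1 \cup C_k$: on the event that $Z_k$ lies within $(1\pm\epsilon_k)$ of its mean it applies McDiarmid's bounded-differences inequality to $S_k(t)-S_1(t)$, using a Taylor expansion to lower-bound the conditional mean gap (this is where $t \le \frac{d_1+d_m}{2d_1}N$ enters for them), and on the complement it invokes a separate symmetry lemma, namely $P(S_k>S_1\mid Z_k=l)+\frac{1}{2}P(S_k=S_1\mid Z_k=l)\le\frac{1}{2}$ for $d_k\le d_1$ (which the paper only argues informally), together with a Chernoff bound on the binomial $Z_k$ and a tuned $\epsilon_k$ that equalizes the two exponents. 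You bypass all of this machinery: negative association of the occupancy indicators (multinomial counts are NA, and $\mathbf{1}[B_i\ge 1]$ are monotone functions of disjoint coordinates, so the Chernoff--Hoeffding method applies to $S_1$ and $S_j$ in both tail directions) plus the midpoint split at $\theta_j=\frac{d_1+d_j}{2}q$ replaces the conditioning, the symmetry lemma, and the $\epsilon_k$ calibration, and your first budget constraint is used only through $t \le N \Rightarrow q \ge t/(2N)$, which is valid since $1-e^{-x}\ge x/2$ on $[0,1]$. One point to state carefully: the upper tail of $S_j$ is binomial-type, so the correct exponent is $\lambda^2/\bigl(2(d_jq+\lambda/3)\bigr)$ rather than $\lambda^2/(2d_jq)$; since $d_jq+\frac{1}{6}(d_1-d_j)q\le d_1q$, this still yields $\exp\bigl(-(d_1-d_j)^2q/(8d_1)\bigr)$ for both tails, and with $q\ge t/(2N)$ you actually obtain exponent $t(d_1-d_j)^2/(16Nd_1)$, so the relaxation to $c=t/(32Nd_1)$ absorbs all slack comfortably. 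Net assessment: your route is shorter and avoids the one step the paper leaves unproven (the tie-breaking symmetry lemma), at the price of importing the NA property of balls-and-bins occupancy from the negative-dependence literature, which should be cited or proved when writing this up.
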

Theorem~\ref{theorem:mixed_identity_ub} provides two upper bounds on the probability of error. The bound~\eqref{eq:community_coupon_collector} holds for all values of budget~$t,$ while the bound~\eqref{eq:community_McDiarmid} which is only applicable for small to moderate budget values, tends to be tighter for small values of~$t.$ Note that~\eqref{eq:community_coupon_collector} implies that the probability of error under the DSM algorithm decays exponentially with~$t,$ with decay rate (at least) \ignore{$\log\left(\frac{N-(d_1-d_2)}{N}\right).$}$\log\left(\frac{N}{N-(d_1-d_2)}\right).$ 
Note that this decay rate exceeds the optimal decay rate under identity-less sampling from Theorem~\ref{theorem:mixed_identityless_lb_1}, since $$d_1 - d_2 > (\sqrt{d_1}-\sqrt{d_2})^2 \imp \log\left(\frac{N}{N-(d_1-d_2)}\right) > \log\left(\frac{N}{N-(\sqrt{d_1}-\sqrt{d_2})^2}\right).$$ This shows that identity information indeed improves the accuracy of  community mode  estimation.

\begin{proof}
The proof of \eqref{eq:community_McDiarmid} relies on an argument using McDiarmid's inequality, and is given in~\ref{sec:proof_mixed_identity_ub}. The proof of \eqref{eq:community_coupon_collector} is given by a coupon collector style argument. The error probability is upper bounded by the probability of the event that there exists a subset of $d_1 - d_2$ individuals in the largest community $C_1$, such that none of them are sampled in the $t$ queries. Thus we have
\begin{align*}
    P_e(D) 
    &\leq{\binom{d_1}{d_2}} \left(1-\frac{d_1-d_2}{N}\right)^t .
\end{align*}
The details can be found in~\ref{sec:proof_mixed_identity_ub}.
%
%
\end{proof}

\subsubsection{Lower Bounds}
Next, we show that the exponential decay rate of the probability of error under the DSM algorithm is (nearly) optimal via an information-theoretic lower bound. 
\begin{theorem}
\label{theorem:mixed_identity_lb}
In the mixed community setting, for any consistent algorithm,
the probability of error corresponding to an instance~$D$ is bounded below asymptotically as
\begin{align*}
\liminf_{t \ra \infty} \frac{\log(P_e(D))}{t} \geq - \log\left(\frac{N}{N-(d_1-d_2+1)}\right).
\end{align*}
\end{theorem}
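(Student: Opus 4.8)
The plan is to adapt the change-of-measure argument of \cite[Theorem 12]{Kaufmann16a} to the absorbing Markov chain governing the distinct-individual counts under identity sampling. The first step is to construct a \emph{confusing} alternative instance. Given $D = (d_1, d_2, \ldots, d_m)$ with $d_1 > d_2$, I would take $D'$ identical to $D$ except that community~$1$ is shrunk to size $d_2 - 1$; thus $D'$ has total population $N' := N - (d_1 - d_2 + 1)$, and its community mode is no longer community~$1$, since community~$2$ now has size $d_2 > d_2 - 1$. The reason for shrinking community~$1$ (rather than, say, enlarging community~$2$) is crucial: for the relative entropy between the two observation laws to be finite we need $\mathbb{P}_{D'} \ll \mathbb{P}_D$, which forces $d'_j \le d_j$ for every community~$j$ (a trajectory discovering more than $d_j$ distinct individuals from community~$j$ is impossible under $D$). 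Shrinking community~$1$ to exactly $d_2 - 1$ is the smallest such perturbation that flips the mode, and this minimality is precisely what produces the ``$+1$'' in $d_1 - d_2 + 1$.

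The second, and most delicate, step is to evaluate the relative-entropy rate $\frac{1}{t}\mathrm{KL}(\mathbb{P}_{D'}^t \,\|\, \mathbb{P}_D^t)$. By the chain rule this equals the Cesàro average of the one-step divergences, where given the current state $(S_1, \ldots, S_m)$ the outcome ``new/old individual from community~$j$'' has probability $(d_j - S_j)/N$ or $S_j/N$ under $D$, and analogously with $d'_j, N'$ under $D'$. The key observation is that under $D'$ the discovery process is absorbed once all $N'$ individuals of $D'$ have been seen; from then on every sample is a collision whose community is $j$ with probability $d'_j/N'$ under $D'$ versus $d'_j/N$ under $D$, so each subsequent step contributes exactly $\sum_j \frac{d'_j}{N'}\log\frac{N}{N'} = \log\frac{N}{N'}$. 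I would then bound the transient contribution before absorption by a constant times $\mathbb{E}_{D'}[\tau]$, where $\tau$ is the (coupon-collector) covering time, which has finite expectation independent of $t$, and the per-step divergence before absorption is uniformly bounded since all relevant likelihood ratios lie in a bounded range. Together these give $\frac{1}{t}\mathrm{KL}(\mathbb{P}_{D'}^t\|\mathbb{P}_D^t) \to \log\frac{N}{N'} = \log\frac{N}{N-(d_1-d_2+1)}$.

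The final step is the transportation inequality. Let $E = \{\hat h^* \ne 1\}$. By the assumed uniqueness of the mode of $D$, $\mathbb{P}_D(E) = P_e(D)$, while consistency on $D'$ (whose mode is not community~$1$) gives $\mathbb{P}_{D'}(E) \ge 1 - P_e(D') \to 1$. The data-processing inequality applied to the partition $\{E, E^c\}$ yields $\mathrm{KL}(\mathbb{P}_{D'}^t\|\mathbb{P}_D^t) \ge \mathrm{kl}\big(\mathbb{P}_{D'}(E),\, P_e(D)\big) \ge (1 - P_e(D'))\log\frac{1}{P_e(D)} - \log 2$, where $\mathrm{kl}$ is the binary relative entropy and I use the elementary bound $\mathrm{kl}(a,b) \ge a\log\frac1b - \log 2$. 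Rearranging, dividing by $t$, and letting $t \to \infty$ (using $P_e(D') \to 0$ together with the rate computed above) gives $\liminf_t \frac{1}{t}\log P_e(D) \ge -\log\frac{N}{N - (d_1-d_2+1)}$, as claimed.

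The main obstacle is the relative-entropy-rate computation: unlike the i.i.d.\ MAB setting of \cite{Kaufmann16a}, here the per-step divergence is not constant but is driven by a non-stationary discovery process, and one must recognize that its long-run average is dictated entirely by the post-absorption collision law $\log\frac{N}{N'}$. The accompanying technical care is twofold: (i) choosing the alternative so that $\mathbb{P}_{D'} \ll \mathbb{P}_D$ while still flipping the mode, which pins down $d'_1 = d_2 - 1$; and (ii) verifying that the pre-absorption transient contributes only $O(1)$ and is therefore rate-negligible.
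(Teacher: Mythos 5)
Your proposal is correct, and its skeleton is the same as the paper's: the identical confusing instance $D'$ with $d_1' = d_2-1$ (whence $N' = N-(d_1-d_2+1)$), a change of measure computing $\mathrm{KL}(\mathbb{P}_{D'}\,\|\,\mathbb{P}_{D})$, and the data-processing/consistency endgame (the paper runs the $\epsilon$-argument on $D\bigl(\mathrm{Ber}(\mathbb{P}_{D'}(\hat h^*=1))\,\|\,\mathrm{Ber}(\mathbb{P}_{D}(\hat h^*=1))\bigr)$, which is equivalent to your $\mathrm{kl}(a,b)\geq a\log(1/b)-\log 2$ bound). Where you genuinely diverge is the KL evaluation. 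The paper observes that for \emph{every} feasible transition of the distinct-counts chain the one-step log-likelihood ratio satisfies $\log\bigl(q_{D'}(s,s')/q_{D}(s,s')\bigr)\leq\log(N/N')$ — self-transitions and discoveries in communities $j\neq 1$ have ratio exactly $N/N'$, and discoveries in community~1 pick up an additional nonpositive term $\log\frac{d_2-1-s_1}{d_1-s_1}$ — so the pathwise LLR is at most $t\log(N/N')$ and $\mathrm{KL}(\mathbb{P}_{D'}\|\mathbb{P}_{D})\leq t\log(N/N')$ non-asymptotically, with no transient analysis at all. Your route instead computes the exact asymptotic rate via absorption: correct, and it buys something the paper's bound does not state, namely that the KL rate \emph{equals} $\log(N/N')$ for this pair (so this particular change of measure cannot yield a better constant); the cost is the extra bookkeeping of showing the pre-absorption phase is $O(E_{D'}[\tau])$ — which is in fact superfluous here, since the paper's pathwise observation shows the one-step conditional KL is $\leq\log(N/N')$ uniformly even before absorption. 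Two further remarks: your absolute-continuity argument pinning down the shrink-community-1 choice ($d_j'\leq d_j$ forced, $d_1'=d_2-1$ minimal) is a nice explicit justification that the paper leaves implicit; and the paper additionally invokes sufficiency of $(S_j(t))_j$ to restrict attention to algorithms driven by the distinct counts, whereas your computation works directly on the observation process — both are legitimate, and your per-step outcome probabilities remain valid (with the same $\log(N/N')$ conclusion) even at the finer pseudo-identity level.
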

Note that Theorem~\ref{theorem:mixed_identity_lb} implies that the DSM algorithm is nearly decay-rate optimal; the small discrepancy between the decay rate under DSM and that in the lower bound ($(d_1 - d_2)$ replaced by $(d_1-d_2+1)$) stems from the discreteness of the space of alternative instances in our change of measure argument. 
The proof of this Theorem can be found in~\ref{sec:proof_mixed_identity_lb}.
\ignore{
\begin{proof} This proof is similar in spirit to the proof of \cite[Theorem 1]{Moulos19}. Consider an instance $D = (d_1, d_2, \ldots, d_m).$ First, we note that since  
$(S_j(t)),\ 1\leq j \leq m)$ is a sufficient statistic for~$D,$ it suffices to restrict attention to (consistent) algorithms whose output depends only on the vector $(S_j(t),\ 1 \leq j \leq m).$ Given this restriction, we track the temporal evolution of the vector $S(k) = (S_j(k),\ 1 \leq j \leq m),$ where $S_j(k)$ is the number of distinct individuals from community~$j$ seen in the first~$k$ oracle queries.
This evolution can be modeled as an absorbing Markov chain over state space $\prod_{j=1}^m \{0,1,\cdots d_i\},$ with $S(0) = (0,0,\cdots,0).$
%
\ignore{
\begin{figure}[!ht]
	\centering
	\includegraphics[width=0.8\textwidth]{vector_markov.png}
	\caption{Markov Chain with state vectors}
	\label{figure:vector_markov_chain}
	\centering
\end{figure}
}
Next, let us write down the transition probabilities $q_{D}(s,s')$ for each state pair $(s,s').$ Note that from state~$s,$ the chain can transition to the states $s + e_j$ for $1 \leq j \leq m,$ where the vector~$e_j$ has 1 in the $j$th position and 0 elsewhere, or remain in state~$s.$ Moreover, $q_D(s,s+e_j) = {\color{red}$(d_j-s_j)/N,$} and $q_D(s,s) = \frac{\sum_{j=1}^m s_j}{N}.$ \ignore{\sj{Do we need the + here? I think just $d_j-s_j$ works fine, right?} \jk{I agree.}}


Recall that by assumption, community $1$ is the largest community for the instance $D.$ Let us consider an alternate instance $D' = (d_1', d_2', \ldots, d_m')$ such that $d_1' = d_2 - 1$, $d_j' = d_j \ \forall j \neq 1$, and $N' = N - d_1 + d_2 - 1$. Note that the community mode under the alternate instance $D'$ is different from that under the original instance $D$. 
Thus, for state $s$ that is feasible under both $D$ and $D'$, 
$$\log\left(\frac{q_{D'}(s,s)}{q_{D}(s,s)}\right) = \log\left(\frac{N}{N-d_{1}+d_{2}-1}\right).$$ Similarly, for state pair $(s,s_e_j)$ that is feasible under both $D$ and $D'$,
\begin{align*}
    \log\left(\frac{q_{D'}(s,s+e_j)}{q_{D}(s,s+e_j)}\right) &= \log\left(\frac{N}{N-d_{1}+d_{2}-1}\right), j \neq 1, \\
    \log\left(\frac{q_{D'}(s,s+e_1)}{q_{D}(s,s+e_1)}\right) &= \log\left(\frac{N(d_2-1-s_1)}{(N-d_{1}+d_{2}-1)(d_1-s_1)}\right) = \log\left(\frac{N}{N-d_{1}+d_{2}-1}\right) + \log\left(\frac{d_2-1-s_1}{d_1-s_1}\right).
\end{align*}
Therefore, for any state pair $(s,s')$ such that $q_D(s,s'),q_{D'}(s,s') > 0,$ we have 
\begin{equation}
    \label{Eq:LLUB}
    \log\left(\frac{q_{D'}(s,s')}{q_{D}(s,s')}\right) \leq \log\left(\frac{N}{N-d_{1}+d_{2}-1}\right).
\end{equation}

Next, let $\mathbb{P}_{D}, \mathbb{P}_{D'}$ denote the probability measures induced by the algorithm under consideration under the instances $D$ and $D',$ respectively. 
Then, given a state evolution sequence $(S(1),\cdots,S(t))$, the log-likelihood ratio is given by 
\begin{align*}
\log\frac{\mathbb{P}_{D'}(S(1),\cdots,S(t)) }{\mathbb{P}_{D}(S(1),\cdots,S(t))} = \sum_{s,s'} N(s,s',t)\log\left(\frac{q_{D'}(s,s')}{q_{D}(s,s')}\right),
\end{align*}
where $N(s,s',t)$ represents the number of times the transition from state $s$ to state $s$ occurs over the course of $t$ queries. Combining with \eqref{Eq:LLUB}, we get
\begin{align*}
\log\frac{\mathbb{P}_{D'}(S(1),\cdots,S(t)) }{\mathbb{P}_{D}(S(1),\cdots,S(t))} \leq t \log\left(\frac{N}{N-d_{1}+d_{2}-1}\right),
\end{align*}
which implies
\begin{equation}
\label{Eq:KL-UB}
  D(\mathbb{P}_{D'}||\mathbb{P}_{D}) =   E_{D'}\left[\log\frac{\mathbb{P}_{D'}(S(1),\cdots,S(t))}{\mathbb{P}_{D}(S(1),\cdots,S(t))}\right] \leq t \log\left(\frac{N}{N-d_{1}+d_{2}-1}\right),
\end{equation}
where $D(\cdot || \cdot)$ denotes the Kullback-Leibler divergence. 
On the other hand, since the algorithm produces an estimate $\hat{h}^*$ of the community mode based solely on $S(t)$, we have from the data-processing inequality (see \cite{ITBook}) that
\begin{equation}
\label{Eq:DataProc}
D(\mathbb{P}_{D'}||\mathbb{P}_{D}) \ge D\bigl(Ber(\mathbb{P}_{D'}(\hat{h}^* = 1))||Ber(\mathbb{P}_{D}(\hat{h}^* = 1))\bigr),
\end{equation}
where $Ber(x)$ denotes the Bernoulli distribution with parameter $x \in (0, 1)$. Recall that the community mode under $D$ is community~$1$, while it is community~2 under $D'$. Then from the definition of consistent algorithms, for every $\epsilon > 0,$  $\exists$ $t_0(\epsilon)$ such that for $t \geq t_0(\epsilon), \mathbb{P}_{D'}(\hat{h}^* = 1) \leq \epsilon \leq \mathbb{P}_{D}(\hat{h}^* = 1)$. Thus, we have 
\begin{align*}
     &D(Ber(\mathbb{P}_{D'}(\hat{h}^* = 1))||Ber(\mathbb{P}_{D}(\hat{h}^* = 1))) \geq D(Ber(\epsilon)||Ber(\mathbb{P}_{D}(\hat{h}^* = 1))) \\ &\quad \geq \epsilon\log\left(\frac{\epsilon}{\mathbb{P}_{D}(\hat{h}^* = 1)}\right) + (1-\epsilon)\log\left(\frac{1-\epsilon}{\mathbb{P}_{D}(\hat{h}^* \neq 1)}\right) \geq \epsilon\log(\epsilon) + (1-\epsilon)\log\left(\frac{1-\epsilon}{\mathbb{P}_{D}(\hat{h}^* \neq 1)}\right).
\end{align*}
Using $\epsilon \rightarrow 0$ and $\mathbb{P}_{D}(\hat{h}^* \neq 1) = P_e(D)$, we have \ignore{\sj{Should we be writing the below statement? I thought we only wanted to make the infimum statement as $\epsilon \rightarrow 0$}}
\begin{align*}
    D(Ber(\mathbb{P}_{D'}(\hat{h}^* = 1))||Ber(\mathbb{P}_{D}(\hat{h}^* = 1))) \geq -\log(P_e(D)).
\end{align*}
Finally, combining with \eqref{Eq:KL-UB} and \eqref{Eq:DataProc}, we have that
\begin{gather*}
    \liminf_{t \ra \infty} \frac{\log(P_e(D))}{t} \geq - \log\left(\frac{N}{N-(d_1-d_2+1)}\right).
\end{gather*}
%
\ignore{\jk{Have made a pass through this proof. Pl. re-check carefully.}}
\end{proof}}

\remove{Given the constraint on our algorithm, we can model the communities as a Markov process, where each state denotes the number of distinct individuals seen from the communities so far. Every state will contain a vector of the number of distinct individuals seen per community, i.e $(s_1, s_2, ..., s_m)$ where $s_i$ is the number of distinct individuals seen from community $i$ so far. For two settings $v, v'$, we consider the general log likelihood for such a Markov chain:
\begin{align*}
\log\frac{\mathbb{P}_{v'}}{\mathbb{P}_{v}} = \sum_{x}\log\left(\frac{q_{v'}(x)}{q_{v}(x)}\right) + \sum_{x,y} N(x,y,0,t)\log\left(\frac{P_{v'}(x,y)}{P_{v}(x,y)}\right)
\end{align*}
Where $\mathbb{P}_{v}$ is the probability distribution of the output our algorithm $\mathcal{A}$ outputs under distribution $v$, $P_{v}(x,y)$ is the probability of going from state $S_x$ to state $S_y$ in the markov chain of distribution $v$, and $q_{v_a}(x)$ is the probability of the state $x$ being the initial state (i.e, $q_{v}$ represents the distribution of initial states in the markov chain). We note that $q_{v} = q_{v'}$ and both are $1$ for the state $x = (0, 0, ..., 0)$ and $0$ for all other states. $N(x, y, 0, t)$ represents the number of times the transition from state $S_x$ to state $S_y$ is taken from time $0$ to $t$. An example diagram of a markov chain with $2$ communities is shown in \ref{figure:vector_markov_chain}.\\\\
\begin{figure}[!ht]
	\centering
	\includegraphics[width=0.8\textwidth]{vector_markov.png}
	\caption{Markov Chain with state vectors}
	\label{figure:vector_markov_chain}
	\centering
\end{figure}
Let us write down the transition probabilities for each state transition. Suppose we are in state $x = (s_1, s_2, ..., s_m)$. Let $x_i = (s_1, s_2, ..., s_i+1, ..., s_m)$. Therefore, we can only go from state $x$ to itself or to state $x_i$.\\\\
The probability of state $x$ performing a self-transition in $v$ is $\frac{\sum_{i}s_i}{N}$, and the probability of going from $x$ to $x_i$ is $\frac{d_i-s_i}{N}$. In $v'$, $N' = N - d_1 + d_2 - 1$ and $d_1' = d_2 - 1$. Thus, we have that, in state $x$, if it exists in both $v$ and $v'$, and if the corresponding $x_i$ exists,
\begin{gather*}
    \log\left(\frac{P_{v'}(x,x)}{P_{v}(x,x)}\right) = \log\left(\frac{N}{N-d_{1}+d_{2}-1}\right) \\ 
    \log\left(\frac{P_{v'}(x,x_i)}{P_{v}(x,x_i)}\right) = \log\left(\frac{N}{N-d_{1}+d_{2}-1}\right), i \neq 1 \\
    \log\left(\frac{P_{v'}(x,x_i)}{P_{v}(x,x_i)}\right) = \log\left(\frac{N(d_2-1-s_1)}{(N-d_{1}+d_{2}-1)(d_1-s_1)}\right) = \log\left(\frac{N}{N-d_{1}+d_{2}-1}\right) + \log\left(\frac{d_2-1-s_1}{d_1-s_1}\right), i = 1
\end{gather*}
Therefore, we have that for any pair of state-next state pairs $x, y$, $\log\left(\frac{P_{v'}(x,y)}{P_{v}(x,y)}\right) \leq \log\left(\frac{N}{N-d_{1}+d_{2}-1}\right)$. Thus, we get
\begin{gather*}
    E\left[\log\frac{\mathbb{P}_{v'}}{\mathbb{P}_{v}}\right] \leq \log\left(\frac{N}{N-d_{1}+d_{2}-1}\right)\sum_{x, y}E[N(x, y, 0, t)]
\end{gather*}
We note that $E[N(x, y, 0, t)]$ summed across all state-next state pairs is exactly equal to $t$. Thus,we have 
\begin{align*}
    E\left[\log\frac{\mathbb{P}_{v'}}{\mathbb{P}_{v}}\right] \leq t\log\left(\frac{N}{N-d_1+d_2-1}\right)
\end{align*}
We note that the data processing inequality tells us that 
\begin{align*}
    d\left(\mathbb{P}_{v'}(\lambda) || \mathbb{P}_{v}(\lambda)\right) \leq D\left(\mathbb{P}_{v'} || \mathbb{P}_{v}\right), \forall \lambda \in F_t, 
\end{align*}
where $d(p||q)$ represents the binary cross entropy. L $\lambda$ denotes the event $i^* = 1$, where $i^*$ is the community output by our algorithm $\mathcal{A}$. Thus, $P_v(\lambda) = 1 - P_e(D)$. Since algorithm $\mathcal{A}$ is considered to be correct on both $v, v'$, for every $\epsilon > 0$  $\exists$ $t_0(\epsilon)$ such that $\forall t \geq t_0(\epsilon), \mathbb{P}_{v'}(\lambda) \leq \epsilon \leq \mathbb{P}_{v}(\lambda)$. Thus, we have 
\begin{align*}
     d\left(\mathbb{P}_{D'}(\lambda) || \mathbb{P}_{D}(\lambda)\right) \geq d(\epsilon || \mathbb{P}_{D}(\lambda)) \\ \geq \epsilon\log\left(\frac{\epsilon}{\mathbb{P}_{D}(\lambda)}\right) + (1-\epsilon)\log\left(\frac{1-\epsilon}{\mathbb{P}_{D}(\lambda)}\right) \\ \geq \epsilon\log(\epsilon) + (1-\epsilon)\log\left(\frac{1-\epsilon}{\mathbb{P}_{D}(\lambda)}\right)
\end{align*}
Using $\epsilon \rightarrow 0$ and $\mathbb{P}_{v}(\lambda) = P_e(D)$, 
\begin{align*}
    d\left(\mathbb{P}_{D'}(\lambda) || \mathbb{P}_{D}(\lambda)\right) \geq -\log(P_e(D))
\end{align*}
Thus, we have 
\begin{align*}
    -\log(P_e(v)) \leq d\left(\mathbb{P}_{D'}(\lambda) || \mathbb{P}_{D}(\lambda)\right) \leq D\left(\mathbb{P}_{D'} || \mathbb{P}_{D}\right) \leq t\log\left(\frac{N}{N-d_1+d_2-1}\right) 
    \\ \implies P_e(D) \geq \exp\left(-t\log\left(\frac{N}{N-d_1+d_2-1}\right)\right) \geq \exp\left(-\frac{t(d_1-d_2+1)}{N-d_1+d_2-1}\right) 
\end{align*}
}
\section{Separated Community Setting}
\label{sec:separatedcomm}

In this section, we consider the \emph{separated community} setting, where each box contains a single and unique community (so that $b = m$). 
Compared to the mixed community setting considered in Section~\ref{sec:mixedcomm}, this setting represents the opposite end of the spectrum with respect to sampling selectivity on part of the agent---the agent can now choose exactly which community to sample from at any time. Note that identity-less sampling is not meaningful in the separated community setting, since the agent can only gauge the size of a community by observing `collisions,' which occur when the same individual is sampled again.

At a high level, the separated community setting has connections with the (fixed budget) multi-armed bandit (MAB) problem, with boxes/communities corresponding to arms. However, the reward structure in the separated community setting is different from that in a classical MAB problem; indeed, whether or not a sample taken from any community represents a collision depends on past samples from that community. Nevertheless, we show that tools from the MAB literature can still be adapted to design near-optimal algorithms for estimating the largest community in our setting. 

Throughout this section, we denote the size of the community in the $b$th box by $d_b$, dropping the redundant second index since there is only one community in each box. Thus, an instance can be defined by the vector $D = (d_1,d_2,\cdots,d_b).$ WLOG, we order the communities such that $d_1 > d_2 \geq d_3 ... \geq d_b$.

We begin by considering a simple approach, where at each decision epoch, the agent queries a pair of samples from any chosen community, and checks whether or not a collision has occurred, i.e., the same individual has been sampled both times. Since the event of such a (pairwise, consecutive) collision is independent of past samples, and its probability is inversely proportional to the size of the community, this provides a direct mapping to the MAB setting, allowing off-the-shelf MAB algorithms to be applied.\footnote{Note that this approach only looks for `immediate' collisions and does not track collisions across the entire observation history.} However, we find that this approach, which has been used before in the literature (for example, see \cite{chen2018community} for an application of this approach to community exploration), is sub-optimal. Next, we propose and analyse an algorithm that tracks the number of distinct individuals seen from each community, and performs a successive elimination of communities until one `winner' remains. We show that this approach is near-optimal, by comparing its performance to an information-theoretic lower bound. 

\subsection{Algorithms}

We begin by describing the successive rejects (SR) algorithm for fixed-budget MABs, proposed in~\cite{Aud10} for best arm identification. The SR algorithm is known to be near-optimal in this setting. Our algorithms for the estimation of the largest community, which borrow the sampling framework of the SR algorithm, are described next.

\noindent {\bf Successive rejects algorithm:} Consider an MAB problem with $b$ arms. The class of successive rejects (SR) algorithms is parameterized by natural numbers~$K_1,K_2, \cdots,K_{b-1},$ satisfying $0 =: K_0 \leq K_1 \leq K_2 \leq \cdots \leq K_{b-1},$ and $\sum_{j=1}^{b-2} K_j + 2 K_{b-1} \leq t,$ where $t$ denotes the budget/horizon. The algorithm proceeds in $b-1$ phases, with one arm being rejected from further consideration at the end of each phase. Specifically, in Phase~$r,$ the $b-r+1$ surviving arms are each pulled $K_r-K_{r-1}$ times. At the end of this round, the worst performing\footnote{In the classical setting where the best arm is defined as the one with the greatest mean reward, the worst performing arm would be the one with the smallest empirical mean estimate.} surviving arm, based on the $K_r$ samples seen so far, is rejected. The output of the algorithm is the arm that survives rejection at the end of Phase~$b-1.$ The original SR algorithm proposed in \cite{Aud10} used $K_r \propto \frac{t-b}{b-r+1},$ so that \begin{equation}
\label{eq:SR_roundlengths}
    K_r = \left\lceil\frac{1}{\overline{log}(b)}\frac{t-b}{b-r+1}\right\rceil,
\end{equation} where $\overline{log}(b) = \frac{1}{2}+\sum_{i=2}^{b}\frac{1}{i}$. Other SR variants, including \emph{uniform exploration} ($K_r = \floor{t/b}$ for $1 \leq r \leq b-1$) and \emph{successive halving} (see \cite{Karnin13}) have also been considered in the literature. In the remainder of this paper, when we refer to the SR algorithm, we mean the specific algorithm proposed in \cite{Aud10}, with phases defined via \eqref{eq:SR_roundlengths}. 
%
\begin{algorithm}[t]
\caption{Consecutive-collision SR algorithm}\label{alg:collisionsaudibert}
\begin{algorithmic}[1]
\State Set $\mathcal{B} = [b]$ \Comment{Set of surviving boxes}
\State Set $K_0 = 0$, $K_r=\ceil{\frac{1}{\overline{\log}(b)}\frac{t/2-b}{b-r+1}} \quad (1 \leq r \leq b-1$)
\For{$r = 1, 2, ..b-1$}
\State For each box in $\mathcal{B},$ perform $(K_r - K_{r-1})$ sample pairs
\State Set $C_{i}^r$ as number of consecutive (within disjoint sample pairs) collisions in box~$i \in \mathcal{B}$
\State $\mathcal{B} = \mathcal{B} \setminus \{\argmax_{i \in \mathcal{B}} C_{i}^r\}$ \hspace{2 cm} (ties broken randomly) 
\EndFor
\State \textbf{Return} $\hat{h}^*$ = lone surviving box in $\mathcal{B}$  
\end{algorithmic}
\end{algorithm}

\noindent {\bf Consecutive-collision SR algorithm:} In this algorithm, we map the largest community identification problem to an MAB best arm identification problem. Each community is treated as an arm, and an arm pull consists of two samples drawn from that community. The reward is binary, being~1 if the arm pull does not result in a collision, and~0 if it does. Thus, the mean reward associated with arm (community)~$i$ equals $1-\frac{1}{d_i},$ so that the best arm (the one with the highest mean reward) corresponds to the largest community. Note that since each arm pull corresponds to~2 samples, the budget of the MAB reformulation equals~$t/2.$ On this MAB reformulation, we apply the SR algorithm of \cite{Aud10} to identify the largest community; this is formalized as Algorithm~\ref{alg:collisionsaudibert}. Adapting the proof of \cite[Theorem 2]{Aud10} for our setting yields the following upper bound on the probability of error under the Consecutive-collision SR (CC-SR) algorithm.
\begin{theorem}
\label{theorem:collision_audibert}
In the separated community setting, for any instance~$D,$ the Consecutive-collision SR (CC-SR) algorithm given in Algorithm~\ref{alg:collisionsaudibert} has a probability of error that is upper bounded as
\begin{align*}
    P_e(D) \leq \frac{b(b-1)}{2}\mathrm{exp}\left(-\frac{(t/2-b)}{4\overline{log}(b){H}^c(D)}\right),
\end{align*}
where $\Delta_{i} = \frac{1}{d_i}-\frac{1}{d_1}$, and ${H}^c(D) = \underset{i\in [2:b]}{max}\frac{i\Delta_{i}^{-2}}{d_i}$.
\end{theorem}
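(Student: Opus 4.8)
The plan is to run the successive-rejects analysis of \cite{Aud10} through the MAB reduction described above, but to replace its Hoeffding-based concentration step with a sharper, variance-aware tail bound that exploits the smallness of the collision probabilities; this is precisely what produces the extra $1/d_i$ factor in $H^c(D)$ relative to the classical complexity $\max_i i\Delta_i^{-2}$. Writing $p_i = 1/d_i$ for the collision probability of box~$i$ (so arm~$i$ has mean reward $1-p_i$ and $\Delta_i = p_i - p_1$), I would first bound $P_e(D) \le \sum_{r=1}^{b-1} P(E_r)$, where $E_r$ is the event that the largest community (box~1) is rejected at the end of phase~$r$. The combinatorial heart of the SR argument is unchanged: on $E_r$ there are $b-r+1$ surviving boxes, box~1 has the largest empirical collision count among them, and since only $r-1$ boxes have been eliminated so far, at least one surviving competitor has index $\ge b-r+1$. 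Hence, letting $\hat c_i$ denote the empirical collision rate of box~$i$ over its first $K_r$ pulls,
\begin{equation*}
P(E_r) \le \sum_{j=b-r+1}^{b} P\!\left(\hat c_j \le \hat c_1\right).
\end{equation*}

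The hard part, and the only genuine departure from \cite{Aud10}, is bounding $P(\hat c_j \le \hat c_1)$. Here I would \emph{not} use Hoeffding (which sees only the range of the reward, yields $\exp(-K_r\Delta_j^2/2)$, and hence would give the weaker complexity $\max_i i\Delta_i^{-2}$). Instead, writing $K_r \hat c_1 \sim \mathrm{Bin}(K_r,p_1)$ and $K_r \hat c_j \sim \mathrm{Bin}(K_r,p_j)$ independently, I would apply a Chernoff bound to the difference: for any $\lambda \ge 0$,
\begin{equation*}
P(\hat c_j \le \hat c_1) \le \left(1+p_1(e^{\lambda}-1)\right)^{K_r}\left(1+p_j(e^{-\lambda}-1)\right)^{K_r} \le \exp\!\left(K_r\big[p_1(e^{\lambda}-1)+p_j(e^{-\lambda}-1)\big]\right),
\end{equation*}
using $1+x \le e^x$. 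Optimizing over $\lambda$ (the minimizer is $e^{\lambda}=\sqrt{p_j/p_1}$) collapses the exponent to the Hellinger-type rate $-K_r(\sqrt{p_j}-\sqrt{p_1})^2$. Finally, since $p_1 \le p_j$,
\begin{equation*}
(\sqrt{p_j}-\sqrt{p_1})^2 = \frac{(p_j-p_1)^2}{(\sqrt{p_j}+\sqrt{p_1})^2} \ge \frac{\Delta_j^2}{4p_j} = \frac{d_j\Delta_j^2}{4},
\end{equation*}
so that $P(\hat c_j \le \hat c_1) \le \exp(-K_r d_j\Delta_j^2/4)$. The constant $1/4$ here is exactly the one appearing in the theorem, so I expect no slack to be lost at this step.

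It remains to assemble the pieces, which is routine. I would first verify that $d_j\Delta_j^2 = (d_1-d_j)^2/(d_1^2 d_j)$ is increasing in $j$, so that within the sum the worst (largest) term is the one at $j=b-r+1$, giving $P(E_r) \le r\exp(-K_r d_{b-r+1}\Delta_{b-r+1}^2/4)$. Substituting the phase lengths $K_r \ge \frac{1}{\overline{\log}(b)}\frac{t/2-b}{b-r+1}$ and reindexing via $i = b-r+1$ turns the exponent into $-\frac{t/2-b}{4\overline{\log}(b)}\cdot\frac{d_i\Delta_i^2}{i}$, and the definition $H^c(D)=\max_{i}\frac{i}{d_i\Delta_i^2}$ gives $\frac{d_i\Delta_i^2}{i}\ge 1/H^c(D)$ uniformly. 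Summing over the $b-1$ phases, the prefactors contribute $\sum_{r=1}^{b-1} r = b(b-1)/2$, yielding the claimed bound. The main obstacle throughout is the concentration step: recognizing that the correct tail rate is the squared-Hellinger gap $(\sqrt{p_j}-\sqrt{p_1})^2$ rather than $\Delta_j^2$, and that this is exactly what upgrades the classical $\max_i i\Delta_i^{-2}$ to $H^c(D)$.
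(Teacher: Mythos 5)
Your proposal is correct and follows essentially the same route as the paper's proof: the same union bound over phases combined with the pigeonhole observation that some box of index $\geq b-r+1$ survives, a Chernoff bound on the difference of the two independent binomial collision counts with the optimal tilt $e^{\lambda}=\sqrt{p_j/p_1}$, the same coarsening $(\sqrt{p_j}-\sqrt{p_1})^2 \geq d_j\Delta_j^2/4$, and the same bookkeeping with $K_r$ and $H^c(D)$. The only cosmetic difference is that you apply $1+x\leq e^x$ before optimizing (yielding the Hellinger-type exponent directly), whereas the paper optimizes the exact product MGF, obtaining the tilt $e^{\lambda}=\sqrt{(d_j-1)/(d_1-1)}$ and the rate $\bigl(\sqrt{d_1-1}-\sqrt{d_j-1}\bigr)^2/(d_1 d_j)$, before coarsening to the identical $d_j\Delta_j^2/4$.
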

The proof of Theorem~\ref{theorem:collision_audibert}, which uses the Chernoff bound to concentrate the number of consecutive collisions from each community, can be found in~\ref{sec:proof_collision_audibert}.

\noindent {\bf Distinct Samples SR algorithm:} We now present an algorithm that ranks communities by the number of distinct individuals seen. Note that this involves tracking collisions across the entire observation history of each community. Specifically, we use the same sampling strategy as the SR algorithm, and at the end of each phase, eliminate from further consideration that community which has produced the least number of distinct individuals so far.\footnote{Note however that in the original SR algorithm for MABs, the cumulative reward from each arm has i.i.d. increments. In the present setting however, the cumulative number of distinct individuals seen from any community does not have i.i.d. increments.} This algorithm, which we refer to as the Distinct Samples SR (DS-SR) algorithm, is stated formally as Algorithm~\ref{alg:distinctsamplesaudibert}.

\begin{algorithm}[t]
\caption{Distinct Samples SR algorithm (separated community setting)}\label{alg:distinctsamplesaudibert}
\begin{algorithmic}[1]
\State Set $\mathcal{B} = [b]$ \Comment{Set of surviving boxes}
\State Set $K_0 = 0$, $K_r=\ceil{\frac{1}{\overline{\log}(b)}\frac{t-b}{b-r+1}} \quad (1 \leq r \leq b-1$)
\For{$r = 1, 2, ..b-1$}
\State Sample each box in $\mathcal{B},$ $K_r - K_{r-1}$ times
\State Set $S_{i}^r$ as number of distinct individuals seen so far from box~$i \in \mathcal{B}$
\State $\mathcal{B} = \mathcal{B} \setminus \{\argmin_{i \in \mathcal{B}} S_{i}^r\}$ \hspace{2 cm} (ties broken randomly) 
\EndFor
\State Set $\hat{b}$ as lone surviving box in $\mathcal{B}$
\State \textbf{Return} $\hat{h}^* =$ lone surviving box in $\mathcal{B}$ 
\end{algorithmic}
\end{algorithm}

\begin{theorem}
\label{thm:distinctsamplesAudibert}
In the separated community setting, for any instance~$D$ the Distinct Samples SR (DS-SR) algorithm given in Algorithm~\ref{alg:distinctsamplesaudibert} has a probability of error that is upper bounded as
\begin{align*}
    P_e(D) \leq \left(\sum_{r=1}^{b-1}\binom{d_1}{d_{b-r+1}}\right)\exp\left(-\frac{(t-b)}{\overline{log}(b)H(D)}\right),
\end{align*}
where $H(D) = \underset{i\in [2:b]}{max}\frac{i}{\log(d_1)-\log(d_i)}$.
\end{theorem}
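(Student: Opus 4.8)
The plan is to adapt the successive-rejects error analysis of \cite{Aud10} to the distinct-samples statistic, replacing the two-sided empirical-mean concentration used there by a one-sided coupon-collector bound of the type already employed in \eqref{eq:community_coupon_collector}. First I would union-bound the error over the $b-1$ phases: the output differs from community~$1$ only if community~$1$ is rejected in some phase~$r$, so $P_e(D) \leq \sum_{r=1}^{b-1} P(\text{community }1\text{ rejected in phase }r)$.

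The combinatorial core, borrowed from \cite{Aud10}, pins down which competitor matters in each phase. Entering phase~$r$, exactly $r-1$ communities have been rejected, so among the $r$ smallest communities $\{b-r+1,\dots,b\}$ at least one, say community~$j$, still survives; necessarily $d_j \leq d_{b-r+1}$. If community~$1$ is rejected in phase~$r$ it must attain the smallest distinct-sample count among the survivors, hence $S_1^r \leq S_j^r \leq d_j \leq d_{b-r+1}$, where the middle inequality is simply that a distinct count cannot exceed its community size. Thus the phase-$r$ rejection event is contained in the purely one-sided event $\{S_1^r \leq d_{b-r+1}\}$ about community~$1$ alone. Note also that community~$1$, having survived phases $1,\dots,r-1$, has received exactly $K_r$ samples by the end of phase~$r$.

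Next I would bound $P(S_1^r \le d_{b-r+1})$ by a coupon-collector union bound. Conditioned on community~$1$ surviving to phase~$r$, its $K_r$ samples are i.i.d.\ uniform over its $d_1$ individuals; if at most $d_{b-r+1}$ distinct individuals are seen, then some set of $d_1-d_{b-r+1}$ individuals is never sampled. There are $\binom{d_1}{d_{b-r+1}}$ such sets and each is avoided with probability $(d_{b-r+1}/d_1)^{K_r}$, giving $P(S_1^r \le d_{b-r+1}) \le \binom{d_1}{d_{b-r+1}} \exp\!\big(-K_r(\log d_1 - \log d_{b-r+1})\big)$. Substituting $K_r \geq \frac{1}{\overline{\log}(b)}\frac{t-b}{b-r+1}$ and reindexing $i = b-r+1 \in \{2,\dots,b\}$, the exponent becomes $\frac{t-b}{\overline{\log}(b)}\cdot\frac{\log d_1 - \log d_i}{i} \geq \frac{t-b}{\overline{\log}(b)H(D)}$ by the definition $H(D) = \max_{i\in[2:b]} \frac{i}{\log d_1 - \log d_i}$. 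Summing this uniform bound over the $b-1$ phases yields exactly the claimed $\big(\sum_{r=1}^{b-1}\binom{d_1}{d_{b-r+1}}\big)\exp(-(t-b)/(\overline{\log}(b)H(D)))$, with feasibility of the schedule inherited from \cite{Aud10} (the total sample count equals $\sum_{r=1}^{b-2}K_r + 2K_{b-1} \le t$).

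The step I expect to be the main obstacle -- or rather the idea that makes the proof go through cleanly -- is the reduction to the one-sided event $\{S_1^r \le d_{b-r+1}\}$. Since $S_i^r$ does not have i.i.d.\ increments (as the paper emphasises), a direct two-sided concentration comparison of $S_1^r$ against $S_j^r$ would be delicate; recognising that the competitor's count is automatically capped by its community size $d_j$ lets me avoid concentrating $S_j^r$ altogether and control only the lower tail of community~$1$'s count, for which the elementary coupon-collector union bound suffices. A minor point to verify is that $d_1 - d_{b-r+1} \ge 1$ throughout (true since $b-r+1 \ge 2$ and $d_1 > d_2 \ge d_{b-r+1}$), so each binomial coefficient and unsampled set is well defined.
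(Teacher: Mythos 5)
Your proposal is correct and follows essentially the same route as the paper's own proof: the same phase-by-phase decomposition, the same observation that at least one of the $r$ smallest communities survives phase~$r$ (so rejection of community~1 forces $S_1^r \leq d_{b-r+1}$), the same coupon-collector union bound $\binom{d_1}{d_{b-r+1}}\bigl(1-\frac{d_1-d_{b-r+1}}{d_1}\bigr)^{K_r}$, and the same substitution of $K_r$ and $H(D)$. Your closing remarks about the one-sided reduction avoiding concentration of $S_j^r$ are a fair articulation of why the paper's argument works, but they do not constitute a different method.
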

\begin{proof} We begin by noting that $P_e(D) = \sum_r P^r_e(D)$, where $P^r_e(D)$ is the probability that box~$1$ is eliminated in phase~$r$. Since at least one of the $r$ smallest communities is guaranteed to survive in phase~$r,$ box $1$ will not be eliminated in the $r$th phase if the agent has seen at least $d_{b-r+1}+1$ distinct samples from box~1. Thus, $P^r_e(D)$ is upper bounded by the probability of the event that there exists a subset of $d_1 - d_{b-r+1}$ individuals in box $1$, such that none of them are sampled in the $K_r$ queries made until the end of the $r$th phase. Therefore,
\begin{align*}
    P^r_e(D) &\leq \binom{d_1}{d_{b-r+1}} \left(1 - \frac{(d_1-d_{b-r+1})}{d_1}\right)^{K_r} 
    \\ \implies P^r_e(D) &\leq \binom{d_1}{d_{b-r+1}}\exp\left(-K_r\log\left(\frac{d_1}{d_{b-r+1}}\right)\right)
\end{align*}
Summing across $r$, we get that 
\begin{equation}
\label{Eqn:CC-SR}
    P_e(D) \leq \sum_{r=1}^{b-1}\binom{d_1}{d_{b-r+1}}\exp\left(-K_r\log\left(\frac{d_1}{d_{b-r+1}}\right)\right)
\end{equation}
Using $K_r = \ceil{\frac{1}{\overline{log}(b)}\frac{t-b}{b-r+1}}$ for $1 \leq r \leq b-1$, we note that 
\begin{align*}
    K_r\log\left(\frac{d_1}{d_{b-r+1}}\right) \geq \frac{(t-b)\log\left(\frac{d_1}{d_{b-r+1}}\right)}{\overline{log}(b)(b-r+1)} \geq \frac{(t-b)}{\overline{log}(b)H(D)}.
\end{align*}
\ignore{\nk{Shouldn't the first ineq be an equality and in the second term there should be a floor.} \jk{Have fixed this issue, I think. The algorithm description still has floors where their should be ceilings.}}
Combining with \eqref{Eqn:CC-SR}, we have 
\begin{align*}
    P_e(D) \leq \left(\sum_{r=1}^{b-1}\binom{d_1}{d_{b-r+1}}\right)\exp\left(-\frac{(t-b)}{\overline{log}(b)H(D)}\right) .
\end{align*}
\end{proof}

Having analysed the CC-SR algorithm and the DS-SR algorithms, it is instructive to compare the exponential decay rates corresponding to the upper bounds of the probability of error under these algorithms. From Theorems~\ref{theorem:collision_audibert} and~\ref{thm:distinctsamplesAudibert}, this boils down to comparing the instance-dependent parameters $H^c(D)$ and $H(D)$ respectively, which encode the `hardness' of the underlying instance. Note that the values of these parameters are larger for instances where the size of the largest community is close to the sizes of the competing communities, and hence it would be harder for an algorithm to correctly estimate the mode. Consequently, the achievable probability of error from Theorems~\ref{theorem:collision_audibert} and~\ref{thm:distinctsamplesAudibert} is  also higher for harder instances.
Furthermore, note that 
\begin{align*}
H^c(D) &= \underset{i\in [2:b]}{max} \frac{id_1^2d_i}{(d_1-d_i)^2} \stackrel{(a)}{>} \underset{i\in[2:b]}{max}\frac{d_1d_i}{d_1-d_i}  \frac{i}{\log(d_1)-\log(d_i)} \\ 
&\geq \frac{d_1d_b}{d_1-d_b} \underset{i\in[2:b]}{max} \frac{i}{\log(d_1)-\log(d_i)} =\frac{d_1d_b}{d_1-d_b}H(D).
\end{align*}
\ignore{\nk{I am not sure about the last line above..can you carefully check it..also the `max'' disappears in between}
\jk{Have added a step to make this clearer. Pl. re-check.}}

Here, the bound $(a)$ follows from the fact that $\log(x) > \frac{x-1}{x}$ for $x > 1.$ Since $H^c(D) > \frac{d_1d_b}{d_1-d_b}H(D),$ this means that $H^c(D) \gg H(D)$ for most instances of interest, which suggests that the DS-SR algorithm has a far superior performance as compared to the CC-SR algorithm (at least for large budget values). Our simulation results in Section~\ref{sec:experimental_results} are also consistent with this observation.

Next, we establish the near optimality of the Distinct Samples SR algorithm via an information theoretic lower bound.

\subsection{Lower Bounds}
While the decay rate in the upper bound of the DS-SR algorithm was expressed in terms of the hardness parameter $H(D),$ the information theoretic lower bound for the separated community setting is expressed in terms of a related hardness parameter $H_2(D) := \sum_{i=2}^{b}\frac{1}{\log(d_1)-\log(d_i)}.$ $H(D)$ and $H_2(D)$ are comparable upto a logarithmic (in the number of boxes) factor, as shown below.
\ignore{
We define another quantity $H_2(D)$ which is at most a ${\color{red}2\overline{log}(b)}$ factor away from the quantity $H(D)$ defined in Theorem~\ref{thm:distinctsamplesAudibert}, and state our lower bounds in terms of this. First, we prove this fact. \ignore{\sj{Converted theorem to lemma as discussed}}
}
\begin{lemma} \label{theorem:separatedupperlowerH} 
\ignore{Let us define $H(D) = \underset{i\in[2:b]}{max}\frac{i}{\log(d_1)-\log(d_i)}$ and $H_2(D) = \sum_{i=2}^{b}\frac{1}{\log(d_1)-\log(d_i)}$, where $d_i$ are ordered in a non-increasing fashion, and $d_1$ is the unique largest quantity: $d_1 > d_2 \geq d_3... \geq d_b$. Then, the following is true:
}
$\frac{H(D)}{2} \leq H_2(D) \leq \overline{log}(b)H(D).$
\end{lemma}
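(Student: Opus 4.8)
The plan is to reduce both inequalities to elementary manipulations after introducing the right notation. For $i \in [2:b]$, define the log-gap $\Delta_i := \log(d_1) - \log(d_i)$, so that $H(D) = \max_{i \in [2:b]} \frac{i}{\Delta_i}$ and $H_2(D) = \sum_{i=2}^{b} \frac{1}{\Delta_i}$. The crucial structural observation, which I would record first, is that the ordering $d_1 > d_2 \geq \cdots \geq d_b$ forces $0 < \Delta_2 \leq \Delta_3 \leq \cdots \leq \Delta_b$, so the sequence $\frac{1}{\Delta_i}$ is positive and non-increasing in $i$. Everything else follows from this monotonicity.

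For the upper bound $H_2(D) \leq \overline{log}(b)\, H(D)$, I would argue term by term: from the definition of the maximum, $\frac{i}{\Delta_i} \leq H(D)$ for every $i \in [2:b]$, hence $\frac{1}{\Delta_i} \leq \frac{H(D)}{i}$. Summing over $i$ gives $H_2(D) \leq H(D) \sum_{i=2}^{b} \frac{1}{i} \leq \overline{log}(b)\, H(D)$, where the last inequality merely uses $\sum_{i=2}^{b} \frac{1}{i} \leq \frac{1}{2} + \sum_{i=2}^{b} \frac{1}{i} = \overline{log}(b)$.

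For the lower bound $\frac{H(D)}{2} \leq H_2(D)$, let $i^*$ be an index attaining the maximum, so that $H(D) = \frac{i^*}{\Delta_{i^*}}$ with $i^* \geq 2$. Using $\frac{1}{\Delta_i} \geq \frac{1}{\Delta_{i^*}}$ for all $i \leq i^*$ (by monotonicity), I would lower bound $H_2(D) \geq \sum_{i=2}^{i^*} \frac{1}{\Delta_i} \geq (i^*-1)\frac{1}{\Delta_{i^*}}$. Finally, since $i^* \geq 2$ implies $i^* - 1 \geq \frac{i^*}{2}$, this yields $H_2(D) \geq \frac{i^*}{2\Delta_{i^*}} = \frac{H(D)}{2}$.

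There is no serious obstacle here: the argument is the log-gap analogue of the standard relation between the complexity measures $H_1$ and $H_2$ in the successive-rejects analysis of \cite{Aud10}. The only two points that need care are the monotonicity of $\frac{1}{\Delta_i}$ (which is exactly where the assumed ordering of the $d_i$ enters) and the elementary inequality $i^* - 1 \geq \frac{i^*}{2}$, valid precisely because the index range begins at $2$; both are immediate once the notation is in place.
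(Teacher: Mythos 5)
Your proof is correct and follows essentially the same route as the paper's: the upper bound via the term-by-term estimate $\frac{1}{\Delta_i} \leq \frac{H(D)}{i}$ summed against $\sum_{i=2}^b \frac{1}{i} \leq \overline{log}(b)$, and the lower bound via monotonicity of $\frac{1}{\Delta_i}$, truncating the sum at an index $j$ and using $j-1 \geq \frac{j}{2}$. The only cosmetic difference is that you evaluate the lower-bound chain at the maximizing index $i^*$ directly, whereas the paper states it for every $j \in [2:b]$ and then takes the maximum; these are the same argument.
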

The proof of Lemma~\ref{theorem:separatedupperlowerH} can be found in~\ref{sec:other_lemmas}.


We now state a lower bound on the probability of error in the separate community setting for any algorithm in a natural  algorithm class. The lower bound is non-asymptotic and is expressed in terms of the maximum of the probability of error under the original instance and an alternate instance which has a lower `hardness'. This is similar in form  to the corresponding lower bound for the standard multi-armed bandit setting in~\cite[Theorem 16]{Kaufmann16a}.  
\begin{theorem} \label{theorem:lower_bound_separated}
In the separated community setting, consider any algorithm that only uses the number of distinct samples from each community (box) to decide which box to sample from at each instant as well as to make the final estimate of the community mode. For any instance $D$, there exists an alternate instance $D^{[a]}, a \in [2:b]$, such that $H_2(D^{[a]}) \leq H_2(D)$ and 
\begin{align*}
    max\left(P_e(D),P_e(D^{[a]})\right) \geq \frac{1}{4}\exp\left(-\frac{3t}{H_2(D)}\right).
\end{align*}
In the alternate instance $D^{[a]}$, only the size of community $a$ is changed from $d_a$ to $\ceil{\frac{d_1^2}{d_a}}$. 
\end{theorem}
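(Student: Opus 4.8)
The plan is to follow the change-of-measure (transportation) recipe of \cite{Kaufmann16a}, adapted to the absorbing Markov chain that governs the distinct-sample counts. Fix $a \in [2:b]$ and recall that $D^{[a]}$ replaces $d_a$ by $d_a' := \lceil d_1^2/d_a\rceil$. Since $a \geq 2$ forces $d_a < d_1$, we have $d_a' \geq d_1^2/d_a > d_1$, so community $a$ becomes the unique mode under $D^{[a]}$; thus outputting community $1$ is correct under $D$ but an error under $D^{[a]}$. Because the algorithm samples and decides using only the distinct-count vector, and $D$ and $D^{[a]}$ agree on every box except $a$, the box-selection probabilities cancel in the likelihood ratio, so $\log(d\mathbb{P}_D/d\mathbb{P}_{D^{[a]}})$ accumulates contributions only from the steps that sample box $a$. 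First I would write down the transition kernel of the single-box distinct-count chain (from count $s$, a fresh individual arrives w.p. $(d-s)/d$ and a collision w.p. $s/d$) and compute the one-step KL at state $s$, namely $\log(d_a'/d_a) + \frac{d_a-s}{d_a}\log\frac{d_a-s}{d_a'-s}$. The second term is nonpositive, so each step contributes at most $\log(d_a'/d_a)$; summing over the (random) number $N_a$ of samples drawn from box $a$ and taking $\mathbb{E}_D$ gives the per-box bound $\mathrm{KL}(\mathbb{P}_D\,\|\,\mathbb{P}_{D^{[a]}}) \le \mathbb{E}_D[N_a]\,\log(d_a'/d_a)$.

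The next step converts $\log(d_a'/d_a)$ into the hardness scale $\log(d_1/d_a)$. The clean (non-integer) choice $d_a' = d_1^2/d_a$ would give exactly $\log(d_a'/d_a) = 2\log(d_1/d_a)$ via the log-scale reflection $\log d_a' = 2\log d_1 - \log d_a$; the ceiling inflates this slightly, and I would absorb it by proving the elementary inequality $\lceil d_1^2/d_a\rceil \le d_1^3/d_a^2$ (equivalent to $d_a^2 \le d_1^2(d_1-d_a)$, which holds since $d_1 - d_a \ge 1$). This yields $\log(d_a'/d_a) \le 3\log(d_1/d_a)$, and hence $\mathrm{KL}(\mathbb{P}_D\,\|\,\mathbb{P}_{D^{[a]}}) \le 3\,\mathbb{E}_D[N_a]\,\log(d_1/d_a)$ for every $a$.

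Now I would select the alternate. Writing $g_a = \log(d_1/d_a)$ and using $\sum_{a=2}^b \mathbb{E}_D[N_a] \le t$, a one-line pigeonhole argument gives $\min_{a\in[2:b]} \mathbb{E}_D[N_a]\,g_a \le t/\sum_{a=2}^b g_a^{-1} = t/H_2(D)$ (if every product exceeded $t/H_2(D)$, the sum $\sum_a \mathbb{E}_D[N_a]$ would exceed $t$). For the minimizing index $a$ the per-box bound gives $\mathrm{KL}(\mathbb{P}_D\,\|\,\mathbb{P}_{D^{[a]}}) \le 3t/H_2(D)$. To turn this into an error bound with no consistency hypothesis, I would invoke the Bretagnolle--Huber inequality with the event $E = \{\hat{h}^* = 1\}$: since $\mathbb{P}_D(E^c) = P_e(D)$ and $\mathbb{P}_{D^{[a]}}(E) \le P_e(D^{[a]})$, we obtain $P_e(D) + P_e(D^{[a]}) \ge \frac{1}{2}\exp(-\mathrm{KL}(\mathbb{P}_D\,\|\,\mathbb{P}_{D^{[a]}})) \ge \frac{1}{2}\exp(-3t/H_2(D))$, whence $\max(P_e(D),P_e(D^{[a]})) \ge \frac{1}{4}\exp(-3t/H_2(D))$.

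It remains to verify $H_2(D^{[a]}) \le H_2(D)$, which I would do by a term-by-term comparison under the natural bijection between the index sets $\{i : i \neq 1\}$ and $\{i : i \neq a\}$ that swaps $1 \leftrightarrow a$ and fixes the rest: the reflection identity gives $\log(d_a'/d_1) \ge \log(d_1/d_a)$, so the ``$1$ versus $a$'' term shrinks, and since $d_a' > d_1$, every shared term $1/\log(d_a'/d_i)$ is smaller than $1/\log(d_1/d_i)$. The main obstacle, relative to the i.i.d.\ MAB template, is the second step: justifying that the likelihood ratio localizes to box $a$ and bounding the Markovian one-step KL by the clean quantity $\log(d_a'/d_a)$ (the nonpositive correction term is exactly what makes this work). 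Obtaining the constant $3$, rather than a looser factor, then hinges on the sharp ceiling estimate in the third step.
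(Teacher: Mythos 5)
Your proposal is correct and follows essentially the same route as the paper's own proof: the same alternate instance $d_a' = \ceil{d_1^2/d_a}$, the same localization of the likelihood ratio to the Markov chain of box $a$ (dropping the nonpositive increment-transition term and bounding the self-transition contribution by $3\log(d_1/d_a)$ via the identical ceiling estimate $\ceil{d_1^2/d_a} \le d_1^3/d_a^2$), the same pigeonhole selection of $a$ from $\sum_a E_D[N_a] \le t$, the same Bretagnolle--Huber/Kaufmann Lemma~20 conversion to the $\frac{1}{4}\exp(-3t/H_2(D))$ bound, and the same term-by-term comparison (with the $1 \leftrightarrow a$ swap) for $H_2(D^{[a]}) \le H_2(D)$. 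The only cosmetic difference is that you package the per-step bound as an exact one-step KL with a nonpositive correction, while the paper bounds the two log-ratios separately; these are equivalent.
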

The proof of Theorem~\ref{theorem:lower_bound_separated} uses the following lemma.
\begin{lemma} \label{lemma:expected_samples_ub_separated}
For any algorithm $\mathcal{A}$ and instance $D$, there exists a box (community) $a \in [2:b]$ such that $E_D[N_a(t)] \leq \frac{t}{(\log(d_1)-\log(d_a))H_2(D)}$, where $N_a(t)$ denotes the number of times box $a$ is sampled in $t$ queries under $\mathcal{A}$.
\end{lemma}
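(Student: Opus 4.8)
The plan is to prove the lemma by a simple averaging (pigeonhole) argument over the boxes in $[2:b]$, exploiting the fact that $H_2(D)$ is defined as the sum over exactly those boxes of the per-box weights $1/(\log(d_1)-\log(d_i))$ that appear in the target bound. No change-of-measure or concentration machinery is needed here; this lemma is purely a counting statement about the sampling budget, and it will later feed into the change-of-measure step of Theorem~\ref{theorem:lower_bound_separated}.

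First I would record the budget constraint. Since each of the $t$ queries samples exactly one box, we have $\sum_{i=1}^b N_i(t) = t$ deterministically, and in particular $\sum_{a=2}^b N_a(t) = t - N_1(t) \leq t$. Taking expectations under the instance $D$ gives the key inequality $\sum_{a=2}^b E_D[N_a(t)] \leq t$.

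Next I would argue by contradiction. Suppose that for \emph{every} $a \in [2:b]$ the claimed bound fails, i.e.\ $E_D[N_a(t)] > \frac{t}{(\log(d_1)-\log(d_a))H_2(D)}$. Summing these strict inequalities over $a \in [2:b]$ and pulling out the common factor $t/H_2(D)$ yields
\[
\sum_{a=2}^b E_D[N_a(t)] > \frac{t}{H_2(D)} \sum_{a=2}^b \frac{1}{\log(d_1)-\log(d_a)} = \frac{t}{H_2(D)} \cdot H_2(D) = t,
\]
where the middle equality is exactly the definition of $H_2(D)$. This contradicts the budget bound $\sum_{a=2}^b E_D[N_a(t)] \leq t$ established above, so at least one $a \in [2:b]$ must satisfy $E_D[N_a(t)] \leq \frac{t}{(\log(d_1)-\log(d_a))H_2(D)}$, which is the desired conclusion.

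The argument has essentially no substantive obstacle; the only point requiring care is that the budget constraint is the \emph{inequality} $\sum_{a=2}^b N_a(t) \leq t$ rather than an equality, because box~$1$ also consumes some of the $t$ samples. This slack is harmless and in fact is what makes the telescoping of the per-box weights against the total budget close cleanly: the target bounds are calibrated so that they sum to exactly $t$, and the strict-inequality assumption then overshoots the available budget. The structural content is simply that $H_2(D)$ was defined precisely as the sum over $a \in [2:b]$ of the reciprocal log-gaps, so the averaging identifies a box that is sampled no more than its fair share.
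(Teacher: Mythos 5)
Your proof is correct and is essentially identical to the paper's: the paper also assumes no such box exists, sums the strict inequalities over $a \in [2:b]$, uses the definition of $H_2(D)$ to evaluate the sum to exactly $t$, and derives a contradiction with the budget constraint. Your write-up merely makes explicit the step $\sum_{a=2}^b E_D[N_a(t)] \leq t$ that the paper leaves implicit.
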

\begin{proof} 
Assume there exists no such community. Then,
\begin{align*}
\sum_{a=2}^{b}E_D[N_a(t)] > \sum_{a=2}^{b}\frac{t}{(\log(d_1)-\log(d_a))H_2(D)} = t,
\end{align*}
which is a contradiction.
\end{proof}
\begin{proof}[Proof of Theorem~\ref{theorem:lower_bound_separated}]
Consider an algorithm $\mathcal{A}$ which bases all decisions only on the number of distinct individuals seen from each community (box). 
In this case, $S_j,$ the number of distinct samples from box (community)~$j$ evolves as a Markov chain over $[0:d_j],$ with transitions occurring each time the box is pulled. From state~$s,$ this chain transitions to (the same) state~$s$ with probability~$q_D^j(s,s) = \frac{s}{d_j},$ and to state~$s+1$ with probability $q_D^j(s,s) = \frac{d_j - s}{d_j}.$

\ignore{
However, in this case, to model the state evolution denoted by $\{X_j\}_{j=1}^{t}$,  we have one Markov chain corresponding to each box and when a particular box is sampled at a given instant, a transition is recorded in the corresponding Markov chain. The underlying state space is given by $\mathcal{S} = \mathcal{S}_1 \times \mathcal{S}_2 \times \ldots \times \mathcal{S}_b$, where $\mathcal{S}_j$ denotes the state space for the Markov chain corresponding to box $j$ which is given by $\mathcal{S}_j = \{0,1,\ldots,d_j\}$.  

Next, let us write down the transition probabilities $q_{D}^k(u,v)$ for each possible state transition $(u,v)$ in the chain corresponding to box $k$. Suppose we are in state $x^k \in \{0,1,\ldots,d_k\}$. Then after the next query where box $k$ is sampled, we can only go from state $x^k$ to itself or to state $x^k+1$. We have a self-transition at state $x^k$ if in the next query, the sampled individual has already been seen before, and thus the probability $q_{D}^k(x^k,x^k)$ of a self-transition at state $x^k$ is given by $x^k / d_k$. On the other hand, we have the probability $q_{D}^k(x^k,x^k +1)$ of going from $x^k$ to $x^k +1$ is given by $(d_k-x^k)^+/d_k$.
} 

Now, from Lemma~\ref{lemma:expected_samples_ub_separated} there exist a box $a \in [2:b]$ which satisfies $E[N_a(t)] \leq \frac{t}{(\log(d_1)-\log(d_a))H_2(D)}$. Consider the alternate instance $D^{[a]} = (d_1', d_2', \ldots, d_b')$ mentioned in the statement of the theorem, wherein $d_a' = \ceil{d_1^2 / d_a}$, $d_j' = d_j \ \forall j \neq a$. Note that the community mode under the alternate instance $D'$ is~$a,$ different from that under the original instance $D$. Furthermore, note that under the alternate instance $D^{[a]}$ the transition probabilities $q_{D^{[a]}}^k(u,v)$ remain the same for all $k \neq a$. For box~$a,$
\begin{align}
\label{Eqn:LLAlt1}
    \log\left(\frac{q_{D}^a(s,s)}{q_{D^{[a]}}^a(s,s)}\right) &= \log\left(\frac{\ceil{d_1^2/d_a}}{d_a}\right) \leq \log\left(\frac{d_1^3}{d_a^3}\right), \\ 
    \log\left(\frac{q_{D}^a(s,s+1)}{q_{D^{[a]}}^a(s,s+1)}\right) &= \log\left(\frac{1-s/d_a}{1-s/\ceil{d_1^2/d_a}}\right).
    \label{Eqn:LLAlt2}
\end{align}
Here, \eqref{Eqn:LLAlt1} because
\begin{gather*}
    \ceil{d_1^2/d_a} \leq 1+d_1^2/d_a = (d_a+d_1^2)/d_a \imp
    \frac{\ceil{d_1^2/d_a}}{d_a} \leq \frac{d_a+d_1^2}{d_a^2} = \frac{d_a^2+d_1^2d_a}{d_a^3} \leq \frac{d_1^3}{d_a^3}.
\end{gather*}

\ignore{
For the Markov chain corresponding to box $a$, we have $q_{D^{[a]}}^a(x,x) = x / d_a'$ and $q_{D^{[a]}}^a(x,x +1) = {\color{red}(d_a'-x)/d_a'}$ \sj{removed the $^*$} for each $x \in \{0,1,\ldots,d_a\}$. Thus, for state $x$, if it exists under both $D$ and $D^{[a]}$, we have
}
\ignore{
\begin{gather}
    \log\left(\frac{q_{D}^a(x,x)}{q_{D^{[a]}}^a(x,x)}\right) = \log\left(\frac{d_1^2}{d_a^2}\right) \\ 
    \log\left(\frac{q_{D}^a(x,x+1)}{q_{D^{[a]}}^a(x,x+1)}\right) = \color{red}\log\left(\frac{(d_a-x)d_1^2}{d_a(d_1^2-xd_a)}\right)
\end{gather}}
\ignore{
{\color{red}
Note that
\begin{gather*}
    \ceil{d_1^2/d_a} \leq 1+d_1^2/d_a = (d_a+d_1^2)/d_a \\
    \frac{\ceil{d_1^2/d_a}}{d_a} \leq \frac{d_a+d_1^2}{d_a^2} = \frac{d_a^2+d_1^2d_a}{d_a^3} \leq \frac{d_1^3}{d_a^3}
\end{gather*}}
}

Next, let $\mathbb{P}_{D}, \mathbb{P}_{D^{[a]}}$ denote the probability measures induced by the algorithm under consideration by the instances $D,$ $D^{[a]},$ respectively. Then, given a trajectory $x = (a(1),s(1),\cdots,a(t),s(t)),$ where $a(k)$ denotes the box pulled on the $k$th query (action), and $s(k) = (s_j(k),\ j \in [b])$ is the vector of states corresponding to the arms after the $k$th query, the log-likelihood ratio is given by 
\begin{align*}
\log\frac{\mathbb{P}_{D}(x)}{\mathbb{P}_{D^{[a]}}(x)} = \sum_k \sum_{u,v} N_k(u,v,0,t)\log\left(\frac{q_{D}^k(u,v)}{q_{D^{[a]}}^k(u,v)}\right),
\end{align*}
where $N_k(u, v, 0, t)$ represents the number of times the transition from state $u$ to state $v$ happens in the Markov chain corresponding to box $k$ over the $t$ queries. Combining with \eqref{Eqn:LLAlt1}, \eqref{Eqn:LLAlt2}, we get
\begin{align*}
  D(\mathbb{P}_{D}||\mathbb{P}_{D^{[a]}}) &=   E_{D}\left[\log\frac{\mathbb{P}_{D}(x)}{\mathbb{P}_{D^{[a]}}(x)}\right] \\
  &\leq \sum_s E_{D}[N_a(s,s,0,t)]\log\left(\frac{d_1^3}{d_a^3}\right) + E_{D}[N_a(s,s+1,0,t)]\log\left(\frac{1-s/d_a}{1-s/\ceil{d_1^2/d_a}}\right)
\end{align*}
where $D(\cdot || \cdot)$ denotes the Kullback-Leibler divergence. Note that 
\begin{gather*}
    \ceil{d_1^2/d_a} > d_a \implies
    \frac{1-s/d_a}{1-s/\ceil{d_1^2/d_a}} \leq 1 \implies \log\left(\frac{1-s/d_a}{1-s/\ceil{d_1^2/d_a}}\right) \leq 0 .
\end{gather*}
Thus, we have 
\begin{gather*}
    D(\mathbb{P}_{D}||\mathbb{P}_{D^{[a]}}) \leq \sum_s E_{D}[N_a(s,s,0,t)]\log\left(\frac{d_1^3}{d_a^3}\right) \leq E_{D}[N_a(t)]\log\left(\frac{d_1^3}{d_a^3}\right)
\end{gather*}
\newtext{Next, we use Lemma~20 from~\cite{Kaufmann16a} (alternatively, see Lemma~\ref{lemma:mixed_identityless_lb_proof_3} in \ref{sec:other_lemmas}) to get that} 
\begin{align*}
     max\left(P_e(D),P_e(D^{[a]})\right) \geq \frac{1}{4}\exp\left(-D(\mathbb{P}_{D}||\mathbb{P}_{D^{[a]}})\right) \geq \frac{1}{4}\exp\left(-E_D[N_a(t)]\log\left(\frac{d_1^3}{d_a^3}\right)\right),
\end{align*}
where $P_e(D)$ is the probability of error under instance $D$. Finally, we use the bound on $E_D[N_a(t)]$ from Lemma~\ref{lemma:expected_samples_ub_separated} to get 
\begin{align*}
    \max\left(P_e\left(D\right),P_e(D^{[a]})\right) \geq \frac{1}{4}\exp\left(-\frac{3t}{H_2(D)}\right).
\end{align*}

It now remains to show that $H_2(D^{[a]}) \leq H_2(D)$. This is equivalent to showing
\begin{gather*}
    \sum_{i \in [b], i \neq a} \frac{1}{\log(\ceil{\frac{d_1^2}{d_a}})-\log(d_i)} \leq \sum_{i \in [b], i \neq 1} \frac{1}{\log(d_1)-\log(d_i)}.
\end{gather*}
This condition follows from the following term-by-term comparisons:
\begin{align*}
    \frac{1}{\log(\ceil{\frac{d_1^2}{d_a}})-\log(d_i)} &\leq \frac{1}{\log(d_1)-\log(d_i)}\quad (i \neq 1, a) \\
    \frac{1}{\log(\ceil{\frac{d_1^2}{d_a}})-\log(d_1)} &\leq \frac{1}{\log(d_1)-\log(d_a)}
\end{align*}
\end{proof}

Comparing the upper and lower bounds on the probability of error for the separated community setting in Theorems~\ref{thm:distinctsamplesAudibert} and \ref{theorem:lower_bound_separated}, we see that the expressions for the decay rates differ (ignoring universal constants) in terms of $H(D)$ vs $H_2(D)$, which from Lemma~\ref{theorem:separatedupperlowerH}, are at most a factor of $\overline{log}(b)$ apart. In other words, the decay rate under DS-SR is optimal, upto a logarithmic (in the number of boxes) factor. This is similar to the optimality guarantees available in fixed-budget MAB setting (see \cite{Aud10,Kaufmann16a}). 
\section{Community-disjoint Box Setting}
\label{sec:boxcomm}

In this section, we consider an intermediate setting that generalizes both the mixed and separated community settings. Specifically, we consider the case where each community exists in exactly one box; i.e, all the members of a community~$j$ are present in the same box. (Though any box may contain multiple communities.) In this setting, which we refer to as the \emph{community-disjoint box setting,} we propose algorithms that combine elements from the algorithms presented before for the mixed and separated community settings. For a class of reasonable instances, we are also able to establish the near optimality of certain algorithms. Finally, we show that the algorithms presented in this section can be generalized to handle the most general model, where communities are arbitrarily spread across boxes.

 Under the community-disjoint box setting, each column of the instance matrix $D$ has exactly one non-zero entry. Without loss of generality, we assume that $d_{11}$ is the largest value in the matrix~$D$; hence, box~1 contains the largest community (also labeled~1). Also without loss of generality, we order boxes by the sizes of the largest communities in them; i.e, if $g_i, 1 \leq i \leq b$ is the size of the largest community in box $i$, then $d_{11} = $ $g_1 > g_2 \geq g_3 \geq ... \geq g_b$. Additionally, we define $c_i$ to be the largest \emph{competing} community in a box--that is, $c_i = g_i, i \neq 1$, and $c_1$ is the second largest community in the first box. We state our results in terms of $d_{11}$ and $(c_i,\ i \in [b]).$

\subsection{Algorithms}
\label{sec:box_algos}

The first algorithm we consider for this setting is a generalization of the Distinct Samples SR algorithm from Algorithm~\ref{alg:distinctsamplesaudibert}, where we now eliminate boxes successively. Specifically, the algorithm proceeds in $b-1$ phases; one box being eliminated from subsequent consideration in each of the phases. At the end of the final phase, the algorithm outputs the community that produced the largest number of distinct samples from the last surviving box. Since we have multiple communities in each box, our elimination criterion in each phase is based on the seemingly largest community in each surviving box. In particular, let $S_{ij}^r$ denote the number of distinct individuals encountered from community~$j$ in box~$i$ at the end of phase~$r.$ We eliminate, at the end of phase~$r,$ the (surviving) box that minimizes~$\max_{j} S_{ij}^r.$ This algorithm, which we continue to refer to as the Distinct Samples SR (DS-SR) algorithm (with some abuse of notation), is presented formally in Algorithm~\ref{alg:boxsamplesaudibert}.

\ignore{
\begin{algorithm}
\caption{Distinct Samples Box SR algorithm}\label{alg:boxsamplesaudibert}
\begin{algorithmic}[1]
\Procedure{DistinctSamplesAudibertBoxEstimator}{$t, m$}
\State Set $ind\_samples[i] = 0$ for all $i \in individuals$
\State $comm\_samples[c] = 0$ for all $c \in communities$
\State $max\_comm\_estimate[k] = 0$ for all $k \in boxes$
\State Set $surviving\_boxes = \{1, 2, ..., b\}$
\State $\overline{\log}(b+1) = \frac{1}{2} + \sum_{i=2}^{b+1}\frac{1}{i}$
\State Set $K_0 = 0$, $K_r = \ceil{\frac{1}{\overline{\log}(b+1)}\frac{t-b-1}{b-r+2}} 1 \leq r \leq b$
\For{$r = 1, 2, ..b-1$}
\For{$k \in surviving\_boxes$}
\State $max\_comm\_estimate[k] = 0$
\For{$k = 1, 2, ..(K_r-K_{r-1})$} \Comment{We are sampling $K_r-K_{r-1}$ times}
\State get sample $(i, c)$ where $i = individual, c = community$
\If{$ind\_samples[i] = 0$}\Comment{Has not been sampled before}
\State $ind\_samples[i] = ind\_samples[i] + 1$
\State $comm\_samples[c] = comm\_samples[c] + 1$
\State $max\_comm\_estimate[k] = \max(max\_comm\_estimate[k], comm\_samples[c])$
\EndIf
\EndFor
\EndFor
\State Let $w = \{k: max\_comm\_estimate[k] = min_{k' \in surviving\_boxes} max\_comm\_estimate[k] \}$
\State $w = w \cap surviving\_boxes$
\State g = random sample from $w$
\State Remove $g$ from $surviving\_boxes$ \Comment{Remove box with least distinct samples across communities}
\EndFor
\State s = only box in $surviving\_comm$
\State $\hat{h}^*$ = DistinctSampleFrequencyEstimator($K_b-K_{b-1}$) \Comment{Mixed Community in box $s$} \jk{Not clear}
\State \textbf{return} $\hat{h}^*$\Comment{The guess for largest community is $\hat{h}^*$} \jk{Can make our algorithm descriptions less detailed, IMO.}
\EndProcedure
\end{algorithmic}
\end{algorithm}
} 

\begin{algorithm}[t]
\caption{Distinct Samples SR algorithm (community-disjoint box setting)}\label{alg:boxsamplesaudibert}
\begin{algorithmic}[1]
\State Set $\mathcal{B} = [b]$ \Comment{Set of surviving boxes}
\State Set $K_0 = 0$, $K_r=\ceil{\frac{1}{\overline{\log}(b)}\frac{t-b}{b-r+1}} \quad (1 \leq r \leq b-1$)
\For{$r = 1, 2, ..b-1$}
\State Sample each box in $\mathcal{B},$ $K_r - K_{r-1}$ times
\State Set $S_{ij}^r$ as number of distinct individuals seen so far from community~$j$ in box~$i \in \mathcal{B}$
\State Set, for $i \in \mathcal{B},$ $f_i = \max_{j} S_{ij}^r$ 
\State $\mathcal{B} = \mathcal{B} \setminus \{\argmin_{i \in \mathcal{B}} f_i\}$ \hspace{2 cm} (ties broken randomly) 
\EndFor
\State Set $\hat{b}$ as lone surviving box in $\mathcal{B}$
\State \textbf{Return} $\hat{h}^* = \argmax_{j}  S_{\hat{b}j}^{(b-1)}$ \hspace{2 cm} (ties broken randomly) 
\end{algorithmic}
\end{algorithm}

\begin{theorem}
\label{Thm:UBBox}
In the community-disjoint box setting, for any instance $D$, the Distinct Samples SR (DS-SR) algorithm given in Algorithm~\ref{alg:boxsamplesaudibert} has a probability of error upper bounded as
\begin{equation}
    P_e(D) \leq \left(\sum_{i=2}^{b} \binom{d_{11}}{c_i}\right) \exp\left(-\frac{(t-b)}{\overline{log}(b)H^b(D)}\right) + \binom{d_{11}}{c_1}\exp\left(-\frac{(t-b)\log\left(\frac{N_1}{N_1-d_{11}+c_1}\right)}{2\overline{log}(b)}\right) ,
    \label{eq:DS-SR_box_upper_bound}
\end{equation}
where $H^b(D) = \underset{i\in [2:b]}\max \frac{i}{\log(N_1) - \log(N_1-d_{11}+c_i)}$. 
\end{theorem}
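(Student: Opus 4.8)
The plan is to split the error event $\{\hat{h}^* \neq 1\}$ into two contributions: (i) the event $E_{\mathrm{box}}$ that box~1 (which contains the largest community) is eliminated during one of the $b-1$ phases, and (ii) the event $E_{\mathrm{comm}}$ that box~1 survives as the lone box $\hat b = 1$ but the final within-box selection $\argmax_j S_{1j}^{(b-1)}$ returns a community other than~1. Thus $\mathbb{P}(\hat h^* \neq 1) \leq \mathbb{P}(E_{\mathrm{box}}) + \mathbb{P}(E_{\mathrm{comm}})$, and I would bound the two terms separately, matching the two summands of \eqref{eq:DS-SR_box_upper_bound}.

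For the first term I would adapt the successive-rejects argument from the proof of Theorem~\ref{thm:distinctsamplesAudibert}. Let $P^r_e$ be the probability that box~1 is eliminated in phase~$r$, so $\mathbb{P}(E_{\mathrm{box}}) = \sum_{r=1}^{b-1} P^r_e$. The key structural observation is that for any competing box $i \geq 2$ one has $f_i = \max_j S_{ij}^r \leq c_i$, since the number of distinct individuals seen from any community cannot exceed that community's size and $c_i$ is the largest community in box~$i$. By a pigeonhole argument, at the start of phase~$r$ only $r-1$ boxes have been eliminated, so at least one of the $r$ smallest boxes $\{b-r+1,\dots,b\}$ survives into phase~$r$; call it $i^\ast$, and note $f_{i^\ast} \leq c_{i^\ast} \leq c_{b-r+1}$. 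Hence if box~1 is eliminated in phase~$r$ (having by then accumulated $K_r$ samples), its statistic must satisfy $S_{11}^r \leq f_1 \leq f_{i^\ast} \leq c_{b-r+1}$. The event $\{S_{11}^r \leq c_{b-r+1}\}$ means some subset of $d_{11}-c_{b-r+1}$ individuals of community~1 is never sampled among the $K_r$ draws from box~1, so the coupon-collector union bound of Theorem~\ref{thm:distinctsamplesAudibert} gives $P^r_e \leq \binom{d_{11}}{c_{b-r+1}}\exp(-K_r \log(N_1/(N_1-d_{11}+c_{b-r+1})))$. Re-indexing via $i = b-r+1$ and using $K_r \geq \frac{1}{\overline{\log}(b)}\frac{t-b}{b-r+1}$ with $b-r+1=i$, the exponent is at least $\frac{(t-b)}{\overline{\log}(b)}\cdot\frac{\log(N_1/(N_1-d_{11}+c_i))}{i} \geq \frac{t-b}{\overline{\log}(b)H^b(D)}$ by the definition of $H^b(D)$, which yields $\mathbb{P}(E_{\mathrm{box}}) \leq (\sum_{i=2}^b \binom{d_{11}}{c_i})\exp(-\frac{t-b}{\overline{\log}(b)H^b(D)})$.

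For the second term I would observe that, on the event box~1 survives all phases, the within-box problem is exactly the mixed community setting of Section~\ref{sec:mixedcomm} restricted to box~1, with largest community of size $d_{11}$, second-largest of size $c_1$, box size $N_1$, and a budget of $K_{b-1}$ draws. An incorrect within-box selection forces $S_{11}^{(b-1)} \leq c_1$ (every competing community in box~1 has size at most $c_1$), so the coupon-collector bound \eqref{eq:community_coupon_collector} of Theorem~\ref{theorem:mixed_identity_ub} applies directly and gives $\mathbb{P}(E_{\mathrm{comm}}) \leq \binom{d_{11}}{c_1}\exp(-K_{b-1}\log(N_1/(N_1-d_{11}+c_1)))$. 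Plugging in $K_{b-1} \geq \frac{1}{\overline{\log}(b)}\frac{t-b}{2}$ (taking $r=b-1$ in the schedule) produces the second summand of \eqref{eq:DS-SR_box_upper_bound}, and adding the two bounds finishes the argument.

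The coupon-collector estimates themselves are routine, being inherited from the earlier theorems. The main obstacle — and the step I would treat most carefully — is justifying the reduction of each sub-event to a clean coupon-collector event on a \emph{fixed} number of i.i.d.\ box-1 draws, despite the adaptive, data-dependent number of samples box~1 actually receives. I would handle this by coupling box~1's draws to a single infinite i.i.d.\ sequence and noting that both \emph{box~1 eliminated in phase~$r$} and \emph{box~1 survives but mis-selects} are sub-events of the corresponding events on the first $K_r$ (resp.\ $K_{b-1}$) terms of that sequence, so the unconditional bounds remain valid upper bounds. A secondary point to verify is that using the inequality $f_1 \geq S_{11}^r$ rather than an equality is legitimate, which it is, since a larger $f_1$ only makes box~1 more likely to survive.
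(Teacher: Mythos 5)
Your proposal is correct and takes essentially the same route as the paper's proof: the identical decomposition into the events that box~1 is eliminated in phase $r$ (bounded via the pigeonhole observation and the coupon-collector estimate $\binom{d_{11}}{c_{b-r+1}}\exp\left(-K_r\log\left(\frac{N_1}{N_1-d_{11}+c_{b-r+1}}\right)\right)$, exactly as in Theorem~\ref{thm:distinctsamplesAudibert}) and the event that box~1 survives but the wrong community is selected (bounded by $\binom{d_{11}}{c_1}\exp\left(-K_{b-1}\log\left(\frac{N_1}{N_1-d_{11}+c_1}\right)\right)$), followed by substituting $K_r$ and invoking the definition of $H^b(D)$. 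The only difference is cosmetic: you spell out the coupling to a fixed-length i.i.d.\ box-1 sample sequence, a step the paper leaves implicit.
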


\ignore{
\begin{theorem}
\label{Thm:UBBox}
For any instance $D$, the Distinct Samples SR (DS-SR) algorithm given in Algorithm~\ref{alg:boxsamplesaudibert} has a probability of error upper bounded as
\begin{gather*}
    P_e(D) \leq \left(\sum_i \binom{d_{11}}{c_i}\right) \exp\left(-\frac{(t-b-1)}{\overline{log}(b+1)H^b(D)}\right),
\end{gather*}
where $H^b(D) = \underset{i\in[b]}\max \frac{i+1}{\log(N_1) - \log(N_1-d_{11}+c_i)}$.
\end{theorem}}

The upper bound on the probability of error under the DS-SR algorithm above is a sum of two terms. The first term in~\eqref{eq:DS-SR_box_upper_bound} bounds the probability of misidentifying the box containing the largest community, while the second term in~\eqref{eq:DS-SR_box_upper_bound} bounds the probability of misidentifying the largest community within the correct box (box~1). Not surprisingly, the second term is structurally similar to the bound~\eqref{eq:community_coupon_collector} we obtained in Theorem~\ref{theorem:mixed_identity_ub} for the mixed community setting (restricted to box~1). The proof of Theorem~\ref{Thm:UBBox} can be found in~\ref{sec:proof_ub_box}. 

The DS-SR algorithm works well in practice, particularly for large budget values. However, its performance can be sub-par for moderate budget values on certain types of instances; particularly instances where the largest community is contained within a very large box. In such cases, it can happen that $\Exp{S_{11}^r} < \Exp{S_{ij}^r}$ for another community~$j$ in a box~$i \neq 1,$ making it likely that box~1 gets eliminated early. We propose modified algorithms to resolve this issue, under the additional assumption that the box sizes are known a priori to the learning agent.\footnote{This is a natural assumption is several applications. For example, in the context of election polling, an agent might know a priori the total number of voters in each city/state.} The first modification replaces uniform exploration of boxes with a proportional exploration of the surviving boxes in each phase, resulting in a sampling process (within each phase) somewhat analogous to the mixed community setting considered in Section~\ref{sec:mixedcomm}. A second class of algorithms retains uniform box exploration, but normalizes $S_{ij}^r$ to reflect the size of each box (algorithms in this class differing with respect to the specific normalization performed). This latter class of algorithm can also be extended to the original setting where the box sizes are unknown, by replacing the box size by its maximum likelihood estimator.

We begin by describing our first modification of the DS-SR algorithm, which we refer to as the Distinct Samples Proportional SR (DS-PSR) algorithm. The DS-PSR algorithm apportions the budget across phases in the same manner as DS-SR, but the queries within each phase are distributed across surviving boxes in proportion to their sizes. Formally, this corresponds to the same description as Algorithm~\ref{alg:boxsamplesaudibert}, except that in Line~4, each box $i \in \mathcal{B}$ is sampled $T(\mathcal{B},r,i)$ times, where  $T(\mathcal{B},r,i) := \floor{\frac{N_i}{\sum_{k \in \mathcal{B}} N_k}(K_r-K_{r-1})(b-r+1)}.$ Experimentally, we find that DS-PSR performs very well. However, a tight characterization of the decay rate corresponding to the probability of error is challenging, since the number of queries available to each surviving box in phase~$r,$ for $1 < r \leq b-1,$ is a random quantity, that depends on the sequence of prior box eliminations.

Next, we describe the normalized variants of the DS-SR algorithm. The first, which we refer to as the Normalized Distinct Samples SR (NDS-SR) algorithm, is described by changing the definition of~$f_i$ in Line~6 of Algorithm~\ref{alg:boxsamplesaudibert} to \newtext{$$f_i^{\mathrm{NDS-SR}} = \max_j \frac{S_{ij}^r}{S_i^r} N_i,$$}  where $S_i^r$ denotes the number of distinct individuals seen from box~$i$ (across different communities) by the end of phase~$r.$ This normalization is justified as follows: $S_{ij}^r / S_i^r$ is an unbiased estimator of $d_{ij}/N_i,$ i.e., the fraction of box~$i$ that is comprised by community~$j.$ 



The final variant we propose, referred to as the Expectation-Normalized Distinct Samples SR (ENDS-SR) algorithm, uses the following alternative normalization of $f_i$ in Line~6 of Algorithm~\ref{alg:boxsamplesaudibert}: \newtext{$$f_i^{\mathrm{ENDS-SR}} = \max_j \frac{S_{ij}^r}{\Exp{S_i^r}} N_i.$$} This normalization has a similar justification: indeed, $\frac{S_{ij}^r}{\Exp{S_i^r}}$ is another (more tractable) unbiased estimator of $d_{ij}/N_i.$ 

Both NDS-SR and ENDS-SR perform quite well in practice. It is challenging to analytically bound the performance of NDS-SR, due to the difficulty in concentrating the fractions $S_{ij}^r / S_i^r.$ However, the probability of error under ENDS-SR admits an upper bound analogous to that under DS-SR (albeit more cumbersome). Interestingly, the exponential decay rate of the probability of error under ENDS-SR is identical to that under DS-SR.
\begin{theorem}
\label{Thm:ENDS-SR}
In the community-disjoint box setting, for any instance $D$, 
\begin{equation*}
    \limsup_{t \ra \infty} \frac{\log P_e(D,\text{\emph{ENDS-SR}},t)}{t} \leq - 
    \frac{1}{\overline{log}(b)} \min \left(\frac{1}{H^b(D)},\frac{1}{2} \log\left(\frac{N_1}{N_1-d_{11}+c_1}\right) \right).
\end{equation*}
\end{theorem}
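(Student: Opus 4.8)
The plan is to mirror the two-part structure of the DS-SR analysis behind Theorem~\ref{Thm:UBBox}: first bound the probability that box~1 (which contains the mode) is eliminated in some phase, and then bound the probability that, conditioned on box~1 surviving, the final step $\hat{h}^* = \argmax_j S_{\hat b j}^{(b-1)}$ returns the wrong community within box~1. Writing $P_e(D) \le \sum_{r=1}^{b-1} P(\text{box 1 eliminated in phase } r) + P(\text{box 1 survives but } \argmax_j S_{1j}^{(b-1)} \neq 1)$, it suffices to bound each group of terms by one of the two exponentials appearing in \eqref{eq:DS-SR_box_upper_bound}; the claimed rate then follows since $\limsup_t \tfrac1t \log(A e^{-\alpha t} + B e^{-\beta t}) = -\min(\alpha,\beta)$.

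For the box-elimination terms, I would first rewrite the ENDS-SR statistic in a coupon-collector-friendly form. Since a surviving box~$i$ is sampled exactly $K_r$ times by the end of phase~$r$, one has $\Exp{S_i^r} = N_i \rho_i^r$ with $\rho_i^r := 1 - (1-1/N_i)^{K_r}$, so that $f_i^{\mathrm{ENDS-SR}} = \max_j S_{ij}^r/\rho_i^r =: f_i^r$. The key structural observation is the deterministic cap $\max_j S_{ij}^r \le \max_j d_{ij} = c_i$ for every box $i \neq 1$, giving $f_i^r \le c_i/\rho_i^r$; this lets me avoid concentrating the competing statistics from above. Next I invoke the successive-rejects pigeonhole argument: if box~1 is eliminated in phase~$r$, then among the $b-r+1$ survivors (for which $f_1^r$ is minimal) at least one box $i$ has index $\ge b-r+1$ in the ordering $g_1 > g_2 \ge \cdots \ge g_b$, hence $c_i \le c_{b-r+1}$ and $f_1^r \le f_i^r \le c_{b-r+1}/\rho_i^r$. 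Using $f_1^r \ge S_{11}^r/\rho_1^r$ this forces $S_{11}^r \le c_{b-r+1}\,\rho_1^r/\rho_i^r$. Since $\rho_1^r,\rho_i^r \to 1$ as $K_r \to \infty$, for all $t$ large enough the finitely many ratios satisfy $\rho_1^r/\rho_i^r < 1 + 1/c_{b-r+1}$, so the integer-valued $S_{11}^r$ obeys $S_{11}^r \le c_{b-r+1}$. The same coupon-collector bound used in Theorem~\ref{thm:distinctsamplesAudibert} (the event that some subset of $d_{11}-c_{b-r+1}$ members of community~1 is never sampled) then gives $P(\text{box 1 eliminated in phase }r) \le \binom{d_{11}}{c_{b-r+1}}\exp\!\big(-K_r\log\tfrac{N_1}{N_1-d_{11}+c_{b-r+1}}\big)$. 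Substituting $K_r \ge \tfrac{1}{\overline{log}(b)}\tfrac{t-b}{b-r+1}$ together with $\tfrac{b-r+1}{\log(N_1)-\log(N_1-d_{11}+c_{b-r+1})} \le H^b(D)$ bounds each term by $\binom{d_{11}}{c_{b-r+1}}\exp\!\big(-\tfrac{t-b}{\overline{log}(b) H^b(D)}\big)$, and summing over $r$ reproduces the first term of \eqref{eq:DS-SR_box_upper_bound}.

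For the within-box term, conditioning on box~1 surviving all phases, box~1 has received $K_{b-1}$ i.i.d.\ uniform samples and the final decision reduces exactly to Distinct Samples Maximization restricted to box~1 (a mixed-community instance with population $N_1$, largest community $d_{11}$, runner-up $c_1$). Dropping the survival conditioning (box~1's samples are i.i.d.\ uniform regardless of which eliminations occur) and applying the coupon-collector bound \eqref{eq:community_coupon_collector} of Theorem~\ref{theorem:mixed_identity_ub} with budget $K_{b-1}$ yields $P(\text{wrong community in box 1}) \le \binom{d_{11}}{c_1}\exp\!\big(-K_{b-1}\log\tfrac{N_1}{N_1-d_{11}+c_1}\big)$; since $K_{b-1} \ge \tfrac{t-b}{2\,\overline{log}(b)}$, this is at most the second term of \eqref{eq:DS-SR_box_upper_bound}.

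The main obstacle, and the only genuine departure from the DS-SR proof, is the box-dependent normalization $\rho_i^r$: the inter-box comparison now hinges on the ratio $\rho_1^r/\rho_i^r$ rather than a clean integer threshold, and I must verify this ratio does not erode the exponent. I expect it to be harmless precisely because $\rho_i^r \to 1$ for every fixed instance as the horizon grows, so the rounding $\lfloor c_{b-r+1}\,\rho_1^r/\rho_i^r\rfloor = c_{b-r+1}$ holds for all large $t$; this is exactly why the result is phrased as an asymptotic $\limsup$ rather than a finite-$t$ inequality. A secondary point to handle carefully is that the deterministic cap $\max_j S_{ij}^r \le c_i$ is what removes any need to concentrate the normalized competing statistics from above (their distribution being more delicate than in the i.i.d.\ MAB setting), so the entire box-elimination analysis rests only on the lower tail of $S_{11}^r$.
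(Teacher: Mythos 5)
Your proposal is correct and takes essentially the same route as the paper's proof (\ref{sec:NEDS_decay_rate}): the deterministic cap $\max_j S_{ij}^r \leq c_i$ on competing boxes, the coupon-collector lower-tail bound on $S_{11}^r$, and the observation that the normalizers $1-(1-1/N_i)^{K_r}$ tend to $1$ so the effective elimination threshold converges to $c_{b-r+1}$, yielding the DS-SR decay rate in the limit. The only cosmetic difference is that the paper bounds all competitors uniformly via $c_i/\bigl(1-(1-1/N_m)^{K_r}\bigr)$ with $N_m = \max_i N_i$ and lets the resulting threshold $f_i(K) \to c_i$, whereas you exploit integrality of $S_{11}^r$ to snap the threshold to exactly $c_{b-r+1}$ for all large $t$ --- a slightly tidier way of executing the same limiting argument.
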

The proof of Theorem~\ref{Thm:ENDS-SR} can be found in~\ref{sec:NEDS_decay_rate}. The intuition behind Theorem~\ref{Thm:ENDS-SR} is that for large~$t,$ $\Exp{S_i^r} \approx N_i,$ so that $f_i^{\mathrm{ENDS-SR}} \approx S_{ij}^r,$ making the elimination criterion under ENDS-SR nearly identical to that under DS-SR.
\ignore{
\begin{proof} Let $P^i_e(D)$ denote the probability of the community mode being eliminated at the $i$th step; i.e, for $i \leq b-1, P^i_e(D)$ denotes the probability of removing box $1$ in phase~$i$ of SR, and $P^b_e(D)$ denotes the probability of choosing the wrong community from box~1 after this box survived the~$(b-1)$ SR phases. Then, we have
\begin{align*}
    P_e(D) &= \sum_{i=1}^{b-1} P^i_e(D) + P^b_e(D),\\
    P^i_e(D) &\leq \binom{d_{11}}{c_{b-i+1}}\exp\left(-K_i\log\left(\frac{N_1}{N_1-d_{11}+c_{b-i+1}}\right)\right) \quad (1 \leq i \leq b-1),\\
    P^b_e(D) &\leq \binom{d_{11}}{c_{1}}\exp\left(-K_{b-1}\log\left(\frac{N_1}{N_1-d_{11}+c_{1}}\right)\right),
\end{align*}
where the second and third statements are based on a coupon collector argument, similar to the one employed in the proof of Theorem~\ref{thm:distinctsamplesAudibert} for the separated community setting. The proof is now completed by substituting the values of $K_r,$ and using the definition of $H^b(D).$ 
%
\end{proof}}

\subsection{Lower Bounds}
We now derive information theoretic lower bounds on the probability of error in the community-disjoint box setting, and compare the decay rates suggested by the lower bounds to the decay rate under DS-SR.

Our first lower bound captures the complexity of simply identifying the largest community from within box~1.
\begin{theorem}
\label{Thm:BoxLB-MixedComm}
For any consistent algorithm, the probability of error corresponding to an instance~$D$ in the community-disjoint box setting is asymptotically bounded below as
$$\liminf_{t \ra \infty} \frac{P_e(D)}{t} \geq - \log\left(\frac{N_1}{N_1 - (d_{11} - c_1 + 1)} \right).$$
\end{theorem}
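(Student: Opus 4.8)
The plan is to mimic the change-of-measure argument used for Theorem~\ref{theorem:mixed_identity_lb} (the identity-based mixed community lower bound), but to localize the entire perturbation inside box~1, so that the lower bound reflects precisely the difficulty of distinguishing the largest community in box~1 from its nearest competitor in that box. First I would define the alternate instance $D'$ that agrees with $D$ everywhere except that community~1 (which lives in box~1 and has size $d_{11}$) is shrunk to size $c_1 - 1$, where $c_1$ is the size of the second-largest community in box~1. Since $d_{11}$ is the unique global maximum we have $c_1 < d_{11}$, so under $D'$ community~1 is strictly smaller than the box-1 competitor of size $c_1$ and is therefore no longer the community mode; moreover $D'$ remains a valid community-disjoint instance. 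The size of box~1 under $D'$ becomes $N_1' = N_1 - (d_{11} - c_1 + 1)$.

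Next I would track, for each box $i$, the Markov chain recording the vector of distinct-individual counts per community in that box, exactly as in the proof of Theorem~\ref{theorem:mixed_identity_lb}. Writing the likelihood of a full trajectory (box choices together with observations), the box-selection probabilities depend only on the common observed history and the fixed algorithm, hence cancel in the ratio $\mathbb{P}_{D'}/\mathbb{P}_{D}$; what survives is a product of the per-box transition probabilities. Because $D$ and $D'$ coincide on every box other than box~1, only transitions of the box-1 chain contribute to the log-likelihood ratio. For the box-1 chain I would record the per-transition ratios: the self-loop (collision) and every transition that reveals a \emph{new} individual from a community $j \neq 1$ each contribute exactly $\log\!\left(\tfrac{N_1}{N_1'}\right)$, while a transition revealing a new individual from community~1 contributes
\begin{equation*}
\log\left(\frac{N_1}{N_1'}\right) + \log\left(\frac{(c_1 - 1) - s_1}{d_{11} - s_1}\right) \le \log\left(\frac{N_1}{N_1'}\right),
\end{equation*}
where the correction term is non-positive since $c_1 - 1 < d_{11}$. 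Thus every box-1 transition has log-likelihood ratio at most $\log\!\left(\tfrac{N_1}{N_1'}\right)$.

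Summing over the (at most~$t$) box-1 queries gives a pathwise bound on the log-likelihood ratio, and hence
\begin{equation*}
D(\mathbb{P}_{D'}\,\|\,\mathbb{P}_{D}) \le t\,\log\left(\frac{N_1}{N_1 - (d_{11} - c_1 + 1)}\right).
\end{equation*}
From here I would finish exactly as in Theorem~\ref{theorem:mixed_identity_lb}: apply the data-processing inequality to the binary event $\{\hat{h}^* = 1\}$, and use consistency (under $D'$ the mode is not community~1, so $\mathbb{P}_{D'}(\hat{h}^* = 1) \ra 0$, while $\mathbb{P}_{D}(\hat{h}^* = 1) \ra 1$) to obtain $-\log P_e(D) \le D(\mathbb{P}_{D'}\|\mathbb{P}_{D}) + o(t)$, which yields the claimed $\liminf$ bound on $\tfrac{1}{t}\log P_e(D)$.

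I expect the only real subtleties to be bookkeeping rather than conceptual. The main point requiring care is justifying that the box-selection factors cancel and that non-box-1 observations do not enter the likelihood ratio, so that the Kaufmann-style decomposition reduces to the single-box (box~1) computation already carried out in the mixed setting; the remaining work—verifying that the community-1 transition supplies a non-positive correction and that $D'$ is a legitimate instance with a different mode—is a direct adaptation of the mixed-community argument.
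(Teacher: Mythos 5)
Your proposal is correct and is essentially the paper's own argument: the paper dispatches this theorem in one line as following directly from the mixed-community identity-based lower bound (Theorem~\ref{theorem:mixed_identity_lb}; the text's citation of Theorem~\ref{theorem:mixed_identity_ub} there is a typo), and your reconstruction---shrinking community~1 to size $c_1-1$ inside box~1, noting the box-selection factors and all non-box-1 transitions cancel in the likelihood ratio, bounding each box-1 transition's log-ratio by $\log\left(\frac{N_1}{N_1-(d_{11}-c_1+1)}\right)$, and finishing via data processing and consistency---is exactly the reduction the paper intends.
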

Note that Theorem~\ref{Thm:BoxLB-MixedComm} follows directly from Theorem~\ref{theorem:mixed_identity_ub} for the mixed community setting.

Our second lower bound is complementary, in that it captures the complexity of identifying the box containing the largest community. To state this bound, we define 
$H^b_2(D) = \sum_{i=2}^{b}\frac{1}{\log(N_1)-\log(N_1-d_{11}+c_i)}.$\ignore{, where $c_2, c_3, ... c_b$ are ordered in a non-increasing fashion, while $c_1$ remains unconstrained in terms of the ordering} Then, following along similar lines as the proof of Theorem~\ref{thm:distinctsamplesAudibert}, we can show that
\begin{align*}
    \frac{H^b(D)}{2} \leq H^{b}_{2}(D) \leq \overline{log}(b)H^b(D).
\end{align*}

\ignore{
\begin{align*}
    \frac{H^b(D)}{2} \leq H^{b}_{2}(D) \leq 2\overline{log}(b+1)H^b(D),
\end{align*}}

\ignore{\sj{A few things to fix in the below proof: Account for $c_a'$ not being integral (choose $k$ slightly larger), doesn't have easier guarantee. The second issue is unfixable - is this useful (compared to the alternative)?}}

\remove{
\begin{theorem}
\label{Thm:LBBox1}
{\color{red}For any algorithm that only looks at the number of distinct people seen from a community}, there exists an alternate instance $D_b^{[a]}, a \in [2:b]$, such that
\begin{align*}
    \max\left(P_e(D), P_e(D_b^{[a]})\right) \geq \frac{1}{4}\exp\left(-\frac{t\Gamma}{H_2^b(D)} \right)
\end{align*}
where $\Gamma = \frac{\log(d_{11}-c_a+N_a {\color{red}+1})-\log(N_a)}{\log(N_1)-\log(N_1-d_{11}+g_a)}$. The alternate instance $D_b^{[a]}$ is constructed by only changing the size of the largest community in box $a$ from $g_a$ to  $g_a+\left(\left(\frac{N_1}{N_1-d_{11}+g_{a}}\right)^\Gamma-1\right)N_a$. 
\end{theorem}
\begin{proof} The proof follows along similar lines as the proof of Theorem \ref{theorem:lower_bound_separated}. We first state the lemma corresponding to Lemma $\ref{lemma:expected_samples_ub_separated}$ in this setting:
\begin{lemma}
\label{lemma:expected_samples_ub_box}
For any algorithm $\mathcal{A}$ and instance $D$, there must exist a box $a \in [2:b]$ such that $E_D[N_a(t)] \leq \frac{t}{(\log(N_1)-\log(N_1-d_{11}+g_a))H_2(D)}$, where $N_a(t)$ denotes the number of times box $a$ is sampled in $t$ queries under $\mathcal{A}$.
\end{lemma}
\begin{proof} Similar to the proof of Lemma $\ref{lemma:expected_samples_ub_separated}$, this follows from assuming the contrary and showing a contradiction.\end{proof}
Given an instance $D$, we construct an alternate instance $D_b^{[a]}$ by changing the size of the largest community in box $a$ (corresponding to the one specified by Lemma \ref{lemma:expected_samples_ub_box}) from $g_a$ to  $g_a' = g_a+\left(\left(\frac{N_1}{N_1-d_{11}+g_{a}}\right)^\Gamma-1\right)N_a$, where $\Gamma = \frac{\log(d_{11}-g_a+N_a {\color{red}+1})-\log(N_a)}{\log(N_1)-\log(N_1-d_{11}+g_a)}$. Note that the size of box $a$ changes from $N_a$ to $N_a' = N_a + g_a' - g_a$. Furthermore, we can see that the  community mode under instance $D_b^{[a]}$ is different from the one under the original instance $D$, since
$$
g_a' = g_a+\left(\left(\frac{N_1}{N_1-d_{11}+g_{a}}\right)^\Gamma-1\right)N_a = g_a + \left( \frac{d_{11} - g_a + N_a + 1}{N_a} - 1\right)N_a = d_{11} + 1 > d_{11}.
$$
\remove{
We have an initial distribution $D$, and want to clear an alternate distribution $D_b^{[a]}$ such that the largest community exists in the $a$th box (which is the specific box for which Lemma \ref{lemma:expected_samples_ub_box} holds). If we let $c_i'$ denote the largest competing community sizes in $D'$, and $N_i'$ the box sizes, then we have
\begin{gather*}
    g_a' > d_{11} \\
    g_a' = g_a + N_a' - N_a
\end{gather*}
Working along the lines of Theorem \ref{theorem:lower_bound_separated}, we have that
\begin{gather*}
    D(\mathbb{P}_D, \mathbb{P}_{D_b^{[a]}}) \leq E[N_a(t)]\log\left(\frac{N_a'}{N_a}\right)
\end{gather*}
Setting $\frac{N_a'}{N_a} = \left(\frac{N_1}{N_1-d_{11}+g_a}\right)^k$, we have that
\begin{gather*}
    D(\mathbb{P}_D, \mathbb{P}_{D_b^{[a]}}) \leq \frac{tk}{H^b_2(D)}
\end{gather*}
Since we must force that $g_a' > d_{11}$, we have that 
\begin{gather*}
    g_a' = g_a+N_a'-N_a > d_{11} \\
    \implies N_a\left(\left(\frac{N_1}{N_1-d_{11}+g_a}\right)^k - 1\right) > (d_{11} - g_a) \\
    \implies \left(\frac{N_1}{N_1-d_{11}+g_a}\right)^k > \frac{(d_{11} - g_a + N_a)}{N_a} \\
    \implies k > \frac{\log(d_{11}-g_a+N_a)-\log(N_a)}{\log(N_1)-\log(N_1-d_{11}+g_a)}
\end{gather*}
Thus, any $k$ satisfying the last inequality satisfies our lower bound. 
}
Following steps similar to the proof of Theorem \ref{theorem:lower_bound_separated}, we get
\begin{gather*}
    \max\left(P_e(D), P_e(D_b^{[a]})\right) \geq \frac{1}{4}\exp\left(\frac{t\Gamma}{H_2^b(D)}\right)
\end{gather*}
\end{proof}
Comparing the upper and lower bounds on the probability of error for the box setting in Theorems~\ref{Thm:UBBox} and \ref{Thm:LBBox1}, we see that the expressions for the exponents differ primarily in i) the presence of $H(D)$ vs $H_2(D)$, which differ by at most a factor of $\overline{log}(b+1)$; and ii) the presence of an additional factor $\Gamma$ in the lower bound. Using $\frac{x-1}{x} \le \log(x) \leq x-1$, we get
\begin{gather*}
     \Gamma = \frac{\log(d_{11}-g_a+N_a {\color{red} + 1})-\log(N_a)}{\log(N_1)-\log(N_1-d_{11}+g_a)} \le \frac{\left(d_{11} - g_a + 1\right)/ N_a}{\left(d_{11} - g_a\right)/ N_1} \color{red} \leq \frac{N_1}{N_a} .
\end{gather*}
In particular, the above inequality implies that $\Gamma$ is bounded by a constant, if all the box sizes are within a constant factor of each other. Then, the exponents in the lower and upper bounds for the probability of error differ by a multiplicative factor of at most $c . \overline{log}(b+1)$ for some constant $c$. 
}

\begin{theorem}
\label{Thm:LBBox2}
In the community-disjoint box setting, consider any algorithm that only uses the number of distinct samples from each community to decide which box to sample from at each instant as well as to make the final estimate for the community mode. For any instance $D$, there exists an alternate instance $D^{[a]},\ a \in [2:b]$, with $H_2^b(D^{[a]}) \leq H_2^b(D)$ such that
\begin{align*}
    \max\left(P_e(D), P_e(D^{[a]})\right) \geq \frac{1}{4}\exp\left(-\frac{t\Gamma}{H_2^b(D)} \right),
\end{align*}
where \ignore{$\Gamma = \frac{\log(\ceil{\frac{N_1(N_a+d_{11})}{N_1-d_{11}+c_b}})-\log(N_a)}{\log(N_1)-\log(N_1-d_{11}+c_a)}.$}  \newtext{$\Gamma = \max\left(\frac{\log\left(\ceil{\frac{N_1(N_a-c_a+d_{11})}{(N_1-d_{11}+c_a)}}\right) - \log\left(N_a\right)}{\log\left(\frac{N_1}{N_1-d_{11}+c_a}\right)},  \max_{i=2}^{b} \frac{\log\left(\ceil{\frac{N_1(N_a-c_a+c_i)}{(N_1-d_{11}+c_i)}}\right)-\log\left(N_a\right)}{\log\left(\frac{N_1}{N_1-d_{11}+c_a}\right)} \right).$}
%
The alternate instance $D^{[a]}$ is constructed by increasing the size of only the largest community in box $a$\ignore{ from $g_a$ to $g_a' = N_a'-N_a+g_a$}, such that the new size of box $a$ is \ignore{$N_a' = \ceil{\frac{N_1(N_a+d_{11})}{N_1-d_{11}+c_b}}.$}
\newtext{$N_a' = \max\left(\ceil{N_1\frac{(N_a-c_a+d_{11})}{(N_1-d_{11}+c_a)}}, \max_{i=2}^{b} \ceil{N_1\frac{(N_a-c_a+c_i)}{(N_1-d_{11}+c_i)}} \right).$}

\end{theorem}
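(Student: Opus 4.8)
The plan is to replicate the change-of-measure argument behind Theorem~\ref{theorem:lower_bound_separated}, adapting it to the fact that box~$a$ may now host several communities. First I would establish the box-setting counterpart of Lemma~\ref{lemma:expected_samples_ub_separated}: for any algorithm in the stated class there is a box $a \in [2:b]$ with $E_D[N_a(t)] \leq \frac{t}{(\log(N_1) - \log(N_1 - d_{11} + c_a))H_2^b(D)}$, where $N_a(t)$ counts the pulls of box~$a$. This again follows by contradiction: if the bound failed for every $a \in [2:b]$, summing over $a$ and using the definition of $H_2^b(D)$ would give $\sum_{a=2}^b E_D[N_a(t)] > t$, contradicting $\sum_{a=1}^b E_D[N_a(t)] = t$.

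Since the algorithm acts only on the distinct-sample counts $(S_{ij})$, I would model, for each box~$i$, the vector $(S_{ij})_j$ as an absorbing Markov chain on $\prod_j \{0,\dots,d_{ij}\}$: a pull of box~$i$ increments $S_{ij}$ with probability $(d_{ij} - S_{ij})/N_i$ and self-transitions with probability $\bigl(\sum_j S_{ij}\bigr)/N_i$. I would then take the alternate instance $D^{[a]}$ that enlarges only the largest community of box~$a$ (from size $c_a$) so that the box size becomes the $N_a'$ in the statement. The non-enlarged mass is preserved, i.e.\ $N_a' - c_a' = N_a - c_a$, where $c_a'$ is the enlarged community's size; and because $N_a' \geq \lceil N_1(N_a - c_a + d_{11})/(N_1 - d_{11} + c_a)\rceil > N_a - c_a + d_{11}$ (using $N_1 > N_1 - d_{11} + c_a$), we get $c_a' = N_a' - (N_a - c_a) > d_{11}$, so the mode moves from box~1 to box~$a$.

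The crux is the divergence bound. Only box~$a$'s chain differs between $D$ and $D^{[a]}$, so $\log\frac{\mathbb{P}_D(x)}{\mathbb{P}_{D^{[a]}}(x)}$ is a sum over box-$a$ transitions of per-transition log-ratios. A self-transition and each increment of a \emph{non-enlarged} community contribute exactly $\log(N_a'/N_a)$ (the community sizes cancel, leaving the box-size ratio), whereas an increment of the enlarged community contributes $\log(N_a'/N_a) + \log\frac{c_a - s}{c_a' - s}$, which is in fact non-positive since $c_a' > c_a$ and $N_a' > N_a$. Hence every per-transition term is at most $\log(N_a'/N_a)$, and since the total number of box-$a$ transitions equals $N_a(t)$, I obtain $D(\mathbb{P}_D \| \mathbb{P}_{D^{[a]}}) \leq E_D[N_a(t)]\log(N_a'/N_a)$. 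Substituting the expected-pull bound and noting that the stated maximum defining $\Gamma$ collapses, by monotonicity of $\log$, to $\Gamma = \frac{\log(N_a') - \log(N_a)}{\log(N_1/(N_1 - d_{11} + c_a))}$, yields $D(\mathbb{P}_D \| \mathbb{P}_{D^{[a]}}) \leq t\Gamma/H_2^b(D)$. Lemma~20 of~\cite{Kaufmann16a} (equivalently Lemma~\ref{lemma:mixed_identityless_lb_proof_3}), applicable because the modes under $D$ and $D^{[a]}$ differ, then gives $\max(P_e(D), P_e(D^{[a]})) \geq \frac{1}{4}\exp(-t\Gamma/H_2^b(D))$.

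It remains to verify $H_2^b(D^{[a]}) \leq H_2^b(D)$. Under $D^{[a]}$ the reference pair is $(N_a', c_a')$, and since $N_a' - c_a' = N_a - c_a$, every competing term of $H_2^b(D^{[a]})$ has denominator $\log(N_a') - \log(N_a - c_a + \cdot)$. I would pair the new box-1 term (competitor $d_{11}$) with the old box-$a$ term of $H_2^b(D)$, and each box-$i$ term ($i \neq 1,a$) with itself; every comparison then reduces to $\frac{N_a'}{N_a - c_a + x} \geq \frac{N_1}{N_1 - d_{11} + x}$ with $x \in \{d_{11}\} \cup \{c_i : i \neq 1,a\}$, which holds precisely because $N_a'$ was defined as the maximum of the corresponding ceilings. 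The main obstacle is exactly this multi-community bookkeeping: unlike the single-community separated case, enlarging box~$a$ perturbs \emph{every} transition probability in box~$a$'s chain simultaneously, so the max-of-ceilings definition of $N_a'$ must be engineered so that a single divergence bound suffices while simultaneously forcing both the mode change and the inequality $H_2^b(D^{[a]}) \leq H_2^b(D)$.
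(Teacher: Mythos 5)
Your proposal is correct and follows essentially the same route as the paper's proof: the same expected-pulls lemma for boxes, the same alternate instance with $N_a' - g_a' = N_a - c_a$ and the mode forced into box~$a$, the same per-transition bound of $\log(N_a'/N_a)$ on the log-likelihood ratio of the box-$a$ Markov chain yielding $D(\mathbb{P}_D \,\|\, \mathbb{P}_{D^{[a]}}) \le E_D[N_a(t)]\log(N_a'/N_a) \le t\Gamma/H_2^b(D)$, the same appeal to Lemma~20 of Kaufmann et al., and the identical term pairing (new box-1 term with competitor $d_{11}$ against the old box-$a$ term, all other boxes against themselves) to verify $H_2^b(D^{[a]}) \le H_2^b(D)$ via the max-of-ceilings definition of $N_a'$. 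One inessential quibble: your side remark that the enlarged-community increment term $\log(N_a'/N_a) + \log\frac{c_a - s}{c_a' - s}$ is non-positive is true, but it requires $N_a' - N_a = c_a' - c_a$ together with $c_a \le N_a$ (giving $\frac{c_a'-s}{c_a-s} \ge \frac{N_a'}{N_a}$), not merely $c_a' > c_a$ and $N_a' > N_a$; the argument anyway only needs that each per-transition term is at most $\log(N_a'/N_a)$, which is immediate.
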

\ignore{\jk{Removed the subscript~$b$ in $D_b^{[a]}$.}
\sj{Have rewritten this; however, I'm not confident in where we should be writing $g$ and where we should be writing $c$. Please have a look - the only place where they are not replaceable is when defining $g'$.}}

The proof of Theorem~\ref{Thm:LBBox2} follows along similar lines as the proof of Theorem \ref{theorem:lower_bound_separated}. Details can be found in~\ref{sec:proof_lb_box}.

\ignore{We first state the following lemma (analogous to Lemma~\ref{lemma:expected_samples_ub_separated}) for this setting (the proof is straightforward and omitted):
\begin{lemma}
\label{lemma:expected_samples_ub_box_1}
For any algorithm $\mathcal{A}$ and instance $D$, there must exist a box $a \in [2:b]$ such that $E_D[N_a(t)] \leq \frac{t}{(\log(N_1)-\log(N_1-d_{11}+c_a))H_2^b(D)}$, where $N_a(t)$ denotes the number of times box $a$ is sampled in $t$ queries under $\mathcal{A}$.
\end{lemma}
%
\begin{proof}[Proof of Theorem~\ref{Thm:LBBox2}]
Given an instance $D$, we construct an alternate instance $D^{[a]}$ by changing the size of the largest community in box $a$ (corresponding to the one specified by Lemma \ref{lemma:expected_samples_ub_box_1}) from $c_a$ to  
$g_a' = c_a+N'_a -N_a.$
\footnote{We use $g'_a$ and not $c'_a$ to denote the new size of this community because in the alternate instance $D^{[a]}$, this community is the largest community, and is thus no longer the \emph{competing} community in box~$a.$} Note that the size of box $a$ changes from $N_a$ to $N_a' = N_a + g_a' - c_a = \ceil{\frac{N_1(N_a+d_{11})}{N_1-d_{11}+c_b}}$.
\ignore{\sj{Should motivate this $c_a$ to $g_a'$ transition.} \jk{Have taken a shot at this; see footnote.}}
Furthermore, we can see that the community mode under instance $D_b^{[a]}$ is different from the one under the original instance $D$, since 
$$
g_a' = c_a+ N'_a - N_a \geq c_a + \left( \frac{N_1(N_a + d_{11})}{N_a(N_1 - d_{11}+ c_b)} - 1\right)N_a > c_a + (N_a + d_{11}) - N_a >  d_{11}.
$$
Following steps similar to the proof of Theorem \ref{theorem:lower_bound_separated}, we get
\begin{gather*}
    D(\mathbb{P}_D, \mathbb{P}_{D^{[a]}}) \leq E[N_a(t)]\log\left(\frac{N_a'}{N_a}\right).
\end{gather*}
From the definition of $\Gamma,$ it follows that $\frac{N_a'}{N_a} = \left(\frac{N_1}{N_1-d_{11}+c_a}\right)^\Gamma.$ Thus, invoking Lemma~\ref{lemma:expected_samples_ub_box_1}, we have
\begin{gather*}
    D(\mathbb{P}_D, \mathbb{P}_{D^{[a]}}) \leq \frac{t\Gamma}{H^b_2(D)}.
\end{gather*}
Finally, similar to the proof of Theorem \ref{theorem:lower_bound_separated}, we use Lemma \ref{lemma:mixed_identityless_lb_proof_3} to get
\begin{gather*}
    \max\left(P_e(D), P_e(D^{[a]})\right) \geq \frac{1}{4}\exp\left(-\frac{t\Gamma}{H_2^b(D)}\right)
\end{gather*}
which matches the statement of the theorem. 
\remove{
We have an initial distribution $D$, and want to clear an alternate distribution $v_b^{[a]}$ such that the largest community exists in the $a^{th}$ box (which is the specific box for which Lemma \ref{lemma:expected_samples_ub_box} holds). \\\\
Working along the lines of Theorem \ref{theorem:lower_bound_separated}, we have that
\begin{gather*}
    D(\mathbb{P}_D, \mathbb{P}_{D_b^{[a]}}) \leq E[N_a(t)]\log\left(\frac{N_a'}{N_a}\right)
\end{gather*}
Setting $\frac{N_a'}{N_a} = \left(\frac{N_1}{N_1-d_{11}+g_a}\right)^k$, we have that
\begin{gather*}
    D(\mathbb{P}_D, \mathbb{P}_{D_b^{[a]}}) \leq \frac{tk}{H^b_2(D)}
\end{gather*}
We note that we want $g_a' > d_{11}$ and $H_2^b(D_b^{[a]}) \leq H_2^b(D)$. We show that these hold with our choice of $N_a'$. 
\begin{gather*}
    g_a' > d_{11} \\ 
    \implies N_a'+g_a > d_{11} + N_a \\
    \implies \frac{N_1(N_a+d_{11})}{N_1-d_{11}} + g_a > (N_a+d_{11}) \\
    \implies \frac{N_1}{N_1-d_{11}}+\frac{g_a}{N_a+d_{11}} > 1
\end{gather*}
where the last statement is clearly true.\\\\
}
Next, we show that 
\begin{gather*}
    H_2^b(D^{[a]}) \leq H_2^b(D)
    \Leftrightarrow \sum_i \frac{1}{\log(N_a')-\log(N_a'-g_a'+c_i')} \leq \sum_i \frac{1}{\log(N_1)-\log(N_1-d_{11}+c_i)}
\end{gather*}
Note that it suffices to show 
\begin{gather*}
 \forall \ i, \frac{N_1}{N_1-d_{11}+c_i} \leq \frac{N_a'}{N_a'-g_a'+c_i'} .
\end{gather*}
This follows from the following sequence of inequalities: 
\begin{align*}
 \forall \ i, \frac{N_1}{N_1-d_{11}+c_i} \le    \frac{N_1}{N_1-d_{11}+c_b} = \frac{N_1(N_a-c_a+d_{11})}{(N_1-d_{11}+c_b)(N_a-c_a+d_{11})}
 &\leq \frac{N_1(N_a + d_{11})}{(N_1-d_{11}+c_b)(N_a-c_a+d_{11})}\\ &\leq \frac{N_a'}{N_a-c_a+d_{11}}\\
 &\leq \frac{N_a'}{N_a'-g_a'+ c_i'} 
 \end{align*}
 where the last inequality follows since $N_a' = N_a + g_a' - c_a$ and $c_i' \le d_{11}$ for all $i$.
  \remove{
    \Leftrightarrow \frac{N_1}{N_1-d_{11}} \leq \frac{N_a'}{N_a-c_a+d_{11}} \\
    \Leftrightarrow \frac{N_1(N_a-c_a+d_{11})}{N_1-d_{11}} \leq N_a' \\
    \Leftrightarrow \frac{N_1(N_a-c_a+d_{11})}{N_1-d_{11}} \leq \frac{N_1(N_a+d_{11})}{N_1-d_{11}},
and the last statement is clearly true. Hence, we showed that $H_2^b(D_b^{[a]}) \leq H_2^b(D)$. 
Thus, we have
\begin{gather*}
    \frac{N_a'}{N_a} = \frac{N_1(N_a+d_{11})}{N_a(N_1-d_{11})} = \left(\frac{N_1}{N_1-d_{11}+c_a}\right)^k \\ 
    \implies k = \frac{\log(N_1)+\log(N_a+d_{11})-\log(N_a)-\log(N_1-d_{11})}{\log(N_1)-\log(N_1-d_{11}+c_a)}
\end{gather*}

Similar to Theorem \ref{theorem:lower_bound_separated}, we use Lemma \ref{lemma:mixed_identityless_lb_proof_3} to get
\begin{gather*}
    \max\left(P_e(D), P_e(D_b^{[a]})\right) \geq \frac{1}{4}\exp\left(\frac{tk}{H_2^b(D)}\right)
\end{gather*}
}
\end{proof}}
Comparing the upper and lower bounds on the probability of error for the box setting in Theorems~\ref{Thm:UBBox},~\ref{Thm:BoxLB-MixedComm}, and~\ref{Thm:LBBox2}, we see that the expressions for the exponents differ primarily in i) the presence of $H^b(D)$ vs $H_2^b(D)$, which differ by at most a factor of $\overline{log}(b)$; and ii) the presence of an additional factor $\Gamma$ in the lower bound. Note that 
\newtext{
\begin{gather*}
    \max\left(\ceil{N_1\frac{(N_a-c_a+d_{11})}{(N_1-d_{11}+c_a)}}, \max_{i=2}^{b} \ceil{N_1\frac{(N_a-c_a+c_i)}{(N_1-d_{11}+c_i)}} \right) \leq \ceil{\frac{N_1(N_a-c_a+d_{11})}{N_1-d_{11}+c_b}} \\ \leq \frac{N_1(N_a-c_a+d_{11})}{N_1-d_{11}+c_b} + 1 \leq \frac{N_1(N_a-c_a+d_{11}+1)}{N_1-d_{11}+c_b}.
\end{gather*}
Using $\frac{x-1}{x} \le \log(x) \leq x-1$ for all $x > 0$ and the above inequality, we get
\begin{gather*}
    \Gamma \leq \frac{\log(N_1)+\log(N_a-c_a+d_{11}+1)-\log(N_a)-\log(N_1-d_{11}+c_b)}{\log(N_1)-\log(N_1-d_{11}+c_a)} \\ = \frac{\log(N_1 / (N_1 - d_{11} + c_b))+\log((N_a-c_a+d_{11}+1)/N_a)}{\log(N_1 / (N_1-d_{11}+c_a))} \\
     \leq \frac{ (d_{11}-c_b) / (N_1 - d_{11} + c_b) + (d_{11}-c_a+1)/ N_a }{(d_{11} - c_a)/N_1}
     \\
     \leq \frac{(d_{11} - c_b)}{(d_{11} - c_a)} \cdot \frac{N_1}{(N_1 - d_{11} + c_b)} \cdot \frac{(2N_1 + N_a)}{N_a}.
\end{gather*}
}
\ignore{
\begin{gather*}
    \ceil{\frac{N_1(N_a+d_{11})}{N_1-d_{11}+c_b}} \leq \frac{N_1(N_a+d_{11})}{N_1-d_{11}+c_b} + 1 \leq \frac{N_1(N_a+d_{11}+1)}{N_1-d_{11}+c_b}.
\end{gather*}
Using $\frac{x-1}{x} \le \log(x) \leq x-1$ for all $x > 0$ and the above inequality, we get
\begin{gather*}
    \Gamma \leq \frac{\log(N_1)+\log(N_a+d_{11}+1)-\log(N_a)-\log(N_1-d_{11}+c_b)}{\log(N_1)-\log(N_1-d_{11}+c_a)} \\ = \frac{\log(N_1 / (N_1 - d_{11} + c_b))+\log((N_a+d_{11}+1)/N_a)}{\log(N_1 / (N_1-d_{11}+c_a))} \\
     \leq \frac{ (d_{11}-c_b) / (N_1 - d_{11} + c_b) + (d_{11}+1)/ N_a }{(d_{11} - c_a)/N_1}
     = \frac{(d_{11} - c_b)}{(d_{11} - c_a)} \cdot \frac{N_1}{(N_1 - d_{11} + c_b)} \cdot \frac{(N_1 + N_a - d_{11}+\frac{N_1}{d_{11}}-1)}{N_a} \\
     \leq \frac{(d_{11} - c_b)}{(d_{11} - c_a)} \cdot \frac{N_1}{(N_1 - d_{11} + c_b)} \cdot \frac{(2N_1 + N_a)}{N_a}
\end{gather*}}
\ignore{
\begin{align*}
     \Gamma = \frac{\log(N_1)+\log(N_a+d_{11})-\log(N_a)-\log(N_1-d_{11})}{\log(N_1)-\log(N_1-d_{11}+c_a)} &= \frac{\log(N_1 / (N_1 - d_{11}))+\log((N_a+d_{11})/N_a)}{\log(N_1 / (N_1-d_{11}+c_a))} \\
     &\le \frac{ d_{11} / (N_1 - d_{11}) + d_{11}/ N_a }{(d_{11} - c_a)/N_1}\\
     &= \frac{d_{11}}{(d_{11} - c_a)} \cdot \frac{N_1}{(N_1 - d_{11})} \cdot \frac{(N_1 + N_a - d_{11})}{N_a}
     \end{align*} 
     \remove{
     \log(N_1)+\log(N_a+d_{11})-\log(N_a)-\log(N_1-d_{11}) \leq \frac{d_{11}(N_1+N_a)}{N_a(N_1-d_{11})} \\
     \log(N_1)-\log(N_1-d_{11}+c_a) \geq \frac{d_{11}-c_a}{N_1} \\
     \implies k \leq \frac{d_{11}(N_1+N_a)N_1}{(d_{11}-c_a)N_a(N_1-d_{11})}
     }}
In particular, the above inequality implies that $\Gamma$ is bounded by a constant under the following natural assumptions on the class of underlying instances: i) the largest community size is at most a fraction of its corresponding box size, i.e., $d_{11} \le (1 - \delta_1)N_1$ for some $\delta_1 > 0$; ii) the size of the competing communities in other boxes \ignore{the second largest community size }is most a fraction of the largest community size, i.e., $c_a \le (1 - \delta_2)d_{11}$ for some $\delta_2 > 0$ $\forall a \neq 1$; and iii) all the box sizes are within a multiplicative constant factor $\beta$ of each other $(\beta > 1)$.  Under these assumptions, $\Gamma \leq \frac{2\beta+1}{\delta_1\delta_2}$.

We compare this lower bound to the first term in the upper bound given in Theorem~\ref{Thm:UBBox}. We note that these terms only differ by an order of $\overline{log}(b)\Gamma$. When $\Gamma$ is bounded from above, such as in the case described above, the DS-SR estimator matches the lower bound upto logarithmic factors for the problem of picking the correct box in the final stage of the algorithm, and is hence near-optimal. Comparing the second term in the upper bound from Theorem~\ref{Thm:UBBox} to Theorem~\ref{Thm:BoxLB-MixedComm}, we find a similar logarithmic factor between the decay rates.
Thus, the DS-SR algorithm is decay rate optimal up to logarithmic factors for the problem of picking the right community out of a box, given the correct box. This is natural and intuitive, due to its similarity with the mixed community DSM algorithm. Hence, the set of instances where DS-SR might not perform well in comparison to other algorithms can be characterized as instances where it is hard to pick the correct box containing the largest community; intuitively, these instances would produce a large value of the parameter~$\Gamma.$

\ignore{
{\color{red}Under such a setting, if the first term in Theorem~\ref{Thm:UBBox} dominates, then the exponents in the lower and upper bounds for the probability of error in Theorems~\ref{Thm:UBBox} and \ref{Thm:LBBox2} differ by a multiplicative factor of at most $c . \overline{log}(b)$ for $c = \frac{2\beta+1}{\delta_1\delta_2}$.} \ignore{\jk{Can we express $c$ in terms of $\delta_1$ and $\delta_2$?}} \sj{Need to rethink this constant factor.}
{\color{red} If in fact the second term in the upper bound in Theorem~\ref{Thm:UBBox} dominates, then the mixed community lower bound for community $1$ gives us a matching lower bound upto a factor of $c.\overline{log}(b)$.}}

\subsection{The general setting}
\label{sec:general}
Finally, we consider the most general setting, where communities are arbitrarily spread across boxes. From an algorithmic standpoint, the key challenge here is that it is no longer appropriate to eliminate boxes from consideration sequentially as in SR algorithms, since the largest community might be spread across multiple boxes. Accordingly, the algorithms we propose for the general setting are `single phase' variants of the algorithms proposed in Section~\ref{sec:box_algos}.

The single phase variant of Algorithm~\ref{alg:boxsamplesaudibert}, which we refer to as the Distinct Samples Uniform Exploration (DS-UE) algorithm is stated as follows: sample each box~$\floor{t/b}$ times, and return the community that produces the largest number of distinct individuals. The probability of error under this algorithm can be bounded using the ideas we have used before, only the bounds are more cumbersome. 

If the box sizes are known, one can also perform a single-phase proportional sampling of boxes, resulting effectively in a sampling process similar to the mixed community setting (except the budget is apportioned deterministically across boxes rather than the random allocation in the mixed community setting) . We refer to the corresponding algorithm, which outputs the community that produced the largest number of distinct individuals after $t$ queries, as the Distinct Samples Proportional Exploration (DS-PE) algorithm. 
 
Finally, we state the normalized single phase variant of DS-UE, which we refer to as NDS-UE: Each box is sampled $\floor{t/b}$ times, and the output of NDS-UE is the community that maximizes $\sum_{i} \frac{S_{ij}}{S_i} N_i.$ ENDS-UE can be analogously defined.

\newtext{To summarize, some of our algorithms for the disjoint box setting can indeed be applied and evaluated analytically in the general setting. However, we do not at present have a tight information theoretic lower bound for the general setting (or indeed, even for the disjoint box setting); the proof techinques we have used in the lower bounds for the mixed/separated community settings appear to be insufficient to handle the general case. So even though our algorithms for the general setting perform well in empirical evaluations (see Section~\ref{sec:experimental_results}), new methodological innovations are required to close the gap between upper and lower bounds.}

\section{Experimental Results}
\label{sec:experimental_results}
In this section, we present extensive simulation results comparing the performance of various algorithms discussed in the previous sections. We use both synthetic data as well as data gathered from real-world datasets for our experiments. For each experiment, we averaged the results over multiple runs (500-3000 depending on the complexity of the instance).
%
%
\subsection{Mixed Community Mode Estimation}
We begin with the mixed community setting studied in Section~\ref{sec:mixedcomm} where all individuals are placed in a single  box. We demonstrate the difference in performance of the identity-less Sample Frequency Maximization (SFM) and the identity-based Distinct Samples Maximization (DSM) algorithms via simulations on synthetic data. We consider two instances, each with $4000$ individuals in a single box, partitioned into communities as $[1000, 990, 600, 500, 500, 410]$ and $[1000, 900, 630, 520, 520, 430]$ respectively. As suggested by Theorems~\ref{theorem:mixed_identityless_lb_1} and \ref{theorem:mixed_identity_lb}, we find that the difference in the convergence rates of the two estimators becomes more pronounced when the two largest communities are close in size. See Figure~\ref{Fig:Mixed} where we plot the probability of error $\log(P_e)$ vs the query budget $t$ for the two instances.
\ignore{
We consider two examples; in Figure \ref{figure:hard_mixed_synthetic}, we consider a case where all the communities are close in size, and in Figure \ref{figure:easy_mixed_synthetic} we consider a case where there is a significant difference in the community sizes.

In this section using a synthetic dataset we created we illustrate empirically how having identity information can speed up the convergence of the algorithm.

Empirical Results also show that the difference in convergence rate when identity information is present vs when it is not present become much more pronounced as the size of the largest and the second largest community become closer.

We experiment with two algorithms in this setting, when identity information is present (or absent) then we use the algorithm described in section 3.1  (or section 3.2) resp).

\sg{Here maybe these two plots can illustrate that difference between the estimators is more important when communities sizes are closer, as expected theoretically}

\nk{I think we should select one instance where the top two communities are closer and then the second instance where the individuals are rebalanced so that the gap between top two is increased. These curves can even be on the same figure I think.}
}
\begin{figure}[t]
  \centering
  \begin{subfigure}[b]{0.45\textwidth}
  \centering 
  \includegraphics[width=\textwidth]{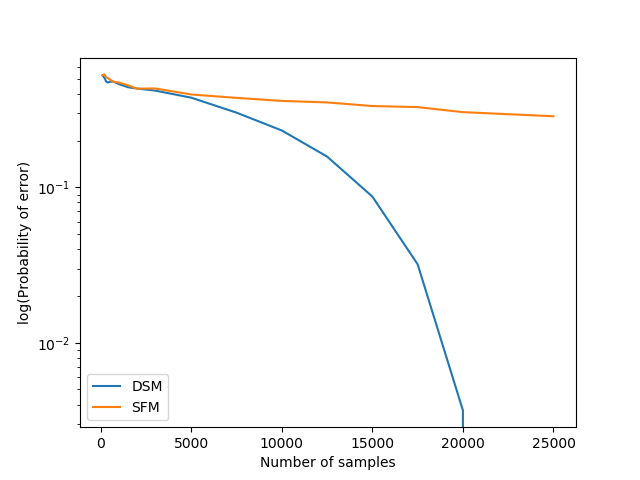}
  \caption{Instance: [1000, 990, 600, 500, 500, 410]}
  \end{subfigure}
    \hfill 
    \begin{subfigure}[b]{0.45\textwidth}
  \centering 
  \includegraphics[width=\textwidth]{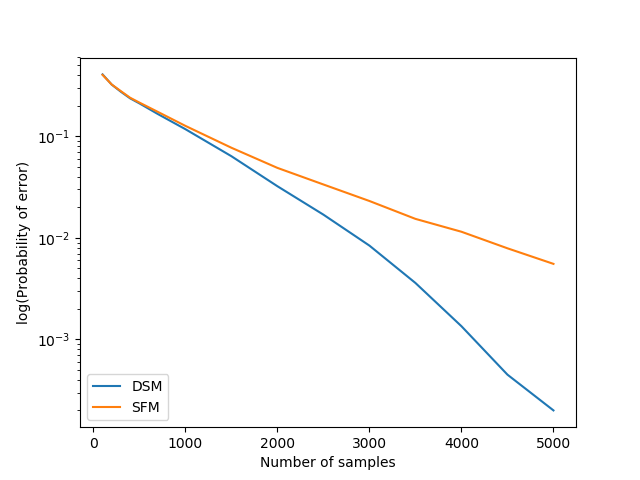}
  \caption{Instance: [1000, 900, 630, 520, 520, 430]}
  \end{subfigure}
  \caption{$\log(P_e(D))$ vs $t$ for mixed community setting}
  \label{Fig:Mixed}
\end{figure}

\ignore{
\begin{figure}[H]
	\centering
	\includegraphics[scale=0.5]{simulations/hard_mixed_synthetic.png}
	\caption{$P_e(D)$ vs t for community size distribution [1000, 990, 600, 500, 500, 410]}
	\label{figure:hard_mixed_synthetic}
	\centering
\end{figure}
\begin{figure}[H]
	\centering
	\includegraphics[scale=0.5]{simulations/easy_mixed_synthetic.png}
	\caption{$P_e(D)$ vs t for community size distribution [1000, 900, 630, 520, 520, 430]}
	\label{figure:easy_mixed_synthetic}
	\centering
\end{figure}}

\subsection{Separated Community Mode Estimation}
Next, we consider the separated community setting studied in Section~\ref{sec:separatedcomm} where each community is in a unique box. As above, we consider two instances with community sizes given by $[1000, 990, 600, 500, 500, 410]$ and $[1000, 900, 630, 520, 520, 430]$ respectively. We plot the performance of the Consecutive-Collision SR (CC-SR) and Distinct Samples SR (DS-SR) algorithms in Figure~\ref{Fig:Sep}. As indicated by our results in Theorems~\ref{theorem:collision_audibert} and \ref{thm:distinctsamplesAudibert}, the DS-SR algorithm greatly outperforms the CC-SR algorithm.
%
%
\begin{figure}[t]
  \centering
  \begin{subfigure}[b]{0.45\textwidth}
  \centering 
  \includegraphics[width=\textwidth]{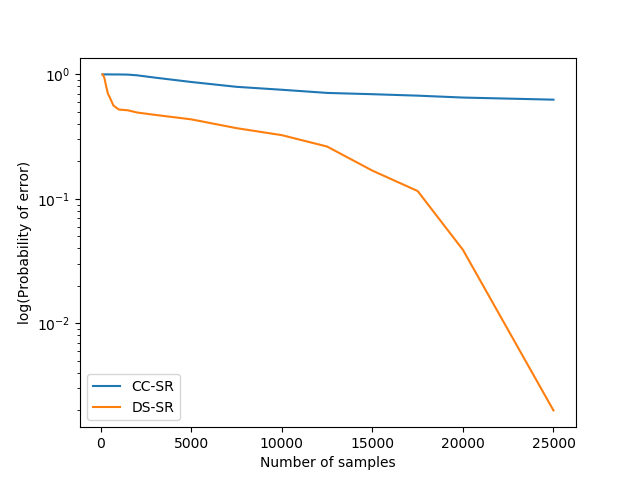}
  \caption{Instance: [1000, 990, 600, 500, 500, 410]}
  \end{subfigure}
    \hfill 
    \begin{subfigure}[b]{0.45\textwidth}
  \centering 
  \includegraphics[width=\textwidth]{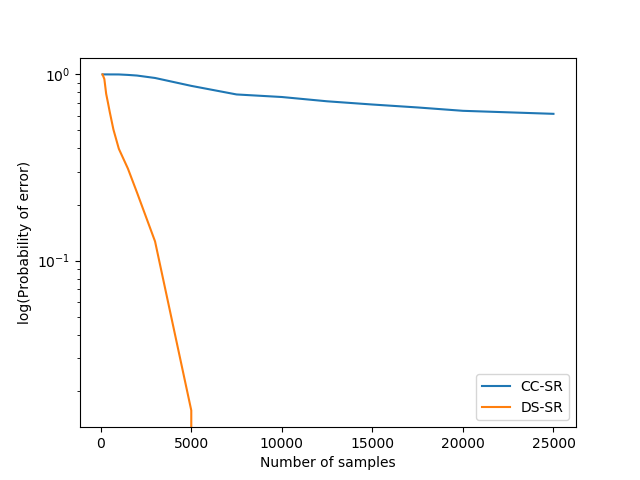}
  \caption{Instance: [1000, 900, 630, 520, 520, 430]}
  \end{subfigure}
  \caption{$\log(P_e(D))$ vs $t$ for separated community setting}
  \label{Fig:Sep}
\end{figure}

\ignore{
\begin{figure}[H]
	\centering
	\includegraphics[scale=0.5]{simulations/hard_separated_synthetic.png}
	\caption{$P_e(D)$ vs t for community size distribution [1000, 990, 600, 500, 500, 410]}
	\label{figure:hard_separated_synthetic}
	\centering
\end{figure}
\begin{figure}[H]
	\centering
	\includegraphics[scale=0.5]{simulations/easy_separated_synthetic.png}
	\caption{$P_e(D)$ vs t for community size distribution [1000, 900, 800, 700, 600]}
	\label{figure:easy_separated_synthetic}
	\centering
\end{figure}}

\ignore{
\subsubsection{Uniform Sampling Algorithms}

In these algorithms every community is allocated an equal portion of the total sampling budget $t$ and the largest community is selected in a single round based on the largest community size estimate computed using the algorithm specific set cardinality estimator.

\subsubsection{Audibert Sampling Algorithms}

These algorithms follow the Audibert sampling scheme described in section 5.1 and the community to be eliminated in each round is the one with the lowest community size estimate computed on the basis of the algorithm specific set cardinality estimator.\\ 

For example Algorithm 4 is one such algorithm with the Distinct Samples Estimator as its chosen set cardinality estimator.

\subsubsection{Set Cardinality Estimators}

\begin{itemize}
    \item Distinct Sample Frequency Estimator
    \item Collision Audibert Estimator
\end{itemize}

\begin{figure}[H]
	\centering
	\includegraphics[scale=0.5]{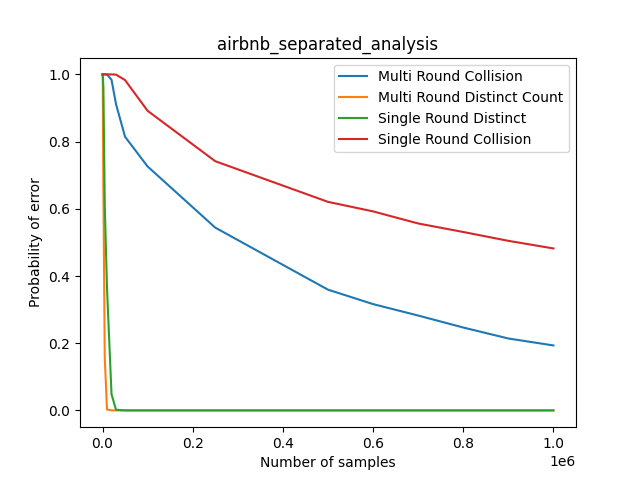}
	\caption{P(error) vs t for community size distribution [2658, 1971, 1958, 1853, 1798 ...] for the top 5 communities}
	\label{figure:markov_chain}
	\centering
\end{figure}

\begin{figure}[H]
	\centering
	\includegraphics[scale=0.5]{simulations/brazil_separated_analysis.png}
	\caption{P(error) vs t for community size distribution [3929, 2322, 2414, 1876]}
	\label{figure:markov_chain}
	\centering
\end{figure}
\nk{I thought for the separated setting as well we were going to use synthetic data and then switch to real datasets for the box setting. Right now, we are plotting for datasets we haven't described yet}}

\subsection{Community-Disjoint Box Mode Estimation}
Here, we look at the setting where the communities are partitioned across the boxes and thus each box can have multiple communities, as described in Section~\ref{sec:boxcomm}. We use the following two real-world datasets for comparing the performance of various estimators under this setting.
\begin{itemize}
    \item Brazil Real Estate Dataset \cite{brasil_dataset}: This dataset contains  a total of 97353 apartment listings spread across 26 states and 3273 municipalities in Brazil. Mapping it to our framework, the apartments correspond to individual entities, the municipalities represent communities and the states they are located in denote the boxes. Our goal is to identify the municipality (community) with the largest number of listings by (randomly) sampling apartment listings from various states.  
    
    Corresponding to this dataset, the four largest communities (municipalities with the most listed apartments) are of sizes [3929, 2322, 2414, 1876]. The top five box sizes are [80935, 3551, 2035, 1871, 1646], with the largest box corresponding to the state of Sao Paolo. Thus, one box has a much larger size than all others in this dataset and in fact, contains all of the the four largest communities.
    
    \item Airbnb Rental Listing Dataset \cite{airbnb_dataset}:  This dataset contains a total of 48895 rental listings spread across $5$ regions and $221$ neighborhoods in New York city. Here, the apartments correspond to individual entities, the neighbourhoods represent communities and the broad regions they are located in denote the boxes.
    
    The top five communities (neighbourhoods) have sizes [3920, 3741, 2658, 2465, 1971]. The top 5 box sizes are [21661, 20104, 5666, 1091, 373]. Unlike the previous dataset, the two largest boxes (corresponding to Manhattan and Brooklyn respectively) are of comparable size here. Furthermore, the two boxes contain multiple competing communities of size comparable to the largest community. The largest box contains the communities with sizes 2658 and 1971, while the second largest box contains communities of sizes 3920 (mode), 3714, and 2465. 
\end{itemize}

\ignore{
We classify the algorithms numerically simulated for this setting in four categories.

\subsubsection{Single Round with Uniform Sampling:}

Every Box is allocated an equal slice of the total sample budget $t$ and the community with the largest community size estimate computed based on an algorithm specific set cardinality estimator is selected. \\

When using set cardinality estimators like the Box Fraction Estimator, in-case the box size is not known, we use a set cardinality estimator for the box size as a proxy for the box size.

\subsubsection{Single Round with Proportional Sampling:}

Every box is allocated a slice of the total sample budget $t$ proportional to its size and the community with the largest community size estimate computed based on an algorithm specific set cardinality estimator is selected.\\

These algorithms require the box sizes to be known.

\subsubsection{Audibert Sampling}

These algorithms are successive reject algorithms where in each round a box is eliminated. We record the largest community based on some set cardinality estimate in each box and the box with the smallest such community is eliminated.\\

Samples are allocated to each box based on the Audibert Sampling Scheme as it is described in section 5.1.

\subsubsection{Audibert Sampling with Proportional Intra-Round Sampling:}

These algorithms are successive reject algorithms where in each round a box is eliminated. We record the largest community based on some set cardinality estimate in each box and the box with the smallest such community is eliminated.\\

Total samples for a round are identical to that in Audibert Sampling, but intra-round each box get a slice of the budget for that round in proportion to its box size.\\

In case the box sizes are not known we use a set cardinality estimator as a proxy for the box sizes for all rounds except the first where all boxes are assumed to have the same size and thus an equal slice of the total budget.

\noindent {\bf Dataset Description:}

\sg{Not sure about details of what exact entities are the datasets about, will need to verify}
\begin{itemize}
    \item Brazil Real Estate Dataset: This is a dataset where the individuals are apartments, the communities are streets and the boxes are the neighbourhoods they are located in. \\
    There are a total of 97353 listings spread across 27 neighbourhoods and 3274 streets. The top 4 streets with the most listed apartments contain [3929, 2322, 2414, 1876] unique listings. The box with the largest community has size 80935, and the top 5 box sizes are [80935, 3551, 2035, 1871, 1646].
    \item Airbnb Apartment Listing Dataset:  This is a dataset where the individuals are apartments in New York City, the communities are neighbourhoods and the boxes are the region of NYC neighbourhoods are located in. \\
    There are a total of 48895 listings spread across 5 regions and 221 neighborhoods. The top 5 neighbourhoods with the most listed apartments contain [3920, 3741, 2658, 2465, 1971] unique listings. The box with the largest community has size 20104, and the top 5 box sizes are [21661, 20104, 5666, 1091, 373].
    \item Youtube Video Dataset: This dataset contains 239662 videos, each associated with one of 23075 channels and each channel is associated with one language for a total of 6 languages. Top 5 channels have [330, 282, 270, 253, 244] videos in the dataset. \sj{If this is used for general box community, I will change these numbers and interpretation}
\end{itemize}

\subsubsection{Algorithm Descriptions}

\begin{itemize}
    \item A1: Single Round with Uniform Sampling and Box Fraction Estimator for community size estimation with known box sizes
    \item A2: Audibert Sampling with Box Fraction Estimator for community size estimation with box sizes known
    \item A3: Audibert Sampling with Proportional Intra-Round Sampling with Distinct Samples Frequency Estimator for community size estimation and MLE estimator for box size estimation. MLE of box size helps decide the budget allocation according to the proportional scheme.
    \item A4: Single Round with Proportional Sampling and Distinct Samples Frequency Estimator for community size estimation with box sizes known
    \item A5: Single Round with Uniform Sampling and Distinct Samples Frequency Estimator for community size estimation.
    \item A6: Single Round with Uniform Sampling and Box Fraction Estimator for community size estimation with MLE estimator for box size estimation. MLE of box size helps decide the estimate of community sizes based on the Box Fraction Estimator.
    \item A7: Audibert Sampling with Proportional Intra-Round Sampling and Distinct Samples Frequency Estimator for community size estimation with box sizes are known.
    \item A8: Audibert Sampling with Distinct Samples Frequency Estimator for community size estimation.
\end{itemize}
}

\noindent{\bf Results}
We compare the performance of the various algorithms discussed in Section~\ref{sec:box_algos} on the two datasets described above. These include the Distinct Samples-Successive Rejects (DS-SR) and its generalization Distinct Samples Proportional SR (DS-PSR) when the box sizes are known. We also consider the normalized variants of DS-SR, given by Normalized Distinct Samples SR (NDS-SR) and Expectation-Normalized Distinct Samples SR (ENDS-SR) when box sizes are known as well as Normalized Distinct Samples SR (NDS-SR (MLE)) when the box sizes are unknown, by replacing the box size by its maximum likelihood estimator.

Figure~\ref{Fig:brazil} shows the performance of the various algorithms on the Brazil Real Estate dataset. DS-SR which splits queries uniformly across all surviving boxes performs the worst while DS-PSR which does the division in proportion to box sizes performs the best. This is to be expected since there is one box which is much larger than all others and this box contains all of the competing largest communities. Thus, because of the uniform exploration in DS-SR, there might be fewer samples from the individual communities in the largest box in the initial rounds and it might get eliminated, which explains the poor performance for moderate query budgets. This shortcoming is addressed by DS-PSR which assign many more queries to the largest box which contains the community mode. The normalized variants NDS-SR and ENDS-SR also perform much better than DS-SR since they use the box sizes to determine the elimination criteria in each round. In comparison to these, the NDS-SR (MLE)  performs poorer for low query budget due to erroneous box size estimates but demonstrates similar performance for larger budgets. 

\ignore{
The Brazil Real Estate Dataset contains one box which is much larger than the others, and this box contains all of the competing largest communities. This is an ideal case for algorithms which eliminate boxes quickly like Successive Rejects, or algorithms which apportion samples keeping box size in mind. This is observed in the plot below.
In the Brazil Real Estate Dataset both A4 and A7 which allocate samples in proportional to the box size perform well. \\
A3 which does that same but uses MLE of the box size to allocate budget instead of the actual box size does not perform as well for smaller amount of samples, presumably due to wrong initial guesses of MLE, but does significantly better than other competing algorithms after 10000 samples. \\
Also note that A8 which uses Audibert Sampling with Distinct Samples Frequency Estimator shows a flat curve, while both A2 which uses Audibert Sampling with Box Fraction Estimator and A5 which is Single Round with Uniform Sampling and Distinct Samples Frequency Estimator show decent performance, this is because when using the Distinct Samples Frequency Estimator the communities in smaller boxes get an advantage and the largest community (which is present in a very large box) gets eliminated early into the algorithm giving a very high probability of error. Thus, in this case normalizing with box size gives an advantage.
}
\begin{figure}[t]
  \centering
  \begin{subfigure}[b]{0.45\textwidth}
  \centering 
  \includegraphics[width=\textwidth]{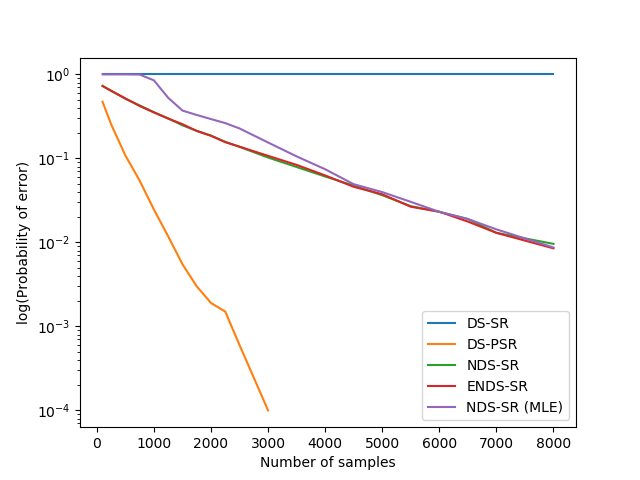}
  \caption{Brazil Real Estate Dataset}
  \label{Fig:brazil}
  \end{subfigure}
    \hfill 
    \begin{subfigure}[b]{0.45\textwidth}
  \centering 
  \includegraphics[width=\textwidth]{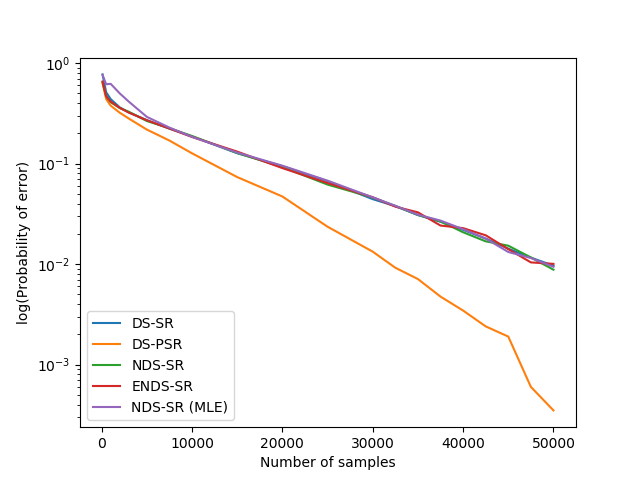}
  \caption{Airbnb Rental Listing Dataset}
    \label{Fig:Airbnb}
  \end{subfigure}
  \caption{$\log(P_e(D))$ vs $t$ for box community setting}
\end{figure}
\ignore{
\begin{figure}[H]
	\centering
	\includegraphics[scale=0.5]{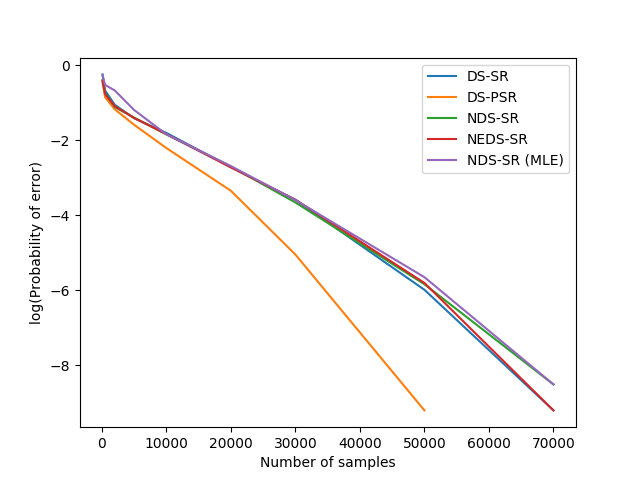}
	\caption{Airbnb NYC housing datas}
	\label{figure:brazil_box}
	\centering
\end{figure}}
Figure~\ref{Fig:Airbnb} shows the performance of the various algorithms on the Airbnb Apartment Listing dataset. Here again, DS-PSR performs the best since it allocates queries in proportion to box sizes. However, unlike the previous dataset, all the other algorithms have comparable performance. This includes DS-SR which does not use any box size information and is still able to perform better since the box sizes are relatively closer to each other for this dataset and the number of communities in each box are also fewer which makes it unlikely that the box containing the largest community is eliminated. 
\ignore{
In the Airbnb Apartment Listing Dataset similar to the brazil Real Estate Dataset both A4 and A7 which allocate samples in proportional to the box size perform well, 
A3 which does that same but uses MLE of the box size to allocate budget instead of the actual box size does not perform as well. \\
However in this case A8 converges and performs as well as A2 unlike the brazil Real Estate Dataset due to the box sizes being relatively close to each other.

\begin{figure}[H]
	\centering
	\includegraphics[scale=0.5]{simulations/airbnb_analysis.png}
	\caption{P(error) vs t for the Airbnb Apartment Listing Dataset}
	\label{figure:airbnb_box}
	\centering
\end{figure}}
%

\subsection{General Setting Mode Estimation}
\begin{figure}[t]
	\centering
	\includegraphics[scale=0.5]{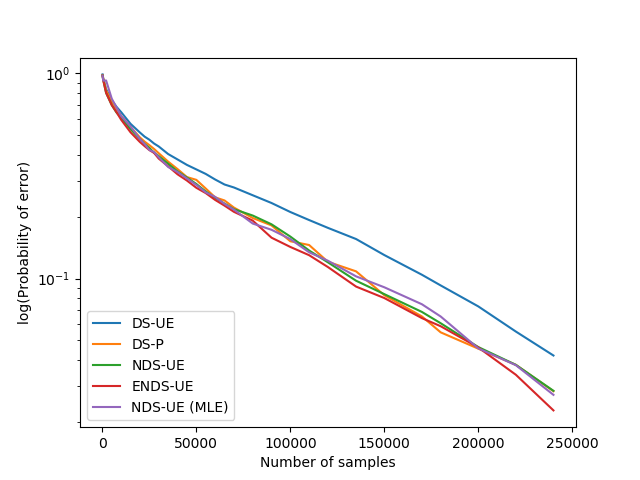}
	\caption{$\log(P_e(D))$ vs $t$ for the Youtube Video Dataset, General Box Setting}
	\label{figure:general_box_youtube}
	\centering
\end{figure}

Finally, we consider the general setting where individuals in a community can be spread across multiple boxes. Section~\ref{sec:general} described various single-round algorithms for this setting, namely the Distinct Samples Uniform Exploration (DS-UE) which doesn't need any box size information and divides the query budget equally among all boxes; the Distinct Samples Proportional Exploration (DS-PE) which assigns queries in proportion to the box sizes; and the various  normalized single phase variants of DS-UE, which we refer to as NDS-UE, ENDS-UE and NDS-UE (MLE). To compare the performance of these different estimators under the general setting, we use the following dataset. 
\begin{itemize}
    \item Trending Youtube Video Statistics Dataset \cite{youtube_dataset}: This dataset contains the top trending videos for different regions such as Canada, US, and Japan, out of which we consider six regions. Mapped to our framework, a region corresponds to a box, a channel denotes a community, and each video represents an individual entity. The goal is to find the most popular channel which has the largest number of trending videos across the six regions. Note that a particular channel (community) can have trending videos (individuals) spread across different regions (boxes) and thus this dataset corresponds to the general setting. This dataset contains 239662 videos, each associated with one of 17773 channels. Top 5 channels have [870, 809, 752, 717, 712] top trending videos across regions. The boxes have comparable size, given by  [40881, 40840, 40724, 38916, 37352, 40949]. 
\end{itemize}
Figure~\ref{figure:general_box_youtube} shows the performance of the various algorithms on the above dataset. Note that all the estimators are able to achieve an exponential decay in the probability of error with the query budget even in this general setting. Furthermore, here the rate of decay for all the estimators is comparable since the box sizes are all similar and thus the knowledge of box sizes does not provide a distinct advantage. However, in terms of the absolute value, DS-UE performs slightly poorly as compared to the other algorithms which either use prior knowledge of box sizes or learn estimates for them using samples.

\appendix

\section{Proof of Theorem \ref{theorem:mixed_identityless_ub}}
\label{sec:proof_sample_freq_estimate}

Let $\hat{\mu}_i(t)$ be the number of samples seen from $C_i$ over the horizon. We have
\begin{align*}
    \hat{\mu}_i(t) = \sum_{j=1}^t \mathds{1}_{\text{\{person j $\in$ $C_i$\}}}\\
    \Rightarrow E[\hat{\mu}_{i}(t)] = \mu_{i}(t) = \frac{td_i}{N}.
\end{align*}
Using the union bound on $P_e{(D)}$, we get
\begin{align*}
    P_e{(D)} \leq \sum_{i=2}^{m}P(\hat{\mu}_{i}(t)-\hat{\mu}_{1}(t) \geq 0).
\end{align*}
The Chernoff bound gives us
\begin{align*}
    P\left(\hat{\mu}_{k}(t)-\hat{\mu}_{1}(t) - \left(\mu_{k}(t)-\mu_{1}(t)\right) \geq w\right) &\leq \min\limits_{\lambda > 0}e^{-\lambda w}E\left[e^{\lambda(\hat{\mu}_{k}(t)-\hat{\mu}_{1}(t) - (\mu_{k}(t)-\mu_{1}(t))}\right] \\&= \min\limits_{\lambda > 0}e^{-\lambda[w+(\mu_{k}(t)-\mu_{1}(1))]}\left[\frac{d_{k}e^{\lambda}}{N}+\frac{d_{1}e^{-\lambda}}{N}+\left(1-\frac{d_1+d_k}{N}\right)\right]^t.
\end{align*}
Choosing $w = \mu_{1}(t)-\mu_{k}(t)$ and minimizing over $\lambda$,
\begin{align}
\label{mixed:identityless_ub_1}
    P(\hat{\mu}_{k}(t)-\hat{\mu}_{1}(t) \geq 0) \leq \left[1-\frac{(\sqrt{d_1}-\sqrt{d_k})^2}{N}\right]^t
\end{align}
\begin{align*}
    \Rightarrow P_e{(D)} \leq \sum_{i=2}^{m}P(\hat{\mu}_{i}(t)-\hat{\mu}_{1}(t) \geq 0) &\leq \sum_{i=2}^{m} \left[1-\frac{(\sqrt{d_1}-\sqrt{d_k})^2}{N}\right]^t\\
    &\leq (m-1)\left[1-\frac{(\sqrt{d_1}-\sqrt{d_2})^2}{N}\right]^t.
\end{align*}

\ignore{Since $d_1>d_k$, we have $\frac{(\sqrt{d_1}-\sqrt{d_k})^2}{N} > \frac{(d_1-d_k)^2}{4Nd_1}$.
\begin{align}
\label{mixed:identityless_ub_2}
    P(\hat{\mu}_{k}(t)-\hat{\mu}_{1}(t) \geq 0) \leq exp\left(-\frac{t(d_1-d_k)^2}{4Nd_1}\right)
\end{align}
\begin{align}
\label{mixed:identityless_ub_3}
    \Rightarrow P_e{(D)} \leq \sum_{i=2}^{m}P(\hat{\mu}_{i}(t)-\hat{\mu}_{1}(t) \geq 0) &\leq \sum_{i=2}^{m} exp\left(-\frac{t(d_1-d_i)^2}{4Nd_1}\right)\\
    &\leq (m-1)exp\left(-\frac{t(d_1-d_2)^2}{4Nd_1}\right)
\end{align}
The intermediate inequality in the theorem statement is obtained if the bound used in (\ref{mixed:identityless_ub_3}) is replaced with (\ref{mixed:identityless_ub_1}) instead of (\ref{mixed:identityless_ub_2}).
}


\section{Proof of Theorem \ref{theorem:mixed_identityless_lb_1}}
\label{sec:proof_identityless_consistent_lb}

\newtext{To prove the theorem, we consider two instances $D = (d_1,d_2,\ldots,d_m)$ and $D'=(d_1',d_2',\ldots,d_m')$, where the optimal community in $D$ is $C_1$ and the optimal community in $D'$ is $C_2$. We note that the mixed community setting can be modelled as a probability distribution over communities, with the probability of sampling $C_i$ under $D$ and $D'$ being $p_i = d_i / N$ and $p_i' = d_i' / N$ respectively. Let the probability distributions corresponding to instances $D$ and $D'$ be $\Theta = (p_1, p_2, ... p_m)$ and $\Theta' = (p_1', p_2', ... p_m')$ respectively. Further, let the sequence of $t$ samples be denoted by $X_1, X_2, \ldots, X_t$ where $X_i$ is the index of the community that is sampled at time $i$, and  let $\mathbb{P}_{\Theta}, \mathbb{P}_{\Theta'}$ denote the probability measures induced on the sample sequence by the instances $D$, $D'$. Next, we state a few lemmas which will help in the proof of the theorem.}
\begin{lemma}
\label{lemma:mixed_identityless_lb_proof}
For every event $\mathcal{E} \in F_t$, where $F_t = \sigma(X_1, X_2, ... X_t)$,
\begin{align*}
    \mathbb{P}_{\Theta'}(\mathcal{E}) = \mathbb{E}_\Theta[\mathds{1}_\mathcal{E} \exp(-L_t)],
\end{align*}
\newtext{where $L_t = \sum_{i=1}^{t}\log\left(\frac{p_{X_i}}{p'_{X_i}}\right)$ and $\mathds{1}$ is the indicator random variable.}
\end{lemma}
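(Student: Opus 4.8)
The plan is to prove this by an explicit change of measure on the finite sample trajectory, exploiting the fact that in the identity-less mixed community setting the observations $X_1,\dots,X_t$ form an i.i.d.\ sequence under each instance. First I would observe that under $\mathbb{P}_\Theta$ the samples are i.i.d.\ with common pmf $\Theta=(p_1,\dots,p_m)$, and under $\mathbb{P}_{\Theta'}$ they are i.i.d.\ with pmf $\Theta'=(p_1',\dots,p_m')$; consequently, for any fixed trajectory $x=(x_1,\dots,x_t)\in[m]^t$ the joint probabilities factorize as $\mathbb{P}_\Theta(X=x)=\prod_{i=1}^t p_{x_i}$ and $\mathbb{P}_{\Theta'}(X=x)=\prod_{i=1}^t p'_{x_i}$.

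Next, since each community has size at least one we have $p_i>0$ for every $i$, so the ratio $\prod_{i=1}^t p'_{x_i}/p_{x_i}$ is well defined on every trajectory, and by the definition of $L_t$ this ratio equals $\exp(-L_t(x))$. Hence, pointwise on $[m]^t$,
\begin{align*}
\mathbb{P}_{\Theta'}(X=x) = \mathbb{P}_\Theta(X=x)\,\exp(-L_t(x)).
\end{align*}

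Because $\mathcal{E}\in F_t=\sigma(X_1,\dots,X_t)$, the event $\mathcal{E}$ is determined by the value of the trajectory; that is, $\mathds{1}_\mathcal{E}$ is a deterministic function of $(X_1,\dots,X_t)$, and $\mathcal{E}$ may be identified with a subset $E\subseteq[m]^t$. Summing the pointwise identity over $x\in E$ then yields
\begin{align*}
\mathbb{P}_{\Theta'}(\mathcal{E}) = \sum_{x\in E}\mathbb{P}_{\Theta'}(X=x) = \sum_{x\in E}\mathbb{P}_\Theta(X=x)\,\exp(-L_t(x)) = \mathbb{E}_\Theta\!\left[\mathds{1}_\mathcal{E}\exp(-L_t)\right],
\end{align*}
which is precisely the claimed identity.

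The computation itself is a routine one-line change of variables; the only point requiring care is the well-definedness of the likelihood ratio, i.e.\ ensuring $p_i>0$ so that no trajectory in the support of $\mathbb{P}_\Theta$ is divided by zero. If one wished to permit zero-probability communities, the same argument would go through provided $\mathbb{P}_{\Theta'}\ll\mathbb{P}_\Theta$ on $F_t$, with the usual convention that trajectories outside the support of $\Theta$ contribute zero to both sides; verifying this absolute-continuity condition is the main (and essentially the only) obstacle.
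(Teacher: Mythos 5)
Your proof is correct and is essentially the argument the paper intends: the paper proves this lemma simply by citing \cite[Lemma 18]{Kaufmann16a}, which is exactly this change-of-measure/likelihood-ratio identity, and your explicit trajectory-by-trajectory computation (factorizing the i.i.d.\ joint pmf, identifying $\prod_{i=1}^t p'_{x_i}/p_{x_i}$ with $\exp(-L_t)$, and summing over the set $E \subseteq [m]^t$ determined by $\mathcal{E} \in F_t$) is the standard proof of that cited lemma specialized to identity-less sampling. Your closing remark on well-definedness ($p_i > 0$, or absolute continuity with the usual conventions) is the right caveat and is harmless here since every community is nonempty in the instances considered.
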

\begin{proof} This is analogous to \cite[Lemma 18]{Kaufmann16a}. \end{proof}
\begin{lemma}
\label{lemma:mixed_identityless_lb_proof_2}
For every event $\mathcal{E} \in F_t$,
\begin{align*}
    \mathbb{E}_\Theta[L_t|\mathcal{E}] \geq \log\frac{\mathbb{P}_\Theta(\mathcal{E})}{\mathbb{P}_{\Theta'}(\mathcal{E})}.
\end{align*} 
\end{lemma}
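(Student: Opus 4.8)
The plan is to derive this inequality directly from Lemma~\ref{lemma:mixed_identityless_lb_proof} via a single application of Jensen's inequality. The starting point is the change-of-measure identity $\mathbb{P}_{\Theta'}(\mathcal{E}) = \mathbb{E}_\Theta[\mathds{1}_\mathcal{E} \exp(-L_t)]$. First I would rewrite the right-hand side in terms of a conditional expectation: by the definition of conditional expectation (assuming $\mathbb{P}_\Theta(\mathcal{E}) > 0$, the only case of interest),
\begin{align*}
\mathbb{E}_\Theta[\mathds{1}_\mathcal{E} \exp(-L_t)] = \mathbb{P}_\Theta(\mathcal{E})\, \mathbb{E}_\Theta[\exp(-L_t) \mid \mathcal{E}].
\end{align*}
Dividing through by $\mathbb{P}_\Theta(\mathcal{E})$ then gives the clean expression $\frac{\mathbb{P}_{\Theta'}(\mathcal{E})}{\mathbb{P}_\Theta(\mathcal{E})} = \mathbb{E}_\Theta[\exp(-L_t) \mid \mathcal{E}]$.

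The key step is to note that $x \mapsto \exp(-x)$ is convex, so Jensen's inequality applied to the conditional measure $\mathbb{P}_\Theta(\cdot \mid \mathcal{E})$ yields
\begin{align*}
\mathbb{E}_\Theta[\exp(-L_t) \mid \mathcal{E}] \geq \exp\left(-\mathbb{E}_\Theta[L_t \mid \mathcal{E}]\right).
\end{align*}
Combining the two displays above gives $\frac{\mathbb{P}_{\Theta'}(\mathcal{E})}{\mathbb{P}_\Theta(\mathcal{E})} \geq \exp\left(-\mathbb{E}_\Theta[L_t \mid \mathcal{E}]\right)$, and taking logarithms on both sides (the logarithm being increasing) and rearranging produces exactly $\mathbb{E}_\Theta[L_t \mid \mathcal{E}] \geq \log\frac{\mathbb{P}_\Theta(\mathcal{E})}{\mathbb{P}_{\Theta'}(\mathcal{E})}$, which is the claim.

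There is no genuine obstacle here; the lemma is a one-line consequence of Jensen once the change-of-measure identity from Lemma~\ref{lemma:mixed_identityless_lb_proof} is rephrased as a conditional expectation. The only points requiring a small amount of care are (i) ensuring $\mathbb{P}_\Theta(\mathcal{E}) > 0$ so that conditioning is well defined and the division is legitimate (the inequality is vacuous or trivially handled otherwise), and (ii) correctly tracking the sign when moving the logarithm of the ratio $\mathbb{P}_{\Theta'}(\mathcal{E})/\mathbb{P}_\Theta(\mathcal{E})$ across the inequality, which flips it into the stated ratio $\mathbb{P}_\Theta(\mathcal{E})/\mathbb{P}_{\Theta'}(\mathcal{E})$. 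This mirrors the corresponding step in \cite{Kaufmann16a}, and together with Lemma~\ref{lemma:mixed_identityless_lb_proof} it feeds into the transportation-type argument used to establish the asymptotic lower bound in Theorem~\ref{theorem:mixed_identityless_lb_1}.
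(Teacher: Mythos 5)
Your proof is correct and follows essentially the same route as the paper: both start from the change-of-measure identity of Lemma~\ref{lemma:mixed_identityless_lb_proof}, rewrite it as a conditional expectation given $\mathcal{E}$, and apply Jensen's inequality to the convex function $x \mapsto \exp(-x)$ before taking logarithms. Your handling of the edge case $\mathbb{P}_\Theta(\mathcal{E}) = 0$ is a minor point of added care that the paper leaves implicit.
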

\begin{proof} From Lemma \ref{lemma:mixed_identityless_lb_proof}, we know that $\mathbb{P}_{\Theta'}(\mathcal{E}) = \mathbb{E}_{\Theta}[\exp(-L_t)\mathds{1}_\mathcal{E}]$. Then, using Jensen's inequality on $\exp(-x)$, we have that
\begin{align*}
    \mathbb{P}_{\Theta'}(\mathcal{E}) = \mathbb{E}_{\Theta}[\exp(-L_t)\mathds{1}_\mathcal{E}] = \mathbb{E}_{\Theta}[\mathbb{E}_{\Theta}[\exp(-L_t)|\mathds{1}_\mathcal{E}]\mathds{1}_\mathcal{E}] \geq \mathbb{E}_{\Theta}[\exp(-\mathbb{E}_\Theta[L_t|\mathcal{E}])\mathds{1}_\mathcal{E}]
    \\ = \exp(-\mathbb{E}_\Theta[L_t|\mathcal{E}])\mathbb{P}_\Theta(\mathcal{E})
\end{align*}
The last line above proves the lemma. 
\end{proof}
\begin{lemma}
If $d(x, y) = x\log\left(\frac{x}{y}\right)+(1-x)\log\left(\frac{(1-x)}{(1-y)}\right)$, then for every event $\mathcal{E} \in F_t$, 
\begin{align*}
    \mathbb{E}_{\Theta'}[-L_t] \geq d(\mathbb{P}_{\Theta'}(\mathcal{E}), \mathbb{P}_{\Theta}(\mathcal{E})).
\end{align*}
\end{lemma}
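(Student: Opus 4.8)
The plan is to read the claimed inequality as the data-processing inequality for the Kullback--Leibler divergence, specialized to the two-cell partition $\{\mathcal{E}, \mathcal{E}^c\}$, and to derive it directly from the two preceding lemmas rather than invoking data-processing as a black box. The key observation is that $-L_t = \sum_{i=1}^{t}\log\left(\frac{p'_{X_i}}{p_{X_i}}\right)$ is exactly the log-likelihood ratio of $\Theta'$ against $\Theta$, which is the mirror image of $L_t$. Since the hypotheses of Lemmas~\ref{lemma:mixed_identityless_lb_proof} and~\ref{lemma:mixed_identityless_lb_proof_2} are symmetric in the two instances, both lemmas remain valid after interchanging the roles of $\Theta$ and $\Theta'$ and simultaneously replacing $L_t$ by $-L_t$. (For intuition, note that the left-hand side is $\mathbb{E}_{\Theta'}[-L_t] = D(\mathbb{P}_{\Theta'}\,||\,\mathbb{P}_{\Theta})$ and $d(\cdot,\cdot)$ is the binary KL divergence, so the statement is genuinely a coarsening inequality.)

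First I would invoke this interchanged form of Lemma~\ref{lemma:mixed_identityless_lb_proof_2} to conclude that, for \emph{every} event $\mathcal{A} \in F_t$,
\[
\mathbb{E}_{\Theta'}[-L_t \mid \mathcal{A}] \geq \log \frac{\mathbb{P}_{\Theta'}(\mathcal{A})}{\mathbb{P}_{\Theta}(\mathcal{A})}.
\]
Next I would apply this bound to the two events $\mathcal{A}=\mathcal{E}$ and $\mathcal{A}=\mathcal{E}^{c}$, and decompose the unconditional expectation under $\mathbb{P}_{\Theta'}$ by the law of total expectation,
\[
\mathbb{E}_{\Theta'}[-L_t] = \mathbb{P}_{\Theta'}(\mathcal{E})\,\mathbb{E}_{\Theta'}[-L_t \mid \mathcal{E}] + \mathbb{P}_{\Theta'}(\mathcal{E}^{c})\,\mathbb{E}_{\Theta'}[-L_t \mid \mathcal{E}^{c}].
\]
Substituting the two conditional lower bounds then gives
\[
\mathbb{E}_{\Theta'}[-L_t] \geq \mathbb{P}_{\Theta'}(\mathcal{E}) \log \frac{\mathbb{P}_{\Theta'}(\mathcal{E})}{\mathbb{P}_{\Theta}(\mathcal{E})} + \mathbb{P}_{\Theta'}(\mathcal{E}^{c}) \log \frac{\mathbb{P}_{\Theta'}(\mathcal{E}^{c})}{\mathbb{P}_{\Theta}(\mathcal{E}^{c})},
\]
and the right-hand side is precisely $d(\mathbb{P}_{\Theta'}(\mathcal{E}), \mathbb{P}_{\Theta}(\mathcal{E}))$ by the definition of $d$, which completes the argument.

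The step that needs care is the bookkeeping of signs: since $L_t$ is the log-likelihood ratio of $\Theta$ over $\Theta'$ while the statement concerns $-L_t$, getting the direction backwards would swap the arguments of $d$ and invalidate the bound, so I would verify the interchanged lemma explicitly. The only other technical wrinkle is the degenerate case $\mathbb{P}_{\Theta'}(\mathcal{E}) \in \{0,1\}$, where one conditional expectation is ill-defined; there the corresponding term vanishes under the standard convention $0\log 0 = 0$ and the inequality holds trivially. I do not expect a genuine obstacle here, as all the probabilistic content is already packaged in Lemmas~\ref{lemma:mixed_identityless_lb_proof} and~\ref{lemma:mixed_identityless_lb_proof_2}; the work is simply a careful symmetric application and aggregation over the partition, rather than any new idea.
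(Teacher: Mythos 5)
Your proposal is correct and follows essentially the same route as the paper: the paper's proof likewise applies the $(\Theta,\Theta')$-interchanged form of Lemma~\ref{lemma:mixed_identityless_lb_proof_2} to both $\mathcal{E}$ and $\mathcal{E}^c$ and combines the two conditional lower bounds via the law of total expectation. Your explicit verification of the sign bookkeeping in the interchange and of the degenerate case $\mathbb{P}_{\Theta'}(\mathcal{E})\in\{0,1\}$ merely makes precise what the paper leaves implicit.
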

\begin{proof} From Lemma \ref{lemma:mixed_identityless_lb_proof_2} we know that 
\newtext{
\begin{align*}
    \mathbb{E}_{\Theta'}[-L_t|\mathcal{E}] \geq \log\left(\frac{\mathbb{P}_{\Theta'}(\mathcal{E})}{\mathbb{P}_{\Theta}(\mathcal{E})}\right), \mathbb{E}_{\Theta'}[-L_t|\mathcal{E}^c] \geq \log\left(\frac{\mathbb{P}_{\Theta'}(\mathcal{E}^c)}{\mathbb{P}_{\Theta}(\mathcal{E}^c)}\right).
\end{align*}
Using the total law of probability and the above inequality, we get
\begin{align*}
    \mathbb{E}_{\Theta'}[-L_t] = \mathbb{E}_{\Theta'}[-L_t|\mathcal{E}]\mathbb{P}_{\Theta'}(\mathcal{E}) + \mathbb{E}_{\Theta'}[-L_t|\mathcal{E}^c]\mathbb{P}_{\Theta'}(\mathcal{E}^c) \geq d(\mathbb{P}_{\Theta'}(\mathcal{E}), \mathbb{P}_{\Theta}(\mathcal{E}^c)).
\end{align*}}
\end{proof}
%
Consider a consistent algorithm $\mathcal{A}$, and let $P_e(D)$ and $P_e(D')$ denote the  probabilities of error for $\mathcal{A}$ under the instances $D$ and $D'$ respectively. Denote the community that is output by $\mathcal{A}$ as $\hat{h}^*$, and let $S$ be the event that $\hat{h}^*=1$. Thus, $P_e(D) = 1-\mathbb{P}_{\Theta}(S)$ and $P_e(D') \geq \mathbb{P}_{\Theta'}(S)$. \newtext{Since algorithm $\mathcal{A}$ is consistent and thus its probability of error on both $D, D'$ goes to zero as the number of samples $t$ grows large}, we have that for every $\epsilon>0$ there exists $t_0(\epsilon)$ such that for all $t \geq t_0(\epsilon), \mathbb{P}_{\Theta'}(S) \leq \epsilon \leq \mathbb{P}_{\Theta}(S)$. For $t \geq t_0(\epsilon)$, 
\begin{align*}
    \mathbb{E}_{\Theta'}[-L_t] \geq d(\mathbb{P}_{\Theta'}(S), \mathbb{P}_{\Theta}(S)) \geq d(\epsilon, \mathbb{P}_{\Theta}(S)) \geq \epsilon\log\left(\frac{\epsilon}{\mathbb{P}_{\Theta}(S)}\right) + (1-\epsilon)\log\left(\frac{(1-\epsilon)}{P_e(D)}\right) \\\geq \epsilon\log(\epsilon) +  (1-\epsilon)\log\left(\frac{(1-\epsilon)}{P_e(D)}\right)
\end{align*}
Taking the limsup, using $\mathbb{E}_{\Theta'}[-L_t] = t.D(\Theta' || \Theta)$ where $D(\cdot || \cdot)$ denotes the Kullback-Leibler divergence, and letting $\epsilon \rightarrow 0$, we get 
\begin{align*}
    \limsup_{t\rightarrow\infty} -\frac{1}{t}\log(P_e(D)) \leq D(\Theta' || \Theta).
\end{align*}

Consider $\Theta = (p_1, p_2, ... p_m)$ and $\Theta' = (\frac{\sqrt{p_1p_2}-\delta}{C}, \frac{\sqrt{p_1p_2}+\delta}{C}, \frac{p_3}{C}, ... \frac{p_m}{C})$, where $C = 1 - (\sqrt{p_1}-\sqrt{p_2})^2$ \newtext{and $\delta > 0$ is sufficiently small so that~$\Theta'$ is a probability distribution}. Then, we get
\begin{gather*}
    \limsup_{t\rightarrow\infty} -\frac{1}{t}\log(P_e(D)) \leq \log\left(\frac{1}{C}\right)+\left(\frac{\sqrt{p_1p_2}-\delta}{C}\right)\log\left(\frac{\sqrt{p_1p_2}-\delta}{p_1}\right)+\left(\frac{\sqrt{p_1p_2}+\delta}{C}\right)\log\left(\frac{\sqrt{p_1p_2}+\delta}{p_2}\right)\\\
    \implies \limsup_{t\rightarrow\infty} -\frac{1}{t}\log(P_e(D)) \leq \log\left(\frac{1}{C}\right) \text{ (letting $\delta \downarrow 0$).}
\end{gather*}

\ignore{
\section{Proof of Theorem \ref{theorem:mixed_identityless_lb_2}}
\label{sec:proof_identityless_permutation_lb}
\sj{This seems to be scrapped for now - have not re-written this proof}
\begin{theorem}
For any permutation-invariant algorithm on an instance $(d_1, d_2, ... d_m)$, the probability of error is lower bounded as
\begin{align*}
    P(error) \geq \frac{1}{4}\exp\left(-\frac{t(d_1-d_2)}{N}\log\left(\frac{d_1}{d_2}\right)\right)
\end{align*}
\end{theorem}
\textbf{Proof.} We use the following lemma:
\begin{lemma}
\label{lemma:mixed_identityless_lb_proof_3}
Let $\rho_0$ and $\rho_1$ be two probability distributions supported on some set $\chi$, with $\rho_1$ absolutely continuous with respect to $\rho_0$. Then for any measurable function $\phi : \chi \rightarrow \{0,1\}$, 
\begin{align*}
    P_{X\sim\rho_0}(\phi(X)=1)+P_{X\sim\rho_1}(\phi(X)=0) \geq \frac{1}{2}\exp\left(-KL(\rho_0,\rho_1)\right)
\end{align*}
\end{lemma}
\textbf{Proof of Lemma.} This is Lemma 20 in \cite{Kaufmann16a}. $\blacksquare$

Let $v = (p_1, p_2, ...p_m)$ and $v' = (p_2, p_1, ...p_m)$ be two distributions, with $p_1> p_2\geq p_3...\geq p_m$, $p_i = \frac{d_i}{N}$. Let $i^*$ be the community that is output by our algorithm, and let $p_t(v), p_t(v')$ denote the probabilities of error under the respective distributions. Since the distributions are the same upto a permutation, $p_t(v) = p_t(v')$. Let $\mathbb{P}_v, \mathbb{P}_{v'}$ denote the distribution of the algorithm over the output community. Then, we have 
\begin{align*}
    max(\mathbb{P}_v(i^*\neq1), \mathbb{P}_{v'}(i^*\neq2)) \geq \frac{1}{2}\left(\mathbb{P}_v(i^*\neq1)+\mathbb{P}_{v'}(i^*=1)\right)
\end{align*}
We also note that $\mathbb{P}_v(i^*\neq1) = p_t(v) = p_t(v') = \mathbb{P}_{v'}(i^*\neq2)$. Thus,
\begin{align*}
    \mathbb{P}_v(i^*\neq1) \geq \frac{1}{2}\left(\mathbb{P}_v(i^*\neq1)+\mathbb{P}_{v'}(i^*=1)\right)
\end{align*}
Let $\rho_0 = \mathbb{P}_v, \rho_1 = \mathbb{P}_{v'}$ be the distributions of $i^*$ for algorithm $A$ under distributions $v$ and $v'$. Applying Lemma \ref{lemma:mixed_identityless_lb_proof_3} with $\phi(x) = \mathds{1}_{x \neq 1}$ gives
\begin{align*}
    P_v(i^*\neq1) \geq \frac{1}{4}\exp\left(-D(\mathbb{P}_v || \mathbb{P}_{v'})\right)
\end{align*}
From Lemma \ref{lemma:mixed_identityless_lb_proof_2} we have
\begin{align*}
    \mathbb{E}_v[L_t|i^*=k] \geq \log\left(\frac{\mathbb{P}_{v}(i^*=k)}{\mathbb{P}_{v'}(i^*=k)}\right)
\end{align*}
Thus, we have 
\begin{align*}
    \mathbb{E}_v[L_t] = \sum_{k=1}^{m}\mathbb{E}_v[L_t|i^*=k]\mathbb{P}_v(i^*=k) \geq \sum_{k=1}^{m}\log\left(\frac{\mathbb{P}_{v}(i^*=k)}{\mathbb{P}_{v'}(i^*=k)}\right)\mathbb{P}_v(i^*=k) = KL(\mathbb{P}_v,\mathbb{P}_{v'})
\end{align*}
And we conclude that
\begin{align*}
    P_v(i^*\neq1) \geq \frac{1}{4}\exp\left(-\mathbb{E}_v[L_t]\right) 
\end{align*}
$\mathbb{E}_v[L_t]$ can be computed as
\begin{align*}
    \mathbb{E}_v[L_t] = \mathbb{E}_v\left[\sum_{i=1}^{t}\log\left(\frac{p(X_i)}{p'(X_i)}\right)\right] = t.KL(v, v') = t.(p_1-p_2)\log\left(\frac{p_1}{p_2}\right)
\end{align*}
Thus, probability of error is atleast (using that $p_i = \frac{d_i}{N}$)
\begin{align*}
    P_v(i^*\neq1) \geq \frac{1}{4}\exp\left(\frac{-t(d_1-d_2)\log\left(\frac{d_1}{d_2}\right)}{N}\right) 
\end{align*}
While this is weaker than Theorem \ref{theorem:mixed_identityless_lb_1}, it does not assume any structure on the algorithm, and is true for non-asymptotic values of $t$. $\blacksquare$ 
}


\section{Proof of Theorem \ref{theorem:mixed_identity_ub}}
\label{sec:proof_mixed_identity_ub}

\ignore{ 
\begin{theorem}
For any instance $D = (d_1, d_2, ... d_m),$ where $d_1 > d_2 \geq d_3 \geq \cdots \geq d_m,$ the Distinct Samples Maximization (DSM) algorithm 
has a probability of error upper bounded as
\begin{align}
    &P_e(D) \leq 2(m-1)\exp\left(-\frac{t\left(d_1-\frac{\sum_{i=2}^{m}d_i}{m-1}\right)^2}{32Nd_1}\right)\quad \text{ for } t\leq min\left\{\frac{d_1+d_m}{2d_1}N, \frac{16Nd_1}{(d_1-d_m)^2}\right\}, \\
    &P_e(D) \leq 2(m-1)exp\left(-\frac{t(d_1-d_2)^2}{32Nd_1}\right) \quad \text{ for } t \leq \frac{d_1+d_2}{2d_1}N \\
    &P_e(D) \leq {\binom{d_1}{d_2}} \left(1-\frac{d_1-d_2}{N}\right)^t {= {\binom{d_1}{d_2}}\exp\left(-t\log\left(\frac{N}{N-d_1+d_2}\right)\right)} 
    \quad \forall t.
\end{align}
\end{theorem}
} 

We will begin by proving the first assertion in the theorem statement which provides an upper bound on the probability of error for $t\leq \min\left\{\frac{d_1+d_m}{2d_1}N, \frac{16Nd_1}{(d_1-d_m)^2}\right\}$. Let $S_i(t)$ denote the number of distinct samples seen from community $C_i$ in $t$ samples. We have the following lemma:
\begin{lemma}
\label{lemma:mixed_identity_distinct_samples_bound} The probability of error of the DSM algorithm is bounded as
\begin{align*}
    P_e(D) \leq \sum_{i=2}^{m}P(S_i(t)-S_1(t) > 0) + \frac{1}{2}P(S_i(t) = S_1(t)).
\end{align*}
\end{lemma}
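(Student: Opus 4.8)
The plan is to analyse the DSM decision rule directly, conditioning on the realized vector $(S_j(t),\ 1 \le j \le m)$ of distinct-individual counts. First I would introduce the (random) set of empirical winners $W = \{ j : S_j(t) = \max_k S_k(t)\}$. Since DSM returns a uniformly random element of $W$, the conditional probability of error given the counts equals exactly $1$ when $1 \notin W$, and equals $(|W|-1)/|W|$ when $1 \in W$ (community~$1$ is correctly returned only if the uniform tie-break selects it, with probability $1/|W|$). Taking expectations yields the decomposition
\[
P_e(D) = \mathbb{P}(1 \notin W) + \mathbb{E}\!\left[\mathbf{1}_{\{1 \in W\}}\,\tfrac{|W|-1}{|W|}\right],
\]
which separates the error into a \emph{strict-loss} term and a \emph{tie-breaking} term that I will bound individually.

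For the strict-loss term, note that $1 \notin W$ forces some competing community to strictly beat community~$1$, i.e.\ $\{1 \notin W\} \subseteq \bigcup_{i=2}^m \{S_i(t) > S_1(t)\}$; a union bound then gives $\mathbb{P}(1 \notin W) \le \sum_{i=2}^m \mathbb{P}(S_i(t) - S_1(t) > 0)$. For the tie-breaking term, the key elementary inequality is $\tfrac{|W|-1}{|W|} \le \tfrac12 (|W|-1)$, valid for all $|W| \ge 1$ (with equality at $|W|\in\{1,2\}$), which applies on $\{1\in W\}$ since there $|W|\ge 1$. On this event one has $|W| - 1 = \sum_{i=2}^m \mathbf{1}_{\{S_i(t) = \max_k S_k(t)\}}$, and because $1 \in W$ means the maximum is attained by $S_1(t)$, each such indicator forces $S_i(t) = S_1(t)$. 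Hence $\mathbf{1}_{\{1\in W\}}\tfrac{|W|-1}{|W|} \le \tfrac12 \sum_{i=2}^m \mathbf{1}_{\{S_i(t)=S_1(t)\}}$, and taking expectations bounds this term by $\tfrac12 \sum_{i=2}^m \mathbb{P}(S_i(t) = S_1(t))$. Adding the two bounds gives the claim.

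The only delicate point I anticipate is the correct accounting of the random tie-break: a naive union bound over $\{S_i(t) \ge S_1(t)\}$ would overcount and lose the factor $\tfrac12$. The inequality $\tfrac{|W|-1}{|W|} \le \tfrac12(|W|-1)$ is precisely what converts the aggregate tie-break penalty into the per-community $\tfrac12\,\mathbb{P}(S_i(t) = S_1(t))$ contributions, so that each competing community tied at the maximum contributes error at most $\tfrac12$ rather than~$1$.
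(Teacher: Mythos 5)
Your proof is correct and takes essentially the same route as the paper's: a union bound over the competing communities, with each tie event $\{S_i(t)=S_1(t)\}$ contributing at most $\tfrac{1}{2}$ to the error probability. The paper's proof states the tie-breaking bound informally (``with some positive probability, bounded by $1/2$''), whereas you make it rigorous via the winners-set decomposition and the elementary inequality $\tfrac{|W|-1}{|W|} \le \tfrac{1}{2}(|W|-1)$, which is precisely the accounting the paper's sketch implicitly relies on.
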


\newtext{
\begin{proof}
For any $i \in 2,3,\ldots, m$, it is clear that when $S_i(t) - S_1(t) > 0$, DSM will erroneously output $i$ as the index of the community mode. Furthermore, since DSM breaks ties arbitrarily, with some positive probability (bounded by $1/2$) it makes the same error when $S_i(t) = S_1(t)$. Together with the union bound over all $i \in 2,3,\ldots, m$, this gives the above result. 
\end{proof}
}

\ignore{
\begin{proof}
Let us denote a sequence of samples as $(x_1, x_2, ... , x_t)$. Note that the total number of possible sequences are $N^t$, and all of these sequences are equiprobable.
Let $K(error)$ denote the number of sequences in which $\hat{h}^* \neq C_1$. Similarly, we use $K(S_i(t)-S_1(t)>0)$ and $K(S_i(t) = S_1(t))$ to denote the number of sequences with the community $C_i$ having more elements and equal number of elements to $C_1$ respectively. Note that, in the case that multiple communities have the largest size, we output any of them with equal probability.\\
We shall prove that
\begin{align*}
    K(error) \leq \sum_{i=2}^{m}K(S_i(t)-S_1(t) > 0) + \frac{1}{2}K(S_i(t)=S_1(t)),
\end{align*}
which directly implies our lemma on dividing by $N^t$. 
Let us divide all sequences that belong to $K(error)$ into two parts:
\begin{itemize}
\item Sequences in which $C_1$ does not have the largest size. Such sequences are included in $\sum_{i=2}^{m}K(S_i(t)-S_1(t) > 0)$ atleast once.
\item Sequences in which multiple communities, including $C_1$ have the largest size. {\color{red}The probability of error in a sequence with $l$ communities belonging to $h^*(D)$ is $\frac{l-1}{l}$, and thus it will be counted with this weight in the LHS. Such a sequence will be counted $\frac{l-1}{2}$ times in the RHS.} Since $\frac{l-1}{l} \leq \frac{l-1}{2}$, this sequence is counted more times on the RHS compared to the LHS.
\end{itemize}
Thus, the above inequality is true. 
\jk{There are issues with the notation of this proof. It is not meaningful to define the number of sequences $K(error)$ where there is an error, since whether or not we have an error is not completely determined by the sequence. $K(error)$ really is simply the probability or error times $N^t.$ I recommend omitting the proof entirely, and simply providing a couple of sentences of intuition.}
\end{proof}}
Next, for each $k \in \{2,3,\ldots,m\}$ let $Z_k$ be the random variable denoting the number of samples observed from communities $C_1$ and $C_k$.\footnote{Note that $Z_k$ corresponds to the total number of samples from communities $C_1$ and $C_k$, not necessarily distinct.} We note that the expected value of $Z_k$ is given by 
\begin{align}
\label{eqn:mixedexpectation}
    E[Z_k] = \frac{(d_1+d_k)t}{N}.
\end{align}
Define events $E_{k1} = \{Z_k \in [(1-\epsilon_k)E[Z_k],(1+\epsilon_k)E[Z_k]]\}$ and $E_{k2} = E_{k1}^c$, with 
\begin{equation}
\label{Eqn:Epsilon}
\epsilon_k = \frac{\sqrt{\frac{9}{64}\beta_k^4+\frac{3}{2}\beta_k^2}-\frac{3}{8}\beta_k^2}{2} \mbox{ where } \beta_k = \frac{d_1-d_k}{d_1+d_k}. 
\end{equation}
It is easy to verify that $\beta_k < 1$ and $\epsilon_k \le \min\{\beta_k, 1/2\}$. 
Then, we have
\begin{align}
\nonumber
  & \ P(S_k(t) - S_1(t) > 0) + \frac{1}{2}P(S_k(t) = S_1(t)) \\
  \nonumber
  \leq & \ P(S_k(t)-S_1(t) > 0 | E_{k1}) P(E_{k1}) + P(S_k(t)-S_1(t) > 0 | E_{k2}) P(E_{k2}) \\
  \nonumber
  &\hspace{1.75in}+ \frac{1}{2}P(S_k(t)=S_1(t)|E_{k1})P(E_{k1}) + \frac{1}{2}P(S_k(t)=S_1(t)|E_{k2})P(E_{k2})\\
  \leq & \ P(S_k(t)-S_1(t) \geq 0 | E_{k1}) P(E_{k1}) + P(S_k(t)-S_1(t) > 0 | E_{k2}) P(E_{k2}) 
    + \frac{1}{2}P(S_k(t)=S_1(t)|E_{k2})P(E_{k2}).
    \label{Eqn:decomp}
\end{align}
Note that the LHS above appears for each $k \in \{2,3,\ldots,m\}$ in the upper bound on $P_e(D)$ in Lemma~\ref{lemma:mixed_identity_distinct_samples_bound}. We will bound the terms in the RHS separately, and then combine them together to get an overall upper bound on $P_e(D)$. To begin with, note that
\begin{equation}
\label{eqn:mixedcommexpectation}
    E[S_i(t)|Z_k] = d_i \left[1-\left(1-\frac{1}{d_1+d_k}\right)^{Z_k}\right], \text{for $i \in \{1,k\}$}.
\end{equation}
\newtext{We consider the function $f(x_1, x_2, x_3, ..., x_t) = S_k(t) - S_1(t)$ where $x_i$ is the identity of the individual sampled at the $i$-th instant. Note that for any $i \in \{1,2,\ldots,t\}$ and for all $x_1, x_2, x_3, ..., x_t, x_i' \in \{1,2,\ldots,N\}$, we have $|f(x_1, x_2, ...,x_i, ..., x_t) - f(x_1, x_2, ...,x_i', ..., x_t)| \le c_i \triangleq 2\mathds{1}_{x_i \ \in \ C_1 \cup C_k}$}. Then, conditioning on $Z_k$ and applying McDiarmid's inequality, we get
\begin{align*}
    P(f-E[f|Z_k] \geq t'|Z_k) \leq P(|f-E[f|Z_k]| \geq t' | Z_k) &\leq \mathrm{exp}\left(-\frac{2t'^2}{\sum_{i=1}^{t}c_{i}^2}\right) = \newtext{\mathrm{exp}\left(-\frac{t'^2}{2Z_k}\right)}.
\end{align*}
Plugging in $t' = -E[f|Z_k]$, and computing $E[f|Z_k]$ using Equation (\ref{eqn:mixedcommexpectation}), we obtain
\begin{align}
    \label{eqn:mixedterm1bound}
    P(f \geq 0|Z_k) = P(S_k(t)-S_1(t) \geq 0 | Z_k) &\leq exp\left(-\frac{(d_1-d_k)^2\left[1-\left(1-\frac{1}{d_1+d_k}\right)^{Z_k}\right]^2}{2Z_k}\right).
\end{align}
We will start with deriving an upper bound on the first term in the RHS of equation (\ref{Eqn:decomp}) given by $P(S_k(t)-S_1(t) \geq 0 | E_{k1})P(E_{k1})$. Conditioned on the event $E_{k1}$, we have $Z_k \in [(1-\epsilon_k)E[Z_k],(1+\epsilon_k)E[Z_k]]$. Furthermore, from the statement of the first part of the theorem statement and the definitions of $\epsilon_k, \beta_k$ from equation (\ref{Eqn:Epsilon}), we have the following sequence of assertions: 
\begin{equation*}
    t \leq \frac{d_1+d_k}{2d_1}N \Rightarrow \beta_k = \frac{d_1-d_k}{d_1+d_k} \leq \frac{N}{t}-1 \Rightarrow \epsilon_k \le  \frac{N}{t}-1 \Rightarrow Z_k \leq (1+\epsilon_k)\frac{t(d_1+d_k)}{N} \leq d_1+d_k .
\end{equation*}
Using the above inequalities and the Taylor series expansion, we have 
\begin{align}
    \label{eqn:mixedtaylorbound}
    \left[1-\left(1-\frac{1}{d_1+d_k}\right)^{Z_k}\right] &\geq \left[\frac{Z_k}{d_1+d_k} - \frac{{Z_k}^2}{2(d_1+d_k)^2}\right] \geq \frac{Z_k}{2(d_1+d_k)} .
\end{align}
Plugging the bound above in equation (\ref{eqn:mixedterm1bound}), and using $Z_k \geq (1-\epsilon_k)E[Z_k] = (1-\epsilon_k)(d_1+d_k)t/N,$ we have 
\begin{align}
\label{Eqn:Term1}
    P(S_k(t)-S_1(t) \geq 0 | E_{k1}) \times P(E_{k1}) \le P(S_k(t)-S_1(t) \geq 0 | E_{k1}) \leq exp\left(-\frac{t(1-\epsilon_k)(d_1-d_k)^2}{8N(d_1+d_k)}\right),
\end{align}
thus giving us an upper bound on the first term in the RHS of equation (\ref{Eqn:decomp}). \\

For bounding the sum of the second and third terms in the RHS of equation (\ref{Eqn:decomp}), we use the following lemma:
\begin{lemma}
\label{lemma:mixed_identity_distinct_samples_bound_2}
\newtext{For any $k \in \{2,3,\ldots,m\}$ so that $d_k \leq d_1$ and for any $l \ge 0$}, we have
\begin{align*}
    P(S_k(t)-S_1(t) > 0 | Z_k = l) + \frac{1}{2}P(S_k(t)=S_1(t)|Z_k = l) \leq \frac{1}{2}
\end{align*}
\end{lemma}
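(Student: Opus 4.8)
The plan is to condition on $Z_k = l$ and then exploit a symmetry between $C_k$ and an equally-sized sub-community of $C_1$, which is available precisely because $d_k \le d_1$. First I would pin down the conditional model: conditioned on $\{Z_k = l\}$, the samples that land in $C_1 \cup C_k$ are $l$ i.i.d.\ draws, each uniform over the $d_1 + d_k$ individuals of $C_1 \cup C_k$ (the remaining $t-l$ samples fall outside $C_1 \cup C_k$ and affect neither $S_1(t)$ nor $S_k(t)$). Hence the conditional joint law of $(S_1(t), S_k(t))$ given $Z_k = l$ is exactly the law of the distinct-individual counts generated by these $l$ uniform draws.

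Since $d_k \le d_1$, I would fix a subset $A \subseteq C_1$ with $|A| = d_k$ and let $S_A$ denote the number of distinct individuals of $A$ seen among the $l$ draws; because $A \subseteq C_1$ we have $S_1(t) \ge S_A$ deterministically. The central device is the involution $\sigma$ on $C_1 \cup C_k$ that swaps $A$ with $C_k$ through a fixed bijection and fixes $C_1 \setminus A$ pointwise. Applying $\sigma$ coordinate-wise to the $l$ draws is \emph{measure-preserving}, since it merely permutes equally likely individuals, and it interchanges the distinct counts $S_A \leftrightarrow S_k$ while leaving the count from $C_1 \setminus A$ untouched.

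The key step is the pointwise domination $\mathds{1}\{S_k > S_1\} + \tfrac12 \mathds{1}\{S_k = S_1\} \le \mathds{1}\{S_k > S_A\} + \tfrac12 \mathds{1}\{S_k = S_A\}$, verified by checking the three cases $S_k > S_1$, $S_k = S_1$, $S_k < S_1$ and using $S_1 \ge S_A$ (for instance, if $S_k = S_1$ then $S_k \ge S_A$, so the right side is at least $\tfrac12$). Taking conditional expectations given $Z_k = l$ reduces the claim to bounding the right-hand side by $1/2$. For that I would invoke $\sigma$: as it swaps $S_A$ and $S_k$ while preserving the conditional measure, the conditional expectation of $\mathds{1}\{S_k > S_A\} + \tfrac12 \mathds{1}\{S_k = S_A\}$ equals that of $\mathds{1}\{S_A > S_k\} + \tfrac12 \mathds{1}\{S_A = S_k\}$; averaging the two expressions gives exactly $\tfrac12\, E\!\left[\mathds{1}\{S_k > S_A\} + \mathds{1}\{S_A > S_k\} + \mathds{1}\{S_A = S_k\}\,\middle|\,Z_k = l\right] = \tfrac12$, which completes the argument.

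I expect the only delicate point to be the first paragraph: carefully disentangling the conditioning on $Z_k = l$ from the i.i.d.\ structure so as to justify both that $(S_1(t), S_k(t))$ given $Z_k = l$ is the distinct-count vector of $l$ uniform draws on $C_1 \cup C_k$, and that $\sigma$ acts measure-preservingly on this conditional model. Once the conditional model and the involution are set up correctly, the domination inequality and the symmetrization averaging are routine.
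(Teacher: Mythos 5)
Your proof is correct, and it takes a genuinely different --- and in fact more complete --- route than the paper. The paper's proof merely observes that the claim is equivalent to the stochastic-dominance statement $P(S_k(t)-S_1(t) > 0 \mid Z_k = l) \leq P(S_k(t)-S_1(t) < 0 \mid Z_k = l)$, declares this intuitive given $d_k \leq d_1$, and explicitly skips the formal argument ``for brevity'' (an earlier draft, commented out in the source, argued it by induction on the community size, conditioning on how often a distinguished individual of $C_1$ appears). Your argument supplies exactly the missing formal step, and does so cleanly: the reduction of the conditional model to $l$ i.i.d.\ uniform draws on $C_1 \cup C_k$ is standard and correct; the pointwise domination $\mathds{1}\{S_k > S_1\} + \tfrac12 \mathds{1}\{S_k = S_1\} \leq \mathds{1}\{S_k > S_A\} + \tfrac12 \mathds{1}\{S_k = S_A\}$ holds by the three-case check since $S_1 \geq S_A$ deterministically for $A \subseteq C_1$ with $|A| = d_k$; and the involution swapping $A$ with $C_k$ is measure-preserving on the conditional law, so $(S_A, S_k)$ and $(S_k, S_A)$ are exchangeable and the symmetrized expectation is exactly $\tfrac12$. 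What each approach buys: the paper's route is short but leaves a gap (or, in the old draft, requires a somewhat delicate induction with conditioning identities), while your embedding-plus-involution argument is elementary, self-contained, and handles ties and the edge case $l = 0$ automatically, since the bound is attained with equality when $d_k = d_1$.
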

\newtext{
\begin{proof}
Note that the theorem statement is equivalent to showing that, when $d_k \leq d_1$, 
\begin{align*}
    P(S_k(t)-S_1(t) > 0 | Z_k = l) \leq P(S_k(t)-S_1(t) < 0 | Z_k = l),
\end{align*}
which says that, conditioned on the total number of samples from communities $1$ and $k$ together being some fixed $l$, the likely event is that the community $1$, whose size is at least that of community $k$, will have as many or more distinct individuals than community $k$. Given $d_k \leq d_1$, this is intuitive and while it can be argued formally, we skip the argument here for brevity.   
\end{proof}
}

\ignore{\begin{proof} We note that this is equivalent to showing that, when $d_k \leq d_1$, 
\begin{align*}
    P(S_k(t)-S_1(t) > 0 | Z_k = l) \leq P(S_k(t)-S_1(t) < 0 | Z_k = l)
\end{align*}
We will show the above through induction. \\
\textbf{Base Case}: Let there be two communities $i,j$ with $d_i = d_j$. Then, by symmetry, we have that 
\begin{align*}
    P(S_i(t)-S_j(t) > 0 | Z_k = l) = P(S_i(t)-S_j(t) < 0 | Z_k = l)
\end{align*}
\textbf{Induction Hypothesis}: Assume that for $d_i \geq d_j$, we have
\begin{align*}
    P(S_i(t)-S_j(t) > 0 | Z_k = l) \geq P(S_i(t)-S_j(t) < 0 | Z_k = l)
\end{align*}
\textbf{Induction Step}: We will prove the following statement is true for communities $i, j$ of sizes $d_i, d_j$ assuming the induction hypothesis is true for communities $p, j$ of sizes $d_p = d_i-1, d_j$.
\begin{align*}
    P(S_i(t)-S_j(t) > 0 | Z_k = l) \leq P(S_i(t)-S_j(t) < 0 | Z_k = l)
\end{align*}
Let us number each person in {\color{blue}communities} $i, p$ and $j$. We consider only the sub-sequence in which members of community $i, j$ appear. We condition on the number of times that person $d_i$ of community $i$ appears in the sequence to get
\begin{align*}
    P(S_i(t)-S_j(t) > 0 | Z_k = l) \geq \sum_{c=0}^{l}\binom{l}{c}\frac{1}{(d_i+d_j)^c}P(S_p(t)-S_j(t)>0|Z_k = l-c)
\end{align*}
Similarly by conditioning on the number of times person $d_i$ of community $i$ appears, we get
\begin{align*}
    P(S_i(t)-S_j(t) < 0 | Z_k = l) \leq \sum_{c=0}^{l}\binom{l}{c}\frac{1}{(d_i+d_j)^c}P(S_p(t)-S_j(t)<0|Z_k = l-c)
\end{align*}
However, by our induction hypothesis, the following is true for all values of $z$:
\begin{align*}
    P(S_p(t)-S_j(t)>0|Z_k = z) \geq P(S_p(t)-S_j(t)<0|Z_k = z)
\end{align*}
Hence, we have shown the required statement, i.e when $d_i\geq d_j$, we have
\begin{align*}
    P(S_i(t)-S_j(t) > 0 | Z_k = l) \geq P(S_i(t)-S_j(t) < 0 | Z_k = l)
\end{align*}
\end{proof}}

Using Lemma~\ref{lemma:mixed_identity_distinct_samples_bound_2}, we get that the second and third terms in the RHS of equation (\ref{Eqn:decomp}) are bounded as 
\begin{align*}
    P(S_k(t)-S_1(t) > 0 | E_{k2}) P(E_{k2}) + \frac{1}{2}P(S_k(t)=S_1(t)|E_{k2})P(E_{k2}) \leq \frac{1}{2}P(E_{k2}). 
\end{align*}
Further, using Chernoff's inequality for $P(E_{k2})$ and $E[Z_k] = (d_1+d_k)t/N$, we have
\begin{align}
\label{Eqn:Term2}
    \frac{1}{2}P(E_{k2}) = \frac{1}{2}P(|Z_k-E[Z_k]| > \epsilon_k) \leq \mathrm{exp}\left(-\frac{\epsilon_k^2(d_1+d_k)t}{3N}\right). 
\end{align}
Finally, combining Lemma~\ref{lemma:mixed_identity_distinct_samples_bound}, equation (\ref{Eqn:Term1}), and equation (\ref{Eqn:Term2}), we get the following upper bound on $P_e(D)$. 
\begin{align*}
    P_e(D) \leq \sum_{k=2}^{m} \mathrm{exp}\left(-\frac{t(1-\epsilon_k)(d_1-d_k)^2}{8N(d_1+d_k)}\right) + \mathrm{exp}\left(-\frac{\epsilon_k^2(d_1+d_k)t}{3N}\right). 
\end{align*}
From the value of $\epsilon_k$ in equation (\ref{Eqn:Epsilon}), we have that the exponents in the two terms of the summation above are equal. Thus, we have 
\begin{align}
\label{Eqn:ProbError}
    P_e(D) \leq \sum_{k=2}^{m} 2\mathrm{exp}\left(-\frac{t(1-\epsilon_k)(d_1-d_k)^2}{8N(d_1+d_k)}\right) \leq \sum_{k=2}^{m} 2\mathrm{exp}\left(-\frac{t(d_1-d_k)^2}{16N(d_1+d_k)}\right) \leq \sum_{k=2}^{m} 2\mathrm{exp}\left(-\frac{t(d_1-d_k)^2}{32Nd_1}\right),
\end{align}
where the first inequality is true because $\epsilon_k \le 1/2$; and the second inequality follows since $d_k \leq d_1$ for all $k \in \{2,3,\ldots,m\}$.

The next result comments on the shape of the function $f(x) = \mathrm{exp}(-\frac{t(d_1-x)^2}{32Nd_1})$, which appears in equation (\ref{Eqn:ProbError}) above.
\begin{lemma}
\label{lemma:mixed_identity_distinct_samples_bound_3}
The function $f(x) = \mathrm{exp}\left(-\frac{t(d_1-x)^2}{32Nd_1}\right)$ is concave for any \newtext{$x \ge d_m$} and $t \leq \frac{16Nd_1}{(d_1-d_m)^2}$.
\end{lemma}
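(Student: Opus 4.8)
The plan is to prove concavity by a direct second-derivative computation, since $f$ is a smooth, explicitly given function. First I would absorb the constants by writing $a := \frac{t}{32Nd_1} > 0$, so that $f(x) = \exp\!\left(-a(d_1-x)^2\right)$. Differentiating once via the chain rule gives $f'(x) = 2a(d_1-x)\,f(x)$, and differentiating a second time (product rule, substituting $f'$ back in) yields
\[
f''(x) = 2a\,f(x)\left[\,2a(d_1-x)^2 - 1\,\right].
\]
This is the key identity, and the entire argument reduces to controlling the sign of the bracketed factor.

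Since $2a\,f(x) > 0$ everywhere, I would observe that $f''(x) \le 0$ holds if and only if $2a(d_1-x)^2 \le 1$. Substituting $2a = \frac{t}{16Nd_1}$, this condition is exactly $t\,(d_1-x)^2 \le 16Nd_1$. Thus concavity at a point $x$ is equivalent to a clean inequality relating the budget $t$, the population size $N$, the largest community size $d_1$, and the gap $d_1-x$.

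It then remains to verify that the hypotheses $x \ge d_m$ and $t \le \frac{16Nd_1}{(d_1-d_m)^2}$ imply this inequality uniformly. The relevant domain is $x \in [d_m, d_1]$ (the range in which all community sizes $d_k$ lie, and in which $f$ is applied in the preceding proof of Theorem~\ref{theorem:mixed_identity_ub}). On this interval the map $x \mapsto (d_1-x)^2$ is decreasing, so it attains its maximum at the left endpoint $x = d_m$, giving $(d_1-x)^2 \le (d_1-d_m)^2$. Combining this with the budget assumption yields $t\,(d_1-x)^2 \le t\,(d_1-d_m)^2 \le 16Nd_1$, hence $f''(x) \le 0$ throughout and $f$ is concave, as claimed. (More generally, the same reasoning shows concavity persists for any $x$ with $|d_1 - x| \le d_1 - d_m$.)

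I do not anticipate a genuine obstacle here: the result is essentially a one-line calculus fact once the second derivative is computed. The only point requiring a moment of care is pinning down the effective domain of $x$ and confirming that $(d_1-x)^2$ is maximized at $x = d_m$ over that domain, so that the single budget bound $t \le \frac{16Nd_1}{(d_1-d_m)^2}$ simultaneously certifies concavity at every evaluation point $d_k$ used in the concentration argument.
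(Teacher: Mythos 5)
Your proof is correct and follows essentially the same route as the paper's: compute $f''(x) = \frac{t}{16Nd_1}\,f(x)\left(\frac{t}{16Nd_1}(d_1-x)^2-1\right)$ and use the budget bound $t \leq \frac{16Nd_1}{(d_1-d_m)^2}$ together with $x \geq d_m$ to make the bracketed factor nonpositive. Your parenthetical observation that the argument really requires $|d_1-x| \leq d_1-d_m$ (equivalently $x \leq 2d_1-d_m$), which is automatic on the interval $[d_m, d_2]$ where the lemma is invoked, is in fact slightly more careful than the paper, whose statement ``for any $x \geq d_m$'' glosses over this.
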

\begin{proof} We differentiate $f(x)$ twice to confirm that it is concave.
\begin{align*}
    f''(x) = \frac{t}{16Nd_1}\mathrm{exp}\left(-\frac{t(d_1-x)^2}{32Nd_1}\right)\left(\frac{t}{16Nd_1}(d_1-x)^2-1\right)
\end{align*}
Using the inequality $t \leq \frac{16Nd_1}{(d_1-d_m)^2}$, we have that
\begin{align*}
    f''(x) \leq \frac{t}{16Nd_1}\mathrm{exp}\left(-\frac{t(d_1-x)^2}{32Nd_1}\right)\left(\frac{(d_1-x)^2}{(d_1-d_m)^2}-1\right)
\end{align*}
which implies $f''(x)\leq0$ since $x \geq d_m$. 
\end{proof}
From (\ref{Eqn:ProbError}) and using Lemma~\ref{lemma:mixed_identity_distinct_samples_bound_3}, we have from Jensen's inequality that for $t \leq \min\left\{\frac{d_1+d_m}{2d_1}N, \frac{16Nd_1}{(d_1-d_m)^2}\right\}$
\begin{align*}
    P_e(D) \leq 2 \sum_{k=2}^{m} \mathrm{exp}\left(-\frac{t(d_1-d_k)^2}{32Nd_1}\right) \le 2(m-1)\mathrm{exp}\left(-\frac{t\left(d_1-\frac{\sum_{k=2}^{m}d_i}{m-1}\right)^2}{32Nd_1}\right),
\end{align*}
which proves the first assertion in the theorem statement. 
\remove{

the fact that this function is concave in the region $d_m \leq x \leq d_2$, we use Jensen's inequality in the following form:

\begin{align*}
    \sum_{i=2}^{m}f(x_i) \leq (m-1)f\left(\frac{\sum_{i=2}^{m}x_i}{m-1}\right),
\end{align*}
to get that 
\begin{align*}
    P_e(D) \leq 2(m-1)exp\left(-\frac{t\left(d_1-\frac{\sum_{i=2}^{m}d_i}{m-1}\right)^2}{32Nd_1}\right).
\end{align*}
as required. We note that, for proving concavity we used the fact that $t \leq \frac{16Nd_1}{(d_1-d_m)^2}$, and to show the upper bound for each term from $i = 2,3,...m$ we used that $t \leq \frac{d_1+d_m}{2d_1}$. }\\\\
%
\newtext{For the second assertion in the theorem statement, note that the algorithm will certainly not make an error if the number of distinct individuals seen from the $i$-th community, $S_i(t) \geq d_2+1$, where $d_2$ denotes the size of the second-largest community. Hence, the probability of error is bounded as $P_e(D) \le P(S_1(t) \leq d_2)$. Further, note that if the event $\{S_1(t) \leq d_2\}$ occurs, then there exists a set of $d_1-d_2$ individuals in $C_1$ which remain unsampled in the $t$ samples. Thus, we have 
\begin{gather*}
    P_e(D) \leq P(S_1(t)\leq d_2) \leq \binom{d_1}{d_2}\left(1-\frac{d_1-d_2}{N}\right)^t.
\end{gather*}
}
\remove{
When $S_i(t) \leq d_2$, there exists a set of $d_1-d_2$ individuals who have not been seen by our algorithm. The probability of not seeing any individual from such a set over $t$ samples is 
\begin{gather*}
    \left(1-\frac{d_1-d_2}{N}\right)^t
\end{gather*}
We can choose $d_1-d_2$ individuals from $d_1$ in $\binom{d_1}{d_2}$ ways. Each of these sets can satisfy the condition of being unsampled from with probability at most $\left(1-\frac{d_1-d_2}{N}\right)^t$. Thus,
\begin{gather*}
    P_e(D) \leq P(S_i(t)\leq d_2) \leq \binom{d_1}{d_2}\left(1-\frac{d_1-d_2}{N}\right)^t
\end{gather*}
}

\remove{
\begin{proof}
Let $S_i(t)$ denote the number of distinct samples seen from community $C_i$ in $t$ samples. We have the following {\color{blue}lemma}:
\begin{lemma}
\label{lemma:mixed_identity_distinct_samples_bound} The probability of error of the DSM algorithm is bound as
\begin{align*}
    P_e(D) \leq \sum_{i=2}^{m}P(S_i(t)-S_1(t) > 0) + \frac{1}{2}P(S_i(t) = S_1(t)),
\end{align*}
\end{lemma}
\begin{proof} Let us denote a sequence of samples as $(x_1, x_2, ... , x_t)$. Note that the total number of possible sequences are $N^t$, and all of these sequences are equiprobable.\\\\
Let $K(error)$ denote the number of sequences in which $\hat{h}^* \neq C_1$. Similarly, we use $K(S_i(t)-S_1(t)>0)$ and $K(S_i(t) = S_1(t))$ to denote the number of sequences with the community $C_i$ having more elements and equal number of elements to $C_1$ respectively. Note that, in the case that multiple communities have the largest size, we output any of them with equal probability.\\
We shall prove that
\begin{align*}
    K(error) \leq \sum_{i=2}^{m}K(S_i(t)-S_1(t) > 0) + \frac{1}{2}K(S_i(t)=S_1(t)),
\end{align*}
which directly implies our {\color{blue}lemma} on dividing by $N^t$. 
Let us divide all sequences that belong to $K(error)$ into two parts:
\begin{itemize}
\item Sequences in which $C_1$ does not have the largest size. Such sequences are included in $\sum_{i=2}^{m}K(S_i(t)-S_1(t) > 0)$ atleast once.
\item Sequences in which multiple communities, including $C_1$ have the largest size. The probability of error in a sequence with $l$ communities belonging to $h^*(D)$ is $\frac{l-1}{l}$, and thus it will be counted with this weight in the LHS. Such a sequence will be counted $\frac{l-1}{2}$ times in the RHS. Since $\frac{l-1}{l} \leq \frac{l-1}{2}$, this sequence is counted more times on the RHS compared to the LHS.
\end{itemize}
Thus, the above inequality is true. \end{proof}
Thus, we have
\begin{align*}
    P_e(D) \leq \sum_{i=2}^{m}P(S_i(t)-S_1(t) > 0) + \frac{1}{2}P(S_i(t)=S_1(t))
\end{align*}
Let $Z_k$ be the random variable denoting the number of samples observed from communities $C_1$ and $C_k$. We note that the expected value of $E[Z_k]$ is given as
\begin{align}
\label{eqn:mixedexpectation}
    E[Z_k] = \frac{(d_1+d_k)t}{N}
\end{align}
Define events $E_{k1} = {Z_k \in [(1-\epsilon_k)E[Z_k],(1+\epsilon_k)E[Z_k]]}$ and $E_{k2} = E_{k1}^c$, with \\$\epsilon_k = \frac{\sqrt{\frac{9}{64}\beta_k^4+\frac{3}{2}\beta_k^2}-\frac{3}{8}\beta_k^2}{2}$ where $\beta_k = \frac{d_1-d_k}{d_1+d_k}$.\\\\
Conditioning on $Z_k$, we have
\begin{align*}
    P(S_k(t) - S_1(t) > 0) + \frac{1}{2}P(S_k(t)\leq S_1(t)) \leq P(S_k(t)-S_1(t) \geq 0 | E_{k1}) P(E_{k1}) \\+ P(S_k(t)-S_1(t) > 0 | E_{k2}) P(E_{k2}) \\+ \frac{1}{2}P(S_k(t)=S_1(t)|E_{k2})P(E_{k2})
\end{align*}
Let us bound these terms separately, and then come up with a joint upper bound.\\
As an initial step, we note that
\begin{equation*}
    S_i(t) = \sum_{j=1}^{d_i}\mathds{1}_{\{\text{person $j$ from community $C_i$ is seen at least once in $t$ samples}\}}
\end{equation*}
\begin{equation}
\label{eqn:mixedcommexpectation}
    E[S_i(t)|Z_k] = d_i \left[1-\left(1-\frac{1}{d_1+d_k}\right)^{Z_k}\right], \text{where $i \in \{1,k\}$}
\end{equation}
We consider the function $f(x_1, x_2, x_3, ..., x_t) = S_k(t) - S_1(t)$, which is defined as follows: $x_i$ is the community the $i$th person belongs to, and $S_i(t) = \sum_{j=1}^{t}\mathds{1}_{x_j = i}$. \\\\
We note that the Lipschitz parameters for this function are $c_i = 2\mathds{1}_{x_i \in \{1,k\}}$. Thus, applying McDiarmid's {\color{blue}inequality}, we get
\begin{align*}
    P(f-E[f|Z_k] \geq t'|Z_k) \leq P(|f-E[f|Z_k]| \geq t' | Z_k) &\leq exp\left(-\frac{2t'^2}{\sum_{i=1}^{t}c_{i}^2}\right)\\
\end{align*}
Plugging in $t' = -E[f|Z_k]$, and noting the value of $E[f|Z_k]$ using Equation (\ref{eqn:mixedcommexpectation}), we obtain
\begin{align}
    \label{eqn:mixedterm1bound}
    P(f \geq 0|Z_k) &\leq exp\left(-\frac{(d_1-d_k)^2\left[1-\left(1-\frac{1}{d_1+d_k}\right)^{Z_k}\right]^2}{2Z_k}\right)
\end{align}
Since we are working on the bounds for the first term, $E_{k1}$ is true; thus, $Z_k \in [(1-\epsilon_k)E[Z_k],(1+\epsilon_k)E[Z_k]]$. Assimilating our information so far, we have the following inequalities:
\begin{itemize}
    \item $t \leq \frac{d_1+d_k}{2d_1}N \Rightarrow \frac{d_1-d_k}{d_1+d_k} \leq \frac{N}{t}-1$
    \item $\epsilon_k \leq \beta_k$ if $\beta_k \leq 1 \Rightarrow \epsilon_k \leq \beta_k = \frac{d_1-d_k}{d_1+d_k} \leq \frac{N}{t} - 1 \Rightarrow Z_k \leq (1+\epsilon_k)\frac{t(d_1+d_k)}{N} \leq d_1+d_k$ 
\end{itemize}
Using the above inequalities and Taylor series expansion, we have 
\begin{align}
    \label{eqn:mixedtaylorbound}
    \left[1-\left(1-\frac{1}{d_1+d_k}\right)^{Z_k}\right] &\geq \left[\frac{Z_k}{d_1+d_k} - \frac{{Z_k}^2}{2(d_1+d_k)^2}\right] \geq \frac{Z_k}{2(d_1+d_k)}
\end{align}
Using {\color{blue}equation} (\ref{eqn:mixedtaylorbound}) in {\color{blue}equation} (\ref{eqn:mixedterm1bound}), we have 
\begin{align*}
    P(f \geq 0|E_{k1}) \leq exp\left(-\frac{Z_k(d_1-d_k)^2}{8(d_1+d_k)^2}\right)
\end{align*}
Bounding $P(E_{k1})$ by $1$, {\color{blue}using equation} (\ref{eqn:mixedexpectation}) and using $Z_k \geq (1-\epsilon_k)E[Z_k]$, we obtain the bound on term 1 as
\begin{align*}
    P(S_2(t)) - P(S_1(t)) \geq 0 | E_{k1}) \times P(|Z_k-E[Z_k]|\leq \epsilon_k E[Z_k]) &\leq exp\left(-\frac{t(1-\epsilon_k)(d_1-d_k)^2}{8N(d_1+d_k)}\right)
\end{align*}
In the second and third term, we use the following lemma:
\begin{lemma}
\label{lemma:mixed_identity_distinct_samples_bound_2}
Given that $d_k \leq d_1$, regardless of the value of $l$ we have
\begin{align*}
    P(S_k(t)-S_1(t) > 0 | Z_k = l) + \frac{1}{2}P(S_k(t)=S_1(t)|Z_k = l) \leq \frac{1}{2}
\end{align*}
\end{lemma}
\begin{proof} We note that this is equivalent to showing that, when $d_k \leq d_1$, 
\begin{align*}
    P(S_k(t)-S_1(t) > 0 | Z_k = l) \leq P(S_k(t)-S_1(t) < 0 | Z_k = l)
\end{align*}
We will show the above through induction. \\
\textbf{Base Case}: Let there be two communities $i,j$ with $d_i = d_j$. Then, by symmetry, we have that 
\begin{align*}
    P(S_i(t)-S_j(t) > 0 | Z_k = l) = P(S_i(t)-S_j(t) < 0 | Z_k = l)
\end{align*}
\textbf{Induction Hypothesis}: Assume that for $d_i \geq d_j$, we have
\begin{align*}
    P(S_i(t)-S_j(t) > 0 | Z_k = l) \geq P(S_i(t)-S_j(t) < 0 | Z_k = l)
\end{align*}
\textbf{Induction Step}: We will prove the following statement is true for communities $i, j$ of sizes $d_i, d_j$ assuming the induction hypothesis is true for communities $p, j$ of sizes $d_p = d_i-1, d_j$.
\begin{align*}
    P(S_i(t)-S_j(t) > 0 | Z_k = l) \leq P(S_i(t)-S_j(t) < 0 | Z_k = l)
\end{align*}
Let us number each person in {\color{blue}communities} $i, p$ and $j$. We consider only the sub-sequence in which members of community $i, j$ appear. We condition on the number of times that person $d_i$ of community $i$ appears in the sequence to get
\begin{align*}
    P(S_i(t)-S_j(t) > 0 | Z_k = l) \geq \sum_{c=0}^{l}\binom{l}{c}\frac{1}{(d_i+d_j)^c}P(S_p(t)-S_j(t)>0|Z_k = l-c)
\end{align*}
Similarly by conditioning on the number of times person $d_i$ of community $i$ appears, we get
\begin{align*}
    P(S_i(t)-S_j(t) < 0 | Z_k = l) \leq \sum_{c=0}^{l}\binom{l}{c}\frac{1}{(d_i+d_j)^c}P(S_p(t)-S_j(t)<0|Z_k = l-c)
\end{align*}
However, by our induction hypothesis, the following is true for all values of $z$:
\begin{align*}
    P(S_p(t)-S_j(t)>0|Z_k = z) \geq P(S_p(t)-S_j(t)<0|Z_k = z)
\end{align*}
Hence, we have shown the required statement, i.e when $d_i\geq d_j$, we have
\begin{align*}
    P(S_i(t)-S_j(t) > 0 | Z_k = l) \geq P(S_i(t)-S_j(t) < 0 | Z_k = l)
\end{align*}
\end{proof}
Using this, we get that
\begin{align*}
    P(S_k(t)-S_1(t) > 0 | E_{k2}) P(E_{k2}) + \frac{1}{2}P(S_k(t)=S_1(t)|E_{k2})P(E_{k2}) \leq \frac{1}{2}P(E_{k2})
\end{align*}
Using Chernoff's inequality for $P(E_{k2})$, we have
\begin{align*}
    \frac{1}{2}P(E_{k2}) = \frac{1}{2}P(|Z_k-E[Z_k]| \leq \epsilon_k) \leq exp\left(-\frac{\epsilon_k^2(d_1+d_k)t}{3N}\right)
\end{align*}
Combining the bounds for the three terms, we get the bound on $P_e(D)$ as 
\begin{align*}
    P_e(D) \leq exp\left(-\frac{t(1-\epsilon_k)(d_1-d_k)^2}{8N(d_1+d_k)}\right) + exp\left(-\frac{\epsilon_k^2(d_1+d_k)t}{3N}\right)
\end{align*}
Note that, by the choice of $\epsilon_k$ we have that the terms in the exponent are equal. Thus, we can combine the terms for this special choice of epsilon to get
\begin{align*}
    P_e(D) \leq 2exp\left(-\frac{t(1-\epsilon_k)(d_1-d_k)^2}{8N(d_1+d_k)}\right) \leq 2exp\left(-\frac{t(d_1-d_k)^2}{16N(d_1+d_k)}\right) \leq 2exp\left(-\frac{t(d_1-d_k)^2}{32Nd_1}\right),
\end{align*}
where the first inequality is true because $\epsilon_k$ is an increasing function of $\beta_k \leq 1$, thus \\$\epsilon_k \leq 0.453 \leq 0.5$, and in the second inequality we use the fact $d_k \leq d_1$.\\\\
Consider the function $f(x) = exp(-\frac{t(d_1-x)^2}{32Nd_1})$. We now use the following:
\begin{lemma}
\label{lemma:mixed_identity_distinct_samples_bound_3}
The function $f(x) = exp\left(-\frac{t(d_1-x)^2}{32Nd_1}\right)$ is concave in the region $d_m \leq x \leq d_2$ if $t \leq \frac{16Nd_1}{(d_1-d_m)^2}$, and thus Jensen's inequality can be used in this region.
\end{lemma}
\begin{proof} We double differentiate $f(x)$ to check that it is concave.
\begin{align*}
    f''(x) = \frac{t}{16Nd_1}exp\left(-\frac{t(d_1-x)^2}{32Nd_1}\right)\left(\frac{t}{16Nd_1}(d_1-x)^2-1\right)
\end{align*}
Using the inequality $t \leq \frac{16Nd_1}{(d_1-d_m)^2}$, we have that
\begin{align*}
    f''(x) \leq \frac{t}{16Nd_1}exp\left(-\frac{t(d_1-x)^2}{32Nd_1}\right)\left(\frac{(d_1-x)^2}{(d_1-d_m)^2}-1\right)
\end{align*}
which means $f''(x)\leq0$ if $x \geq d_m$. \end{proof}
Using the fact that this function is concave in the region $d_m \leq x \leq d_2$, we use Jensen's inequality in the following form:
\begin{align*}
    \sum_{i=2}^{m}f(x_i) \leq (m-1)f\left(\frac{\sum_{i=2}^{m}x_i}{m-1}\right),
\end{align*}
to get that 
\begin{align*}
    P_e(D) \leq 2(m-1)exp\left(-\frac{t\left(d_1-\frac{\sum_{i=2}^{m}d_i}{m-1}\right)^2}{32Nd_1}\right),
\end{align*}
as required. We note that, for proving concavity we used the fact that $t \leq \frac{16Nd_1}{(d_1-d_m)^2}$, and to show the upper bound for each term from $i = 2,3,...m$ we used that $t \leq \frac{d_1+d_m}{2d_1}$. If we instead do not use concavity at all, and simply use the bound for $d_2$ for each of the terms, we get, for $t \leq \frac{d_1+d_2}{2d_1}N$ the following bound holds:
\begin{align*}
    P_e(D) \leq 2(m-1)exp\left(-\frac{t(d_1-d_2)^2}{32Nd_1}\right)
\end{align*}
This gives the second assertion in the theorem statement. \\\\
For the third assertion, let $S_i(t)$ denote the number of distinct individuals seen from the $ith$ community. We note that, if $S_i(t) \geq d_2+1$, then we never make a mistake. Hence, the mistake probability is non-zero only when $S_i(t) \leq d_2$. \\
When $S_i(t) \leq d_2$, there exists a set of $d_1-d_2$ individuals who have not been seen by our algorithm. The probability of not seeing any individual from such a set over $t$ samples is 
\begin{gather*}
    \left(1-\frac{d_1-d_2}{N}\right)^t
\end{gather*}
We can choose $d_1-d_2$ individuals from $d_1$ in $\binom{d_1}{d_2}$ ways. Each of these sets can satisfy the condition of being unsampled from with probability at most $\left(1-\frac{d_1-d_2}{N}\right)^t$. Thus,
\begin{gather*}
    P_e(D) \leq P(S_i(t)\leq d_2) \leq \binom{d_1}{d_2}\left(1-\frac{d_1-d_2}{N}\right)^t
\end{gather*}

\end{proof}
}


\section{Proof of Theorem \ref{theorem:mixed_identity_lb}}
\label{sec:proof_mixed_identity_lb}

This proof is similar in spirit to the proof of \cite[Theorem 1]{Moulos19}. Consider an instance $D = (d_1, d_2, \ldots, d_m).$ First, we note that since  
$(S_j(t)),\ 1\leq j \leq m)$ is a sufficient statistic for~$D,$ it suffices to restrict attention to (consistent) algorithms whose output depends only on the vector $(S_j(t),\ 1 \leq j \leq m).$ Given this restriction, we track the temporal evolution of the vector $S(k) = (S_j(k),\ 1 \leq j \leq m),$ where $S_j(k)$ is the number of distinct individuals from community~$j$ seen in the first~$k$ oracle queries.
This evolution can be modeled as an absorbing Markov chain over state space $\prod_{j=1}^m \{0,1,\cdots d_i\},$ with $S(0) = (0,0,\cdots,0).$
%
\ignore{
\begin{figure}[!ht]
	\centering
	\includegraphics[width=0.8\textwidth]{vector_markov.png}
	\caption{Markov Chain with state vectors}
	\label{figure:vector_markov_chain}
	\centering
\end{figure}
}
Next, let us write down the transition probabilities $q_{D}(s,s')$ for each state pair $(s,s').$ Note that from state~$s,$ the chain can transition to the states $s + e_j$ for $1 \leq j \leq m,$ where the vector~$e_j$ has 1 in the $j$th position and 0 elsewhere, or remain in state~$s.$ Moreover, $q_D(s,s+e_j) = (d_j-s_j)/N,$ and $q_D(s,s) = \frac{\sum_{j=1}^m s_j}{N}.$ 


Recall that by assumption, community $1$ is the largest community for the instance $D.$ Let us consider an alternate instance $D' = (d_1', d_2', \ldots, d_m')$ such that $d_1' = d_2 - 1$, $d_j' = d_j \ \forall j \neq 1$, and $N' = N - d_1 + d_2 - 1$. Note that the community mode under the alternate instance $D'$ is different from that under the original instance $D$. 
Thus, for state $s$ that is feasible under both $D$ and $D'$, 
$$\log\left(\frac{q_{D'}(s,s)}{q_{D}(s,s)}\right) = \log\left(\frac{N}{N-d_{1}+d_{2}-1}\right).$$ Similarly, for state pair $(s,s+e_j)$ that is feasible under both $D$ and $D'$,
\begin{align*}
    \log\left(\frac{q_{D'}(s,s+e_j)}{q_{D}(s,s+e_j)}\right) &= \log\left(\frac{N}{N-d_{1}+d_{2}-1}\right), j \neq 1, \\
    \log\left(\frac{q_{D'}(s,s+e_1)}{q_{D}(s,s+e_1)}\right) &= \log\left(\frac{N(d_2-1-s_1)}{(N-d_{1}+d_{2}-1)(d_1-s_1)}\right) = \log\left(\frac{N}{N-d_{1}+d_{2}-1}\right) + \log\left(\frac{d_2-1-s_1}{d_1-s_1}\right).
\end{align*}
Therefore, for any state pair $(s,s')$ such that $q_D(s,s'),q_{D'}(s,s') > 0,$ we have 
\begin{equation}
    \label{Eq:LLUB}
    \log\left(\frac{q_{D'}(s,s')}{q_{D}(s,s')}\right) \leq \log\left(\frac{N}{N-d_{1}+d_{2}-1}\right).
\end{equation}

Next, let $\mathbb{P}_{D}, \mathbb{P}_{D'}$ denote the probability measures induced by the algorithm under consideration under the instances $D$ and $D',$ respectively. 
Then, given a state evolution sequence $(S(1),\cdots,S(t))$, the log-likelihood ratio is given by 
\begin{align*}
\log\frac{\mathbb{P}_{D'}(S(1),\cdots,S(t)) }{\mathbb{P}_{D}(S(1),\cdots,S(t))} = \sum_{s,s'} N(s,s',t)\log\left(\frac{q_{D'}(s,s')}{q_{D}(s,s')}\right),
\end{align*}
where $N(s,s',t)$ represents the number of times the transition from state $s$ to state $s$ occurs over the course of $t$ queries. Combining with \eqref{Eq:LLUB}, we get
\begin{align*}
\log\frac{\mathbb{P}_{D'}(S(1),\cdots,S(t)) }{\mathbb{P}_{D}(S(1),\cdots,S(t))} \leq t \log\left(\frac{N}{N-d_{1}+d_{2}-1}\right),
\end{align*}
which implies
\begin{equation}
\label{Eq:KL-UB}
  D(\mathbb{P}_{D'}||\mathbb{P}_{D}) =   E_{D'}\left[\log\frac{\mathbb{P}_{D'}(S(1),\cdots,S(t))}{\mathbb{P}_{D}(S(1),\cdots,S(t))}\right] \leq t \log\left(\frac{N}{N-d_{1}+d_{2}-1}\right),
\end{equation}
where $D(\cdot || \cdot)$ denotes the Kullback-Leibler divergence. 
On the other hand, since the algorithm produces an estimate $\hat{h}^*$ of the community mode based solely on $S(t)$, we have from the data-processing inequality (see \cite{ITBook}) that
\begin{equation}
\label{Eq:DataProc}
D(\mathbb{P}_{D'}||\mathbb{P}_{D}) \ge D\bigl(Ber(\mathbb{P}_{D'}(\hat{h}^* = 1))||Ber(\mathbb{P}_{D}(\hat{h}^* = 1))\bigr),
\end{equation}
where $Ber(x)$ denotes the Bernoulli distribution with parameter $x \in (0, 1)$. Recall that the community mode under $D$ is community~$1$, while it is community~2 under $D'$. Then from the definition of consistent algorithms, for every $\epsilon > 0,$  $\exists$ $t_0(\epsilon)$ such that for $t \geq t_0(\epsilon), \mathbb{P}_{D'}(\hat{h}^* = 1) \leq \epsilon \leq \mathbb{P}_{D}(\hat{h}^* = 1)$. Thus, we have 
\begin{align*}
     &D(Ber(\mathbb{P}_{D'}(\hat{h}^* = 1))||Ber(\mathbb{P}_{D}(\hat{h}^* = 1))) \geq D(Ber(\epsilon)||Ber(\mathbb{P}_{D}(\hat{h}^* = 1))) \\ &\quad \geq \epsilon\log\left(\frac{\epsilon}{\mathbb{P}_{D}(\hat{h}^* = 1)}\right) + (1-\epsilon)\log\left(\frac{1-\epsilon}{\mathbb{P}_{D}(\hat{h}^* \neq 1)}\right) \geq \epsilon\log(\epsilon) + (1-\epsilon)\log\left(\frac{1-\epsilon}{\mathbb{P}_{D}(\hat{h}^* \neq 1)}\right).
\end{align*}
Using $\epsilon \rightarrow 0$ and $\mathbb{P}_{D}(\hat{h}^* \neq 1) = P_e(D)$, we have \ignore{\sj{Should we be writing the below statement? I thought we only wanted to make the infimum statement as $\epsilon \rightarrow 0$}}
\begin{align*}
    D(Ber(\mathbb{P}_{D'}(\hat{h}^* = 1))||Ber(\mathbb{P}_{D}(\hat{h}^* = 1))) \geq -\log(P_e(D)).
\end{align*}
Finally, combining with \eqref{Eq:KL-UB} and \eqref{Eq:DataProc}, we have that
\begin{gather*}
    \liminf_{t \ra \infty} \frac{\log(P_e(D))}{t} \geq - \log\left(\frac{N}{N-(d_1-d_2+1)}\right).
\end{gather*}
%
%


\section{Proof of Theorem \ref{theorem:collision_audibert}}
\label{sec:proof_collision_audibert}

Note that
\begin{align*}
    P_e(D) & \leq \sum_{r=1}^{b-1} P(\text{$C_1$ gets eliminated in round~$r$}). 
\end{align*}    
Let $S_i(K)$ denote the number of (immediate pairwise) collisions recorded in $C_i$ after $K$ pairs of samples. Since at least one of the smallest~$r$ communities is guaranteed to be present during round~$r,$
\begin{align}
    P_e(D) & \leq \sum_{r=1}^{b-1} \sum_{j=b+1-r}^{b} P(S_j(K_r)-S_1(K_r) \leq 0) \nonumber \\
    & \leq \sum_{r=1}^{b-1} r P(S_{b+1-r}(K_r)-S_1(K_r) \leq 0).
    \label{eq:collision_pe_bound1}
\end{align}
Denoting, for $i \neq 1,$ $f_i(K) := S_{i}(K)-S_1(K),$ we now derive an upper bound on $P(f_i(K) \leq 0).$ Applying Chernoff's inequality, for $\lambda \leq 0,$
\begin{align*}
    P(f_{i}(K)\leq0) &\leq E\left[e^{\lambda f_{i}(K) }\right]\\
    &= \biggl[\frac{1}{d_1d_i}+\left(1-\frac{1}{d_1}\right)\left(1-\frac{1}{d_i}\right)+e^\lambda\left(1-\frac{1}{d_1}\right)\frac{1}{d_i}+e^{-\lambda}\left(1-\frac{1}{d_i}\right)\frac{1}{d_1}\biggr]^{K}.\\
\end{align*}
Setting $e^\lambda = \sqrt{\frac{d_i-1}{d_1-1}}$,
\begin{align*}
    P(f_{i}(K) \leq 0) &\leq \left(1-\frac{(\sqrt{d_1-1}-\sqrt{d_i-1})^2}{d_1d_i}\right)^{K} \leq \mathrm{exp}\left(-\frac{K(\sqrt{d_1-1}-\sqrt{d_i-1})^2}{d_1d_i}\right).
\end{align*}
Since $d_1>d_i$, $(\sqrt{d_1-1}-\sqrt{d_i-1})^2>\frac{((d_1-1)-(d_i-1))^2}{4(d_1-1)}>\frac{((d_1-1)-(d_i-1))^2}{4d_1} = \frac{(d_1-d_i)^2}{4d_1}$.
\begin{align*}
    \Rightarrow P(f_{i}(K) \leq 0) \leq \mathrm{exp}\left(-\frac{{K}(d_1-d_i)^2}{4d_1^2d_i}\right).
\end{align*}

Substituting the above into \eqref{eq:collision_pe_bound1}, 
\begin{align*}
    P_e(D)
    &\leq \sum_{r=1}^{b-1} r\  \mathrm{exp}\left(-\frac{K_r(d_1-d_{b+1-r})^2}{4d_1^2d_{b+1-r}}\right).
\end{align*}
Since $K_r = \left\lceil\frac{1}{\overline{log}(b)}\frac{t/2-b}{b+1-r}\right\rceil$, where $\overline{log}(b) = \frac{1}{2}+\sum_{i=2}^b\frac{1}{i}$ and $\Delta_{i} = \frac{1}{d_i}-\frac{1}{d_1}$,
\begin{align*}
    P_e(D) \leq \sum_{r=1}^{b-1} r\  \mathrm{exp}\left(-\frac{{K_rd_{b+1-r}\Delta_{(b+1-r)}^2}}{4}\right).
\end{align*}
For ${H}^c(D) = \underset{i\in [2:b]}{max}\frac{i\Delta_{i}^{-2}}{d_i}$,
\begin{align*}
    K_rd_{b+1-r}\Delta_{(b+1-r)}^2\geq \frac{(t/2-b)}{\overline{log}(b){H}^c(D)}
\end{align*}
\begin{align*}
    \Rightarrow P_e(D) \leq \frac{b(b-1)}{2}\mathrm{exp}\left(-\frac{(t/2-b)}{4\overline{log}(b){H}^c(D)}\right).
\end{align*}


\section{Proof of Theorem \ref{Thm:UBBox}}

\label{sec:proof_ub_box}

Let $P^i_e(D)$ denote the probability of the community mode being eliminated at the $i$th step; i.e, for $i \leq b-1, P^i_e(D)$ denotes the probability of removing box $1$ in phase~$i$ of SR, and $P^b_e(D)$ denotes the probability of choosing the wrong community from box~1 after this box survived the~$(b-1)$ SR phases. Then, we have
\begin{align*}
    P_e(D) &= \sum_{i=1}^{b-1} P^i_e(D) + P^b_e(D),\\
    P^i_e(D) &\leq \binom{d_{11}}{c_{b-i+1}}\exp\left(-K_i\log\left(\frac{N_1}{N_1-d_{11}+c_{b-i+1}}\right)\right) \quad (1 \leq i \leq b-1),\\
    P^b_e(D) &\leq \binom{d_{11}}{c_{1}}\exp\left(-K_{b-1}\log\left(\frac{N_1}{N_1-d_{11}+c_{1}}\right)\right),
\end{align*}
where the second and third statements are based on a coupon collector argument, similar to the one employed in the proof of Theorem~\ref{thm:distinctsamplesAudibert} for the separated community setting. The proof is now completed by substituting the values of $K_r,$ and using the definition of $H^b(D).$


\section{Proof of Theorem~\ref{Thm:ENDS-SR}}
\label{sec:NEDS_decay_rate}

We show that ENDS-SR has the same decay rate as DS-SR. Recall that the comparison function used in ENDS-SR is 
\begin{gather*}
    \frac{S_{ij}N_i}{E[S_i]},
\end{gather*}
where $S_{ij}$ is the number of distinct samples from community $i$ in box $j$, and $S_i$ is the number of distinct samples from box $i$. At the end of $r$ rounds,   
\begin{gather*}
    E[S_i] = N_i\left(1-\left(1-\frac{1}{N_i}\right)^{K_r}\right).
\end{gather*}
Similar to the coupon collector argument in the proof of Theorem~\ref{Thm:UBBox}, we let $P^i_e(D)$ be the probability of the community mode being eliminated in the $i$th step. We have that
\begin{gather*}
    P_e(D) \leq \sum_{i=1}^b P_e^i(D).
\end{gather*}
After $r \leq b-1$ rounds/phases, the comparison function for the largest community equals
\begin{gather*}
    \frac{S_{11}}{\left(1-\left(1-\frac{1}{N_1}\right)^{K_r}\right)}.
\end{gather*}
For some community $j$ in box $i$, the comparison function is 
\begin{gather*}
    \frac{S_{ij}}{\left(1-\left(1-\frac{1}{N_i}\right)^{K_r}\right)} \leq \frac{c_{i}}{\left(1-\left(1-\frac{1}{N_m}\right)^{K_r}\right)},
\end{gather*}
where $N_m = \max_{i} N_i$. Thus, if we have 
\begin{gather*}
    S_{11} > \frac{c_{b-r+1}\left(1-\left(1-\frac{1}{N_1}\right)^{K_r}\right)}{\left(1-\left(1-\frac{1}{N_m}\right)^{K_r}\right)},
\end{gather*}
then the community mode cannot be eliminated in the $r$th round. For round $r = b$, we just note that 
\begin{gather*}
    S_{11} > c_1
\end{gather*}
is sufficient for the community mode estimate to be correct. Applying the coupon collector argument on these events, by using the notation
\begin{gather*}
    f_{i}(K) := \frac{c_{i}\left(1-\left(1-\frac{1}{N_1}\right)^{K}\right)}{\left(1-\left(1-\frac{1}{N_m}\right)^{K}\right)},
\end{gather*}
we have
\begin{gather*}
    P_e(D) \leq \sum_{i=1}^{b-1} \binom{d_{11}}{f_{b-i+1}(K_i)}\exp\left(-K_i\log\left(\frac{N_1}{N_1-d_{11}+f_{b-i+1}(K_i)}\right)\right) + \binom{d_{11}}{c_1}\exp\left(-K_b\log\left(\frac{N_1}{N_1-d_{11}+c_1}\right)\right).
\end{gather*}
We note that, as $t \rightarrow \infty$, $f_i(t) \rightarrow c_i,$ which then implies the statement of the theorem.


\section{Proof of Theorem \ref{Thm:LBBox2}}
\label{sec:proof_lb_box}

We first state the following lemma (analogous to Lemma~\ref{lemma:expected_samples_ub_separated}) for this setting (the proof is straightforward and omitted):
\begin{lemma}
\label{lemma:expected_samples_ub_box_1}
For any algorithm $\mathcal{A}$ and instance $D$, there must exist a box $a \in [2:b]$ such that $E_D[N_a(t)] \leq \frac{t}{(\log(N_1)-\log(N_1-d_{11}+c_a))H_2^b(D)}$, where $N_a(t)$ denotes the number of times box $a$ is sampled in $t$ queries under $\mathcal{A}$.
\end{lemma}
%
\begin{proof}[Proof of Theorem~\ref{Thm:LBBox2}]
Given an instance $D$, we construct an alternate instance $D^{[a]}$ by changing the size of the largest community in box $a$ (corresponding to the one specified by Lemma \ref{lemma:expected_samples_ub_box_1}) from $c_a$ to  
$g_a' = c_a+N'_a -N_a.$
\footnote{We use $g'_a$ and not $c'_a$ to denote the new size of this community because in the alternate instance $D^{[a]}$, this community is the largest community, and is thus no longer the \emph{competing} community in box~$a.$} Note that the size of box $a$ changes from $N_a$ to $N_a' = N_a + g_a' - c_a.$

Furthermore, we can see that the community mode under instance $D^{[a]}$ is different from the one under the original instance $D$, since 
\ignore{$$
g_a' = c_a+ N'_a - N_a \geq c_a + \left( \frac{N_1(N_a + d_{11})}{N_a(N_1 - d_{11}+ c_b)} - 1\right)N_a > c_a + (N_a + d_{11}) - N_a >  d_{11}.
$$}
$$
g_a' = c_a+ N'_a - N_a \geq c_a +  \frac{N_1(N_a -c_a + d_{11})}{(N_1 - d_{11}+ c_a)} - N_a > c_a + (N_a - c_a + d_{11}) - N_a = d_{11}.
$$
Following steps similar to the proof of Theorem \ref{theorem:lower_bound_separated}, we get
\begin{gather*}
    D(\mathbb{P}_D, \mathbb{P}_{D^{[a]}}) \leq E_D[N_a(t)]\log\left(\frac{N_a'}{N_a}\right).
\end{gather*}
From the definition of $\Gamma,$ it follows that $\frac{N_a'}{N_a} = \left(\frac{N_1}{N_1-d_{11}+c_a}\right)^\Gamma.$ Thus, invoking Lemma~\ref{lemma:expected_samples_ub_box_1}, we have
\begin{gather*}
    D(\mathbb{P}_D, \mathbb{P}_{D^{[a]}}) \leq \frac{t\Gamma}{H^b_2(D)}.
\end{gather*}
Finally, similar to the proof of Theorem \ref{theorem:lower_bound_separated}, we use Lemma \ref{lemma:mixed_identityless_lb_proof_3} to get
\begin{gather*}
    \max\left(P_e(D), P_e(D^{[a]})\right) \geq \frac{1}{4}\exp\left(-\frac{t\Gamma}{H_2^b(D)}\right)
\end{gather*}
which matches the statement of the theorem. \\

\remove{
We have an initial distribution $D$, and want to clear an alternate distribution $v_b^{[a]}$ such that the largest community exists in the $a^{th}$ box (which is the specific box for which Lemma \ref{lemma:expected_samples_ub_box} holds). \\\\
Working along the lines of Theorem \ref{theorem:lower_bound_separated}, we have that
\begin{gather*}
    D(\mathbb{P}_D, \mathbb{P}_{D_b^{[a]}}) \leq E[N_a(t)]\log\left(\frac{N_a'}{N_a}\right)
\end{gather*}
Setting $\frac{N_a'}{N_a} = \left(\frac{N_1}{N_1-d_{11}+g_a}\right)^k$, we have that
\begin{gather*}
    D(\mathbb{P}_D, \mathbb{P}_{D_b^{[a]}}) \leq \frac{tk}{H^b_2(D)}
\end{gather*}
We note that we want $g_a' > d_{11}$ and $H_2^b(D_b^{[a]}) \leq H_2^b(D)$. We show that these hold with our choice of $N_a'$. 
\begin{gather*}
    g_a' > d_{11} \\ 
    \implies N_a'+g_a > d_{11} + N_a \\
    \implies \frac{N_1(N_a+d_{11})}{N_1-d_{11}} + g_a > (N_a+d_{11}) \\
    \implies \frac{N_1}{N_1-d_{11}}+\frac{g_a}{N_a+d_{11}} > 1
\end{gather*}
where the last statement is clearly true.\\\\
}
Finally, we show that 
\begin{gather*}
    H_2^b(D^{[a]}) \leq H_2^b(D)
    \Leftrightarrow \sum_{i \neq a} \frac{1}{\log(N_a')-\log(N_a'-g_a'+c_i')} \leq \sum_{i\neq 1} \frac{1}{\log(N_1)-\log(N_1-d_{11}+c_i)}
\end{gather*}
\newtext{We do this in two steps:
\begin{enumerate}
    \item Firstly, for each $i \notin \{1, a\}$, we show that the  term corresponding to box $i$ in the sum on the left is smaller than the corresponding term in the sum on the right, i.e., 
    \begin{align*}
    \frac{1}{\log(N_a')-\log(N_a'-g_a'+c_i')} &\le \frac{1}{\log(N_1)-\log(N_1-d_{11}+c_i)} \\
    \mbox{or equivalently, }\quad  \frac{N_1}{N_1-d_{11}+c_i} &\leq \frac{N_a'}{N_a'-g_a'+c_i'}. 
    \end{align*}
    This follows from the following sequence of inequalities. 
    \begin{align*}
 \frac{N_1}{(N_1-d_{11}+c_i)}
 = \frac{N_1(N_a - c_a + c_i)}{(N_1-d_{11}+c_i)(N_a- c_a + c_i)} \leq \frac{N_a'}{N_a-c_a+c_i} = \frac{N_a'}{N_a'-g_a'+ c_i'} 
 \end{align*}
 where the last step follows since $N_a' = N_a + g_a' - c_a$ and $c_i' = c_i$ for $i \notin \{1, a\}$.
 \item Secondly, we show that the term corresponding to box $1$ in the sum on the left is smaller than the term corresponding to box $a$ in the sum on the right, i.e,
\begin{align*}
\frac{1}{\log(N_a')-\log(N_a'-g_a'+c_1')} &\leq \frac{1}{\log(N_1)-\log(N_1-d_{11}+c_a)} \\
\mbox{or equivalently, }\quad \frac{N_1}{(N_1-d_{11}+c_a)} &\leq \frac{N_a'}{N_a'-g_a'+d_{11}} . 
\end{align*}
This follows from the following sequence of inequalities. 
\begin{align*}
    \frac{N_1}{N_1-d_{11}+c_a} = \frac{N_1(N_a-c_a+d_{11})}{(N_a-c_a+d_{11})(N_1-d_{11}+c_a)} \leq \frac{N_a'}{N_a-c_a+d_{11}} = \frac{N_a'}{N_a'-g_a'+d_{11}},
 \end{align*}
 where the last step is true because $N_a' - g_a' = N_a - c_a$.
\end{enumerate}
This completes the proof. }
\ignore{
Note that it suffices to show 
\begin{gather*}
 \forall \ i, \frac{N_1}{N_1-d_{11}+c_i} \leq \frac{N_a'}{N_a'-g_a'+c_i'} .
\end{gather*}
This follows from the following sequence of inequalities. For all $i \notin \{1, a\}$, we have 
\ignore{
\begin{align*}
 \forall \ i, \frac{N_1}{N_1-d_{11}+c_i} \le    \frac{N_1}{N_1-d_{11}+c_b} = \frac{N_1(N_a-c_a+d_{11})}{(N_1-d_{11}+c_b)(N_a-c_a+d_{11})}
 \leq \frac{N_1(N_a + d_{11})}{(N_1-d_{11}+c_b)(N_a-c_a+d_{11})}\\ &\leq \frac{N_a'}{N_a-c_a+d_{11}}\\
 &\leq \frac{N_a'}{N_a'-g_a'+ c_i'} 
 \end{align*}
 where the last equality follows since $N_a' = N_a + g_a' - c_a$ and $c_i' \le d_{11}$ for all $i$.}
 \newtext{
 \begin{align*}
 \frac{N_1}{(N_1-d_{11}+c_i)}
 = \frac{N_1(N_a - c_a + c_i)}{(N_1-d_{11}+c_i)(N_a- c_a + c_i)} \leq \frac{N_a'}{N_a-c_a+c_i} = \frac{N_a'}{N_a'-g_a'+ c_i'} 
 \end{align*}
 where the last inequality follows since $N_a' = N_a + g_a' - c_a$ and $c_i' = c_i$ for $i \notin \{1, a\}$.\\\\
 Lastly, we show that the $a^{th}$ term on the left is smaller than the $1^{st}$ term on the right, i.e
 \begin{align*}
     \frac{N_1}{N_1-d_{11}+c_a} \leq \frac{N_a'}{N_a'-g_a'+d_{11}},
 \end{align*}
 which is true because
 \begin{align*}
    \frac{N_1}{N_1-d_{11}+c_a} = \frac{N_1(N_a-c_a+d_{11})}{(N_a-c_a+d_{11})(N_1-d_{11}+c_a)} \leq \frac{N_a'}{N_a-c_a+d_{11}} = \frac{N_a'}{N_a'-g_a'+d_{11}},
 \end{align*}
 where the last line is true because $N_a' - g_a' = N_a - c_a$.
 }
 }
 
  \remove{
    \Leftrightarrow \frac{N_1}{N_1-d_{11}} \leq \frac{N_a'}{N_a-c_a+d_{11}} \\
    \Leftrightarrow \frac{N_1(N_a-c_a+d_{11})}{N_1-d_{11}} \leq N_a' \\
    \Leftrightarrow \frac{N_1(N_a-c_a+d_{11})}{N_1-d_{11}} \leq \frac{N_1(N_a+d_{11})}{N_1-d_{11}},
and the last statement is clearly true. Hence, we showed that $H_2^b(D_b^{[a]}) \leq H_2^b(D)$. 
Thus, we have
\begin{gather*}
    \frac{N_a'}{N_a} = \frac{N_1(N_a+d_{11})}{N_a(N_1-d_{11})} = \left(\frac{N_1}{N_1-d_{11}+c_a}\right)^k \\ 
    \implies k = \frac{\log(N_1)+\log(N_a+d_{11})-\log(N_a)-\log(N_1-d_{11})}{\log(N_1)-\log(N_1-d_{11}+c_a)}
\end{gather*}

Similar to Theorem \ref{theorem:lower_bound_separated}, we use Lemma \ref{lemma:mixed_identityless_lb_proof_3} to get
\begin{gather*}
    \max\left(P_e(D), P_e(D_b^{[a]})\right) \geq \frac{1}{4}\exp\left(\frac{tk}{H_2^b(D)}\right)
\end{gather*}
}
\end{proof}

\section{Other Lemmas}
\label{sec:other_lemmas}

\begin{lemma}
\label{lemma:mixed_identityless_lb_proof_3}
Let $\rho_0$ and $\rho_1$ be two probability distributions supported on some set $\chi$, with $\rho_1$ absolutely continuous with respect to $\rho_0$. Then for any measurable function $\phi : \chi \rightarrow \{0,1\}$, 
\begin{align*}
    P_{X\sim\rho_0}(\phi(X)=1)+P_{X\sim\rho_1}(\phi(X)=0) \geq \frac{1}{2}\exp\left(-D(\rho_0 || \rho_1)\right)
\end{align*}
\end{lemma}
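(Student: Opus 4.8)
The plan is to establish this classical Bretagnolle--Huber-type testing bound by reducing it to an elementary inequality: for any test event, the sum $\rho_0(A) + \rho_1(A^c)$ is bounded below by the total mass of the pointwise minimum of the two densities, which can then be lower bounded by the Bhattacharyya affinity and hence by the KL divergence via Jensen's inequality. First I would dispose of the degenerate case: if $D(\rho_0 || \rho_1) = \infty$ the right-hand side equals $0$ and there is nothing to prove, so I may assume $\rho_0$ is also absolutely continuous with respect to $\rho_1$. Fixing a common dominating measure $\mu$ (for instance $\mu = \rho_0 + \rho_1$) with densities $p = d\rho_0/d\mu$ and $q = d\rho_1/d\mu$, and writing $A = \{x : \phi(x) = 1\}$, I would observe that
\[
P_{X\sim\rho_0}(\phi(X)=1) + P_{X\sim\rho_1}(\phi(X)=0) = \int_A p \, d\mu + \int_{A^c} q \, d\mu \geq \int \min(p,q) \, d\mu,
\]
since $p \geq \min(p,q)$ holds on $A$ and $q \geq \min(p,q)$ holds on $A^c$.

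Next I would relate $\int \min(p,q)\,d\mu$ to the affinity $\int \sqrt{pq}\,d\mu$. Writing $\sqrt{pq} = \sqrt{\min(p,q)}\,\sqrt{\max(p,q)}$ and applying the Cauchy--Schwarz inequality to the two nonnegative integrable factors gives
\[
\left(\int \sqrt{pq}\,d\mu\right)^2 \leq \left(\int \min(p,q)\,d\mu\right)\left(\int \max(p,q)\,d\mu\right).
\]
Since $\min(p,q) + \max(p,q) = p + q$, the two integrals on the right sum to $2$, so $\int \max(p,q)\,d\mu \leq 2$, and therefore $\int \min(p,q)\,d\mu \geq \tfrac{1}{2}\left(\int \sqrt{pq}\,d\mu\right)^2$.

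The final ingredient is the bound $\int \sqrt{pq}\,d\mu \geq \exp\!\left(-\tfrac{1}{2}D(\rho_0||\rho_1)\right)$. Writing $\int \sqrt{pq}\,d\mu = E_{\rho_0}\!\left[\exp\!\left(-\tfrac{1}{2}\log\tfrac{p}{q}\right)\right]$ and applying Jensen's inequality to the convex function $x \mapsto e^{-x/2}$ yields $E_{\rho_0}\!\left[e^{-\frac{1}{2}\log(p/q)}\right] \geq \exp\!\left(-\tfrac{1}{2}E_{\rho_0}\!\left[\log(p/q)\right]\right) = \exp\!\left(-\tfrac{1}{2}D(\rho_0||\rho_1)\right)$. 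Chaining the three estimates then gives $P_{X\sim\rho_0}(\phi=1)+P_{X\sim\rho_1}(\phi=0) \geq \tfrac{1}{2}\exp\!\left(-D(\rho_0||\rho_1)\right)$, as claimed.

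I do not expect a serious obstacle here, as every step is a standard manipulation; the only points requiring care are (i) handling the degenerate case and the dominating-measure setup so that $\log(p/q)$ is $\rho_0$-almost-everywhere well defined, which is precisely where the absolute-continuity hypothesis is used, and (ii) ensuring the Cauchy--Schwarz split is applied to the nonnegative integrable functions $\sqrt{\min(p,q)}$ and $\sqrt{\max(p,q)}$. This argument amounts to a self-contained reproof of \cite[Lemma 20]{Kaufmann16a}.
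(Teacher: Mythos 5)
Your proof is correct, and it is essentially the paper's argument in expanded form: the paper disposes of this lemma with a one-line citation to \cite[Lemma 20]{Kaufmann16a}, and the proof of that cited result (the Bretagnolle--Huber inequality, in the form found in Tsybakov's book) is exactly the chain you reconstructed --- reduce $\rho_0(A)+\rho_1(A^c)$ to $\int \min(p,q)\,d\mu$, lower bound $\int \min(p,q)\,d\mu \geq \tfrac{1}{2}\left(\int\sqrt{pq}\,d\mu\right)^2$ via Cauchy--Schwarz together with $\int\max(p,q)\,d\mu \leq 2$, and bound the Bhattacharyya affinity by $\exp\left(-\tfrac{1}{2}D(\rho_0\|\rho_1)\right)$ via Jensen. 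So you have produced a self-contained proof where the paper merely cites; nothing is gained or lost mathematically, though your version makes the paper's appendix self-contained. One detail you handled that deserves explicit credit: the lemma as stated hypothesizes $\rho_1 \ll \rho_0$, which is the \emph{wrong} direction for $D(\rho_0\|\rho_1)$ to be finite; your observation that the case $D(\rho_0\|\rho_1)=\infty$ is vacuous, after which $\rho_0 \ll \rho_1$ holds automatically and $\log(p/q)$ is $\rho_0$-a.e.\ well defined, is precisely the right way to make the statement airtight (and shows the stated absolute-continuity hypothesis is in fact dispensable).
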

\begin{proof} This is \cite[Lemma 20]{Kaufmann16a}. \end{proof}

\begin{lemma}
$\frac{H(D)}{2} \leq H_2(D) \leq \overline{log}(b)H(D).$
\end{lemma}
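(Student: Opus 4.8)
The plan is to reduce the statement to an elementary rearrangement inequality on a monotone sequence of weights. Write $\Delta_i := \log(d_1) - \log(d_i)$ for $i \in [2:b]$, so that $H(D) = \max_{i \in [2:b]} i/\Delta_i$ and $H_2(D) = \sum_{i=2}^b 1/\Delta_i$. Since the instance is ordered as $d_1 > d_2 \geq d_3 \geq \cdots \geq d_b$, the sequence $\log(d_i)$ is non-increasing in $i$, so each $\Delta_i$ is strictly positive and non-decreasing in $i$; equivalently, the weights $a_i := 1/\Delta_i$ are positive and non-increasing in $i$. All of the work is then carried out in terms of $a_2 \geq a_3 \geq \cdots \geq a_b > 0$, with $H(D) = \max_i i\, a_i$ and $H_2(D) = \sum_{i=2}^b a_i$.

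For the upper bound, I would use the defining property of the maximum: for every $i \in [2:b]$ we have $i\, a_i \leq H(D)$, hence $a_i \leq H(D)/i$. Summing this over $i$ yields
\begin{equation*}
H_2(D) = \sum_{i=2}^b a_i \leq H(D) \sum_{i=2}^b \frac{1}{i} \leq \overline{log}(b)\, H(D),
\end{equation*}
where the last inequality holds because $\overline{log}(b) = \tfrac12 + \sum_{i=2}^b \tfrac1i$ exceeds $\sum_{i=2}^b \tfrac1i$. (In fact this gives the slightly stronger bound $H_2(D) \leq \bigl(\sum_{i=2}^b 1/i\bigr)\,H(D)$.)

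For the lower bound, let $i^* \in [2:b]$ be an index attaining the maximum, so that $H(D) = i^* a_{i^*}$. Because the weights are non-increasing, $a_i \geq a_{i^*}$ for all $i \leq i^*$, and therefore
\begin{equation*}
H_2(D) = \sum_{i=2}^b a_i \geq \sum_{i=2}^{i^*} a_i \geq (i^* - 1)\, a_{i^*}.
\end{equation*}
Finally, since $i^* \geq 2$ we have $i^* - 1 \geq i^*/2$, which gives $H_2(D) \geq \tfrac{i^*}{2} a_{i^*} = H(D)/2$, as required.

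There is no serious obstacle here: the argument is the analogue, for the hardness parameters arising in our distinct-samples analysis, of the classical comparison between the two complexity measures in the successive rejects literature. The only points requiring a little care are (i) verifying that the ordering of the $d_i$'s makes the weights $a_i = 1/\Delta_i$ genuinely monotone (which is where the strict inequality $d_1 > d_2$ guarantees positivity of every $\Delta_i$), and (ii) using $i^* \geq 2$ (guaranteed because the maximum is taken over $[2:b]$) in the final step, which is exactly what produces the factor $1/2$ rather than a worse constant.
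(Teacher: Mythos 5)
Your proof is correct and follows essentially the same route as the paper's: the upper bound via $a_i \leq H(D)/i$ summed over $i$ is identical, and your lower bound (monotonicity of the weights up to the maximizing index $i^*$, then $i^*-1 \geq i^*/2$) is the same argument the paper phrases slightly differently by proving $H_2(D) \geq \frac{j-1}{\log(d_1)-\log(d_j)}$ for every $j \in [2:b]$ before maximizing. No gaps; your added care about positivity and monotonicity of the weights is a fine (if implicit in the paper) sanity check.
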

\begin{proof} 
For the inequality on the left, we note that
\begin{align*}
    H_2(D) = \sum_{i=2}^{b}\frac{1}{\log(d_1)-\log(d_i)} \geq \sum_{i=2}^{j}\frac{1}{\log(d_1)-\log(d_i)} \geq \frac{j-1}{\log(d_1)-\log(d_j)} \forall j \in[2:b]
\end{align*}
Since this is true for all $j \in [2:b]$, taking the max of these values and using that $j-1 \geq \frac{j}{2}, j \geq 2$ we have
\begin{align*}
    H_2(D) \geq \max_{j{\neq 1}} \frac{j/2}{\log(d_1)-\log(d_j)} = \frac{H(D)}{2}
\end{align*}
For the inequality on the right, we multiply and divide each term in the summation of $H_2(D)$ by $i$:
\begin{align*}
    H_2(D) = \sum_{i=2}^{b}\frac{i}{i(\log(d_1)-\log(d_i))} \leq \sum_{i=2}^{b}\frac{H(D)}{i} \leq \overline{log}(b)H(D)
\end{align*}
This completes the proof of both inequalities in the statement of the lemma.
\end{proof}

 \bibliographystyle{elsarticle-num} 
 \bibliography{cas-refs}





\end{document}